\newlist{todolist}{itemize}{9}
\setlist[todolist]{label=$\square$}
\newcommand\norm[1]{\left\lVert#1\right\rVert}
\DeclareMathOperator{\E}{\mathbb{E}}
\DeclareMathOperator{\x}{\mathbf{x}}
\DeclareMathOperator{\z}{\mathbf{z}}
\DeclareMathOperator{\I}{\mathbf{I}}
\newtheorem{theorem}{Theorem}[section]
\newtheorem{corollary}{Corollary}[theorem]
\newtheorem{lemma}[theorem]{Lemma}
\begin{document}
\makeatletter
\title{\xmp@Title}
\studentid{\xmp@Author}
\studentname{Alexandru Buburuzan}
\supervisor{Professor Tim Cootes}
\makeatother

\course{Artificial Intelligence} 
\faculty{Science and Engineering}                  
\school{Department of Computer Science}
\submitdate{2025}                                  
\wordcount{14038}		                           
\maketitle

\uomtoc
\uomlof
\uomlot

\begin{uomterms}
\begin{description}[leftmargin=!]
  \item[MLP] Multi-Layer Perceptron
  \item[PbE] Paint-by-Example
  \item[SAM] Segment Anything Model
  \item[CLIP] Contrastive Language–Image Pretraining
  \item[BEV] Bird’s Eye View
  \item[MObI] Multimodal Object Inpainting
  \item[VAE] Variational Autoencoder
  \item[DDPM] Denoising Diffusion Probabilistic Model
  \item[DDIM] Denoising Diffusion Implicit Model
  \item[PLMS] Pseudo Linear Multistep Scheduler
  \item[LPIPS] Learned Perceptual Image Patch Similarity
  \item[D-LPIPS] Depth Learned Perceptual Image Patch Similarity
  \item[I-LPIPS] Intensity Learned Perceptual Image Patch Similarity
  \item[FID] Fréchet Inception Distance
  \item[AOE] Average Orientation Error
  \item[ASE] Average Scale Error
  \item[ATE] Average Translation Error
\end{description}
\end{uomterms}

\begin{uomoriginality}
  \uomoriginalitydeclaration 
\end{uomoriginality}
\uomcopyrightstatement

\begin{uomacknowledgements}
I am thankful to my supervisor, Tim Cootes, my family and mentors for their unwavering support.
\end{uomacknowledgements}

\begin{abstract}
Safety-critical applications, such as autonomous driving and medical image analysis, require extensive multimodal data for rigorous testing. Synthetic data methods are gaining prominence due to the cost and complexity of gathering real-world data, but they demand a high degree of realism and controllability to be useful. This work introduces two novel methods for synthetic data generation in autonomous driving and medical image analysis, namely \textbf{MObI} and \textbf{AnydoorMed}, respectively.

\textbf{MObI} is a first-of-its-kind framework for \textbf{M}ultimodal \textbf{Ob}ject \textbf{I}npainting that leverages a diffusion model to produce realistic and controllable object inpaintings across perceptual modalities, demonstrated simultaneously for camera and lidar. Given a single reference RGB image, MObI enables seamless object insertion into existing multimodal scenes at a specified 3D location, guided by a bounding box, while maintaining semantic consistency and multimodal coherence. Unlike traditional inpainting methods that rely solely on edit masks, this approach uses 3D bounding box conditioning to ensure accurate spatial positioning and realistic scaling. Consequently, MObI provides significant advantages for flexibly inserting novel objects into multimodal scenes, offering a powerful tool for testing perception models under diverse conditions.

\textbf{AnydoorMed} extends this paradigm to the medical imaging domain, focusing on reference-guided inpainting for mammography scans. It leverages a diffusion-based model to inpaint anomalies with impressive detail preservation, maintaining the reference anomaly's structural integrity while semantically blending it with the surrounding tissue. AnydoorMed enables controlled and realistic synthesis of anomalies, offering a promising solution for augmenting datasets in the safety-critical medical domain.

Together, these methods demonstrate that foundation models for reference-guided inpainting in natural images can be readily adapted to diverse perceptual modalities, paving the way for the next generation of systems capable of constructing highly realistic, controllable and multimodal counterfactuals.

Code and model weights are being made available at: \url{https://github.com/alexbuburuzan/MObI} and \url{https://github.com/alexbuburuzan/AnydoorMed}.

\end{abstract}%
\clearpage


\chapter{Background}

\begin{center}
    \textit{``Counterfactual reasoning is a hallmark of human thought, enabling the capacity to shift from perceiving the immediate environment to an alternative, imagined perspective.''}\\
    \hfill Van Hoeck et al., 2015~\cite{van2015cognitive}
\end{center}

If counterfactual reasoning lies at the core of human intelligence, then building systems capable of constructing and leveraging counterfactuals may represent the next defining step in the evolution of artificial intelligence. This work constitutes a small step towards that vision.

\newpage
\section{Motivation}

Decades of research in cognitive neuroscience have established counterfactual reasoning as a fundamental component of human perception~\cite{van2015cognitive, byrne2002mental}, defined as the capacity to consider alternative scenarios of ``what might have happened''. This process is achieved by constructing and manipulating mental representations of hypothetical realities, enabling learning and adaptation through internal simulation. Pioneering work by Costello and McCarthy~\cite{costello1999useful} underscored the significance of counterfactuals in enabling agents to reason. However, their early approach relied on formal methods, which encountered difficulties when faced with the high-dimensional, ambiguous, and dynamically changing nature of real-world perceptual inputs.

Recent advances in generative modelling offer a promising alternative for building counterfactual examples directly from perceptual data. In particular, latent diffusion models enable the controlled editing of complex, high-dimensional, and multimodal inputs by operating within a learnt latent space that captures their underlying semantic structure. Through this approach, realistic counterfactuals can be synthesised, offering a means to systematically explore alternative possibilities without reliance on handcrafted assets.

The ability to generate plausible counterfactuals from perceptual data holds significant promise for developing intelligent systems. By simulating alternative outcomes, perception models could be stress-tested under rare or hypothetical scenarios, improving their robustness and generalisation capabilities. Moreover, decision-making systems could benefit from counterfactual reasoning by evaluating the consequences of actions that might have been taken, thus enabling safer and more informed choices in safety-critical domains such as autonomous driving and medical diagnosis. Embedding counterfactual generation capabilities into artificial agents may represent a first step towards more adaptive, interpretable, and human-aligned intelligence.

\section{Introduction}

\begin{figure}
    \centering
    \includegraphics[width=0.99\textwidth]{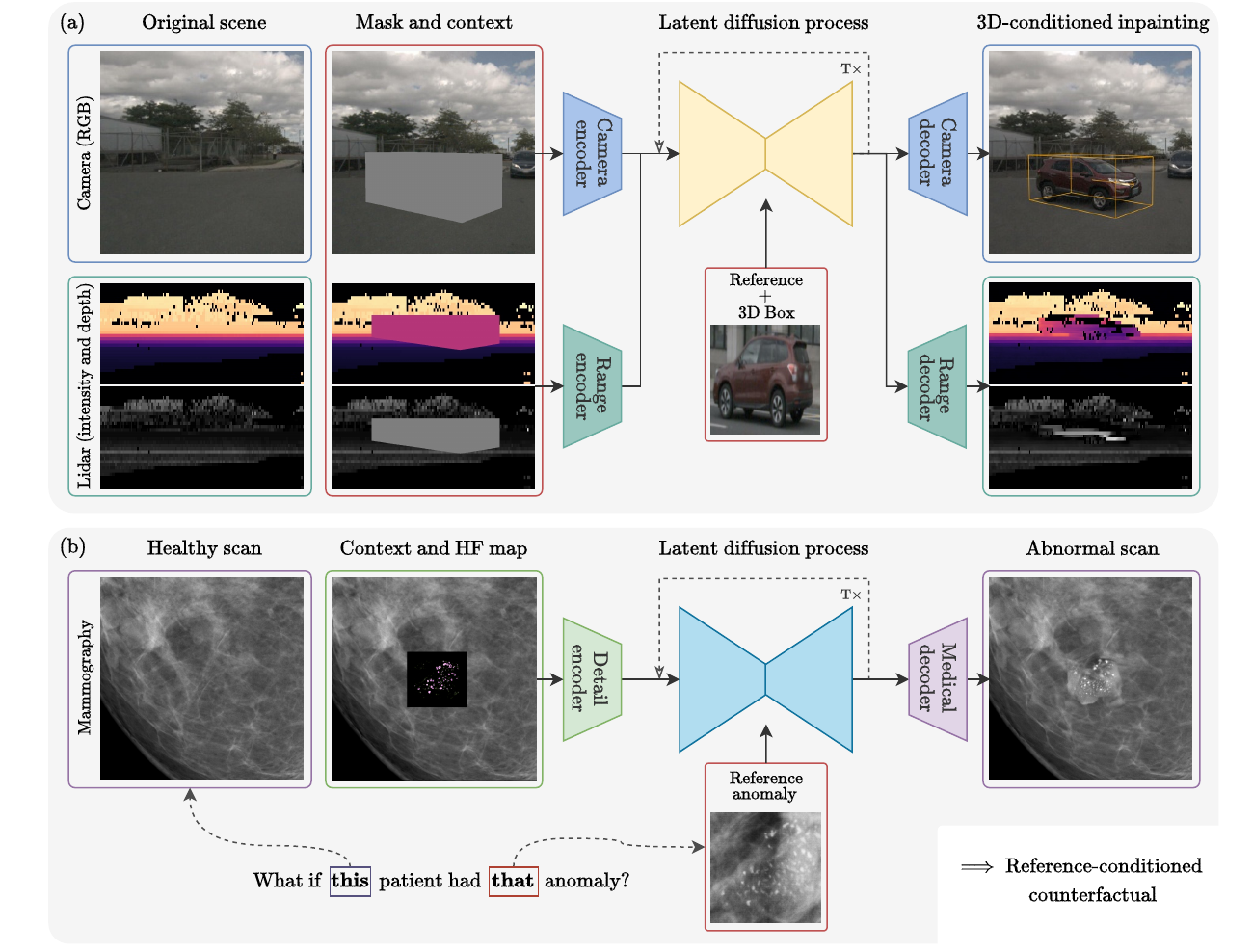}
    \caption[Teaser figure of MObI and AnydoorMed methods for editing of three perceptual modalities.]{(a) \textbf{MObI} enables the generation of multiple novel views from a single reference image while maintaining semantic consistency and multimodal coherence across camera and lidar modalities. The inserted object respects the geometric constraints imposed by an oriented 3D bounding box, with inpainting performed in a modality-agnostic latent space. (b) \textbf{AnydoorMed} inpaints an anomaly at a specific location within mammography scans with high fidelity, preserving fine details such as microcalcifications. This enables the construction of reference-guided counterfactuals, answering questions such as ``How would the scan look like if this patient had that anomaly?''}
    \label{fig:teaser}
\end{figure}

This report introduces two novel methods for reference-guided counterfactual generation across different domains: \textbf{MObI}, for camera-lidar object inpainting in autonomous driving scenes, and \textbf{AnydoorMed}, for anomaly inpainting in mammography scans. Both methods leverage the power of latent diffusion models to perform controlled, high-fidelity insertions while preserving semantic consistency, as illustrated in \autoref{fig:teaser}. MObI uniquely enables realistic, 3D-conditioned object insertion across camera and lidar modalities, while AnydoorMed can synthesise perceptually plausible anomalies at specific locations within a mammography scan.

This work's strengths lie in its ability to adapt foundation models for reference-guided inpainting to diverse perceptual modalities using a simple data-efficient adaptation mechanism. This achieves fine-grained control, multimodal coherence, and semantic consistency without reliance on handcrafted assets, as demonstrated by state-of-the-art results according to realism metrics, compared to their respective baselines. 

The remainder of this report is structured as follows: \autoref{section:theory} provides the necessary theoretical background to understand the fundamentals of latent diffusion models. \autoref{chaper:mobi} and \autoref{chapter:anydoor} present MObI and AnydoorMed, individually, detailing their architecture, training procedures, and experiments. Finally, \autoref{chapter:discussion} draws the key findings, presents some limitations and outlines potential future directions.

\section{Theory}
\label{section:theory}

This section establishes the theoretical foundations of generative modelling, beginning with the Variational Autoencoder (VAE) and subsequently presenting the principles underlying the recent state-of-the-art diffusion models for image generation.

\subsection{Variational Autoencoder (VAE)}
Firstly, directed graphical models, also known as Bayesian networks, are a way to represent joint probability distributions using directed acyclic graphs (DAGs). Each node in the graph represents a random variable, and directed edges encode conditional dependencies: a directed edge from $\z$ to $\x$ indicates that $\x$ is conditionally dependent on $\z$.

Mathematically, the joint distribution factorises according to the graph structure:
\begin{align}
    p(\x, \z) = p(\x|\z) p(\z).
\end{align}

VAEs~\cite{kingma2013auto} are a class of generative models that learn a probabilistic mapping from a latent space to the observed data space. They combine principles from variational inference and deep learning to generate new data samples that resemble those from the training distribution. The generation of an observation $\x$ given a latent variable $\z$ is modelled through a directed graphical model as presented in \autoref{fig:vae_figure}.

In this generative framework, the following components are defined:
\begin{itemize}
    \item \textbf{Prior} \( p_{\theta}(\z) \): A distribution over the latent variable, typically chosen as multivariate normal.
    \item \textbf{Likelihood} \( p_{\theta}(\x|\z) \): The conditional distribution describing how the data is generated from the latent variable.
    \item \textbf{Posterior} \( p_{\theta}(\z|\x) \): The distribution over latent variables given observed data.
\end{itemize}

\begin{figure}[h]
    \centering
    \includegraphics[scale=0.4]{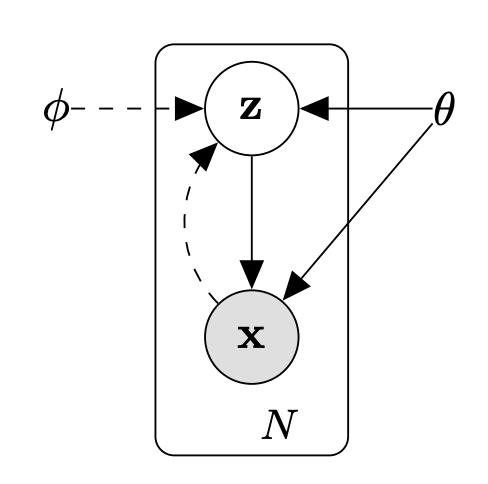}
    \caption[Directed graphical model of a Variational Autoencoder (VAE).]{Directed graphical model of a VAE~\cite{kingma2013auto} comprising the observable discrete random variable $\x$ and the latent continuous random variable $\z$. Solid lines represent the generative process $p_{\theta}(\z)p_{\theta}(\x|\z)$, while dashed lines represent the variational approximation $q_{\phi}(\z|\x)$ of the intractable true posterior $p_{\theta}(\z|\x)$.}
    \label{fig:vae_figure}
\end{figure}

\paragraph{Notation}
In the context of VAEs, the following notational conventions are adopted:
\begin{align}
    \begin{split}
        p(\cdot|\cdot,\theta) &\triangleq p_{\theta}(\cdot|\cdot) \\
        q(\cdot|\cdot,\phi) &\triangleq q_{\phi}(\cdot|\cdot)
    \end{split}
    \quad \quad
    \begin{split}
        D_{\text{KL}}(q\,\|\,p) &= \int q(x) \log \frac{q(x)}{p(x)} \mathrm{d}x \geq 0
    \end{split}
\end{align}
Here, $\theta$ and $\phi$ denote function parameters. $D_{\text{KL}}(q\,\|\,p)$ denotes the Kullback--Leibler (KL) divergence, which quantifies the difference between two probability distributions and is always non-negative.

\paragraph{The intractable posterior}
In the generative model, the marginal likelihood of an observation $\x$ is given by:
\begin{align}
    p_{\theta}(\x) = \int p_{\theta}(\x|\z) p_{\theta}(\z) \, \mathrm{d}\z.
\end{align}
However, this integral is generally intractable because it requires integrating across all possible configurations of the latent variable $\z$, which is high-dimensional and continuous. Thus, without a closed-form solution, evaluating the integral would require an infeasible amount of computation~\cite{blei2017variational}.

As a consequence, the true posterior distribution
\begin{align}
    p_{\theta}(\z|\x) = \frac{p_{\theta}(\x|\z)p_{\theta}(\z)}{p_{\theta}(\x)}
\end{align}
is also intractable. This motivates the need for an approximate inference method: \cite{kingma2013auto} introduces a variational distribution $q_{\phi}(\z|\x)$ to approximate the true posterior $p_{\theta}(\z|\x)$.

\paragraph{Evidence Lower Bound (ELBO)}
Let $\mathcal{X} = \{ \x^{(1)}, \x^{(2)}, \ldots, \x^{(N)} \}$ denote a set of $N$ independent and identically distributed (IID) observations drawn from the true data distribution. The data log-likelihood is given by:
\begin{align}
    \log p_{\theta}(\x^{(1)}, \x^{(2)}, \ldots, \x^{(N)}) = \sum_{i=1}^{N} \log p_{\theta}(\x^{(i)}).
\end{align}

For a single, discrete observation $\x^{(i)}$, the following decomposition can be derived:
\begin{align}
    \log p_{\theta}(\x^{(i)}) &= D_{\text{KL}}(q_{\phi}(\z|\x^{(i)})\,\|\,p_{\theta}(\z|\x^{(i)})) + \mathcal{L}(\theta,\phi;\x^{(i)})
\end{align}

\begin{proof}
    \begin{align*}
        D_{\text{KL}}(q_{\phi}(\z|\x^{(i)})~||~p_{\theta}(\z|\x^{(i)})) &= \int q_{\phi}(\z|\x^{(i)}) \log \frac{q_{\phi}(\z|\x^{(i)})}{p_{\theta}(\z|\x^{(i)})} \, \mathrm{d}\z \\
        &= \int q_{\phi}(\z|\x^{(i)}) \log \frac{q_{\phi}(\z|\x^{(i)})p_{\theta}(\x^{(i)})}{p_{\theta}(\x^{(i)},\z)} \, \mathrm{d}\z \\
        &= \int q_{\phi}(\z|\x^{(i)}) \log \frac{q_{\phi}(\z|\x^{(i)})}{p_{\theta}(\x^{(i)},\z)} \, \mathrm{d}\z~+~p_{\theta}(\x^{(i)}) \int q_{\phi}(\z|\x^{(i)}) \mathrm{d}\z \\
        &= D_{\text{KL}}(q_{\phi}(\z|\x^{(i)})~||~p_{\theta}(\x^{(i)},\z)) + \log p_{\theta}(\x^{(i)}) \quad \text{since } \int q_{\phi}(\z|\x^{(i)}) \mathrm{d}\z = 1 \\
        \implies \log p_{\theta}(\x^{(i)}) &= D_{\text{KL}}(q_{\phi}(\z|\x^{(i)})~||~p_{\theta}(\z|\x^{(i)})) - D_{\text{KL}}(q_{\phi}(\z|\x^{(i)})\,\|\,p_{\theta}(\x^{(i)},\z)) \\
        \implies \mathcal{L}(\theta,\phi;\x^{(i)}) &= -D_{\text{KL}}(q_{\phi}(\z|\x^{(i)})~||~p_{\theta}(\x^{(i)},\z)).
    \end{align*}
\end{proof}

\( \mathcal{L}(\theta,\phi;\x^{(i)}) \) is a lower bound to the evidence \(\log p_{\theta}(\x^{(i)})\):
\begin{align*}
    \log p_{\theta}(\x^{(i)}) &= D_{\text{KL}}(q_{\phi}(\z|\x^{(i)})\,\|\,p_{\theta}(\z|\x^{(i)})) + \mathcal{L}(\theta,\phi;\x^{(i)}) \\
    \implies \log p_{\theta}(\x^{(i)}) &\geq \mathcal{L}(\theta,\phi;\x^{(i)}) \quad \text{since } D_{\text{KL}}(\cdot||\cdot)\geq0
\end{align*}
which can be re-written as:
\begin{align}
    \mathcal{L}(\theta,\phi;\x^{(i)}) &= \mathbb{E}_{\z \sim q_{\phi}(\cdot|\x^{(i)})}[\log p_{\theta}(\x^{(i)}|\z)] - D_{\text{KL}}(q_{\phi}(\z|\x^{(i)})\,\|\,p_{\theta}(\z)).
\end{align}

\begin{proof}
    \begin{align*}
        \mathcal{L}(\theta,\phi;\x^{(i)}) &= -D_{\text{KL}}(q_{\phi}(\z|\x^{(i)})\,\|\,p_{\theta}(\z|\x^{(i)})) + \mathcal{L}(\theta,\phi;\x^{(i)}) \\
        &= -\int q_{\phi}(\z|\x^{(i)}) \log \frac{q_{\phi}(\z|\x^{(i)})}{p_{\theta}(\x^{(i)},\z)} \, \mathrm{d}\z \\
        &= -\int q_{\phi}(\z|\x^{(i)}) \log \frac{q_{\phi}(\z|\x^{(i)})}{p_{\theta}(\x^{(i)}|\z)p_{\theta}(\z)} \, \mathrm{d}\z \\
        &= \int q_{\phi}(\z|\x^{(i)}) \log {p_{\theta}(\x^{(i)},\z)} \, \mathrm{d}\z + \int q_{\phi}(\z|\x^{(i)}) \log {p_{\theta}(\x^{(i)},\z)} \, \mathrm{d}\z
    \end{align*}
\end{proof}

Since $\log p_{\theta}(\x^{(i)}) \geq \mathcal{L}(\theta,\phi;\x^{(i)})$, maximising the evidence lower bound (ELBO, right-hand term) will maximise the data log-likelihood.

The likelihood \( p_{\theta}(\x^{(i)}|\z) \) can be modelled by a \texttt{decoder}, a neural network with parameters \( \theta \), that reconstructs the input data \( \x^{(i)} \) given a latent representation \( \z \). Conversely, the approximate posterior \( q_{\phi}(\z|\x^{(i)}) \) can be modelled by an \texttt{encoder}, a separate neural network with parameters \( \phi \), which infers the distribution of the latent representation \( \z \) from the input \( \x^{(i)} \).

The evidence lower bound (ELBO) loss jointly optimises both the encoder and decoder. The latent space is parameterised as a multivariate normal distribution; specific details are omitted here for brevity.

\paragraph{Implementation details for image VAEs}
In the case of VAEs applied to image data, it is common to model the likelihood $p_{\theta}(\x^{(i)}|\z)$ as a factorised Gaussian distribution across the IID pixel intensities. Specifically, given a latent code $\z$, the decoder predicts the mean of the Gaussian for each pixel. Consequently, maximising the log-likelihood term $\mathbb{E}_{\z \sim q_{\phi}(\cdot|\x^{(i)})}[\log p_{\theta}(\x^{(i)}|\z)]$ corresponds to minimising a mean squared error (MSE) reconstruction loss between the predicted pixel intensities and the observed pixel values.

\paragraph{Summary}
Thus, by optimising the ELBO, VAEs are able to learn efficient representations of data in a continuous latent space, which can subsequently be sampled to generate novel instances resembling the training distribution. 

\subsection{Denoising Diffusion Probabilistic Model (DDPM)}
Denoising Diffusion Probabilistic Models (DDPMs)~\cite{ho2020denoising} are a class of generative models that learn to generate data by reversing a gradual noising process. During training, the model is optimised to predict and remove the noise added to data samples at various stages, conditioned explicitly on the timestep. This denoising is learnt so that the model can progressively reconstruct realistic data samples starting from pure Gaussian noise.

\begin{figure}[h]
    \centering
    \includegraphics[scale=0.4]{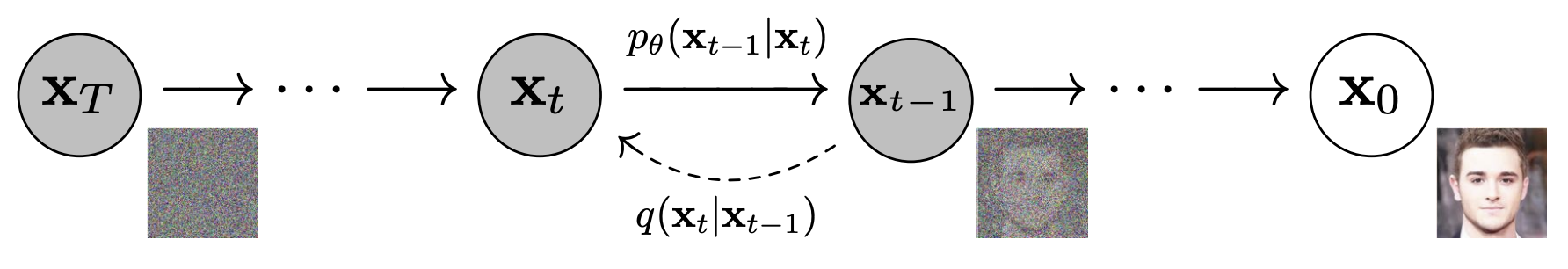}
    \caption[Directed graphical model of a Denoising Diffusion Probabilistic Model (DDPM).]{Directed graphical model of DDPM~\cite{ho2020denoising}. Dashed lines denote the forward diffusion process $q(\x_t|\x_{t-1})$. Solid lines denote the learnt denoising process $p_{\theta}(\x_{t-1}|\x_t)$.}
\end{figure}

\paragraph{Background}
The following notation and properties are used:
\begin{align*}
    &p(\x_i, \x_{i+1},...,\x_{j}) \triangleq p(\x_{i:j}) \text{ for } i<j \text{ and }  p(\x_i) \triangleq p(\x_{i:i})\\
    &\int \dots \int p(x_{i:j}) d\x_i \dots d\x_j \triangleq \int p(x_{i:j}) d\x_{i:j} \text{ for } i<j\\
    &p_{\theta}(\x_{t-1}|\x_{t:T}) = p_{\theta}(\x_{t-1}|\x_t) \implies p_{\theta}(\x_{0:T}) = p_{\theta}(\x_T)\prod_{t=1}^{T}p_{\theta}(\x_{t-1}|\x_t) && \text{Markov property}\\
    &q(\x_t|\x_{t-1:0}) = q(\x_t|\x_{t-1}) \implies q(\x_{0:T}) = q(\x_0)\prod_{t=1}^{T}q(\x_t|\x_{t-1}) && \text{Markov property}\\
    &f \text{ convex} \implies f(\E[x]) \leq \E[f(x)] && \text{Jensen's inequality}\\
    &f \text{ concave} \implies f(\E[x]) \geq \E[f(x)] && \text{Jensen's inequality}
\end{align*}

\paragraph{Diffusion}
The forward diffusion process gradually corrupts the data by adding Gaussian noise at each time step. This is controlled by a pre-defined noise schedule $\{\beta_1, \beta_2, ..., \beta_T\}$, where each $\beta_t \in (0,1)$ specifies the variance of the noise added at time $t$. In most practical implementations, the number of steps $T$ is set to a large value, typically $T = 1000$.

Formally, the forward process is given by:
\begin{align*}
    \text{Let } \x_t &= \sqrt{1 - \beta_t}\x_{t-1} + \sqrt{\beta_t}\epsilon_t, \quad \epsilon_t \sim \mathcal{N}(0, \I) \\
    q(\x_t|\x_{t-1}) &= \mathcal{N}(\x_t; \sqrt{1 - \beta_t}\x_{t-1}, \beta_t \I)
\end{align*}
where $\epsilon_t$ is drawn from a standard multivariate normal distribution with identity covariance matrix $\I$.

Typically, the noise schedule is chosen such that $\beta_1 < \beta_2 < ... < \beta_T$ and each $\beta_t$ is much smaller than 1, $\beta_t << 1$. This ensures that noise is added slowly and progressively over time.

\begin{lemma}
Each intermediate state $\x_t$ in the forward diffusion process is normally distributed, and its distribution can be expressed directly in terms of the original data $\x_0$:
\begin{align*}
    q(\x_t|\x_0) = \mathcal{N}(\x_t; \sqrt{\bar{\alpha}_t}\x_0, (1-\bar{\alpha}_t)\I)
\end{align*}
where $\bar{\alpha}_t = \prod_{i=1}^{t}\alpha_i$ and $\alpha_t = 1 - \beta_t$.
\end{lemma}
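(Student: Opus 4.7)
The plan is to prove this by induction on $t$, exploiting the fact that an affine combination of independent Gaussian random variables is again Gaussian with variance equal to the sum of the squared coefficients. First I would rewrite the one-step recursion using $\alpha_t = 1 - \beta_t$, so that $\x_t = \sqrt{\alpha_t}\,\x_{t-1} + \sqrt{1-\alpha_t}\,\epsilon_t$ with $\epsilon_t \sim \mathcal{N}(0,\I)$. The base case $t=1$ is immediate since $\bar{\alpha}_1 = \alpha_1$ gives $\x_1 = \sqrt{\bar{\alpha}_1}\,\x_0 + \sqrt{1-\bar{\alpha}_1}\,\epsilon_1$, which matches the claimed distribution.

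For the inductive step, I would assume $\x_{t-1} = \sqrt{\bar{\alpha}_{t-1}}\,\x_0 + \sqrt{1-\bar{\alpha}_{t-1}}\,\bar{\epsilon}_{t-1}$ for some $\bar{\epsilon}_{t-1} \sim \mathcal{N}(0,\I)$ independent of $\epsilon_t$. Substituting into the one-step recursion yields
\[
\x_t = \sqrt{\alpha_t \bar{\alpha}_{t-1}}\,\x_0 + \sqrt{\alpha_t(1-\bar{\alpha}_{t-1})}\,\bar{\epsilon}_{t-1} + \sqrt{1-\alpha_t}\,\epsilon_t.
\]
The key step is to collapse the two independent Gaussian noise contributions into a single Gaussian: because $\bar{\epsilon}_{t-1}$ and $\epsilon_t$ are independent standard normals, their weighted sum is again normal with variance equal to the sum of squared coefficients, namely $\alpha_t(1-\bar{\alpha}_{t-1}) + (1-\alpha_t) = 1 - \alpha_t\bar{\alpha}_{t-1} = 1 - \bar{\alpha}_t$. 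Recognising $\alpha_t\bar{\alpha}_{t-1} = \bar{\alpha}_t$ in the mean term then gives $\x_t = \sqrt{\bar{\alpha}_t}\,\x_0 + \sqrt{1-\bar{\alpha}_t}\,\bar{\epsilon}_t$, which is precisely $\mathcal{N}(\x_t; \sqrt{\bar{\alpha}_t}\,\x_0, (1-\bar{\alpha}_t)\I)$.

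There is no genuine obstacle here; the argument is essentially algebraic once the stability of Gaussians under independent linear combinations is invoked. The only subtlety worth flagging carefully is the independence of the freshly injected noise $\epsilon_t$ from the noise absorbed into $\bar{\epsilon}_{t-1}$, which follows directly from the Markov factorisation $q(\x_{0:T}) = q(\x_0)\prod_{t=1}^{T} q(\x_t|\x_{t-1})$ stated earlier in the excerpt. An equivalent route would be to compute the conditional characteristic function of $\x_t$ given $\x_0$ and verify by telescoping that it matches the prescribed Gaussian, but the inductive unrolling above is cleaner and more transparent.
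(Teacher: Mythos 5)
Your proposal is correct and is essentially the same argument as the paper's: the paper proves the lemma by recursively expanding $\x_t = \sqrt{\alpha_t}\x_{t-1} + \sqrt{1-\alpha_t}\epsilon_t$ down to $\x_0$, combining the independent Gaussian noise terms by adding variances at each step, which is exactly the step your induction formalises. Your inductive presentation merely replaces the paper's informal ``$\dots$'' unrolling with a base case and inductive step, and your explicit appeal to the Markov factorisation for independence is a small but welcome tightening of the same proof.
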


\begin{proof}
Recursive expansion of the diffusion process:
\begin{align*}
        \x_t &= \sqrt{1 - \beta_t}\x_{t - 1} + \sqrt{\beta_t}\epsilon_t \\
        &= \sqrt{\alpha_t}\x_{t - 1} + \sqrt{1 - \alpha_t}\epsilon_t \\
        &= \sqrt{\alpha_t}(\sqrt{\alpha_{t-1}}\x_{t - 2} + \sqrt{1 - \alpha_{t-1}}\epsilon_{t-1}) + \sqrt{1 - \alpha_t}\epsilon_t \\
        &= \sqrt{\alpha_t\alpha_{t-1}}\x_{t - 2} + \sqrt{\alpha_t(1 - \alpha_{t-1})}\epsilon_{t-1} + \sqrt{1 - \alpha_t}\epsilon_t \\
        &= \sqrt{\alpha_t\alpha_{t-1}}\x_{t - 2} + \sqrt{\alpha_t - \alpha_t \alpha_{t-1} + 1 - \alpha_t}\epsilon_{t-1:t} && \text{ variances add up and }\epsilon_{t-1:t} \sim \mathcal{N}(0, \I)\\
        &= \sqrt{\alpha_t\alpha_{t-1}}\x_{t - 2} + \sqrt{1 - \alpha_t \alpha_{t-1}}\epsilon_{t-1:t} \\
        &\dots \\
        &= \sqrt{\bar{\alpha}_t}\x_0 + \sqrt{1 - \bar{\alpha}_t}\epsilon_{0:t} && \epsilon_{0:t} \sim \mathcal{N}(0, \I) \\
        &\triangleq \sqrt{\bar{\alpha}_t}\x_0 + \sqrt{1 - \bar{\alpha}_t}\tilde{\epsilon}_t && \text{ where } \tilde{\epsilon}_t \sim \mathcal{N}(0, \I) \text{ is the full noise added to } \x_0\\
\end{align*}
Note, since $\epsilon_{t-1}$ and $\epsilon_t$ are independent standard Gaussians, the noise terms combine into another Gaussian noise term by adding the variance terms within the square root.
Thus, the probability distribution of $\x_t$ conditioned on $\x_0$ is a Gaussian with mean $\sqrt{\bar{\alpha}_t}\x_0$ and covariance $(1-\bar{\alpha}_t)\I$.
\end{proof}

\begin{corollary}
    The forward diffusion process converges to full Gaussian noise:
    \begin{align*}
        \lim_{T -> \infty}q(\x_T|\x_0) = \mathcal{N}(\x_T;\x_0, \I)
    \end{align*}
\end{corollary}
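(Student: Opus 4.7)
The plan is to leverage the lemma just established, which gives the closed-form marginal $q(\x_T|\x_0) = \mathcal{N}(\x_T; \sqrt{\bar{\alpha}_T}\x_0, (1-\bar{\alpha}_T)\I)$, and to track the behaviour of the scalar sequence $\bar{\alpha}_T = \prod_{i=1}^{T}(1-\beta_i)$ as $T \to \infty$. Since a Gaussian is determined entirely by its mean and covariance, pointwise convergence of $\bar{\alpha}_T$ is sufficient to yield convergence of the density (and of the distribution in the weak sense). The whole corollary therefore reduces to a one-dimensional limit on the product $\bar{\alpha}_T$.

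First I would establish that $\bar{\alpha}_T \to 0$. Taking logarithms, $\log \bar{\alpha}_T = \sum_{i=1}^{T} \log(1 - \beta_i)$. Under the standard noise-schedule assumption that each $\beta_i \in (0,1)$ with $\sum_{i=1}^{\infty}\beta_i = \infty$ (which holds for the linear schedule used in practice, since $\beta_i$ is bounded below by a positive constant), the elementary inequality $\log(1-x) \le -x$ for $x \in [0,1)$ gives $\log \bar{\alpha}_T \le -\sum_{i=1}^{T}\beta_i \to -\infty$, so $\bar{\alpha}_T \to 0$. Substituting back into the lemma, the mean $\sqrt{\bar{\alpha}_T}\x_0 \to \mathbf{0}$ and the covariance $(1-\bar{\alpha}_T)\I \to \I$, yielding $q(\x_T|\x_0) \to \mathcal{N}(\x_T; \mathbf{0}, \I)$.

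I would remark that the corollary as currently written, $\mathcal{N}(\x_T; \x_0, \I)$, contains a typographical slip: the mean cannot retain $\x_0$, because the data signal is attenuated at every step by the factor $\sqrt{1-\beta_i} < 1$ and must vanish in the limit. I would state the corrected version and then give the one-line derivation above.

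The only genuine obstacle is ensuring the hidden scheduling hypothesis is made explicit: the convergence $\bar{\alpha}_T \to 0$ is \emph{not} automatic from $\beta_i \in (0,1)$ alone — counterexamples such as $\beta_i = 2^{-i}$ give $\bar{\alpha}_T$ tending to a strictly positive constant, in which case residual signal from $\x_0$ survives and the limit is not standard normal. Once one commits to the divergence condition $\sum_i \beta_i = \infty$ (satisfied by any practical DDPM schedule), the rest is a routine substitution into the expression supplied by the lemma.
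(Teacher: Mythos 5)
Your proposal follows the same route as the paper: both invoke the preceding lemma's closed form $\x_T = \sqrt{\bar{\alpha}_T}\x_0 + \sqrt{1-\bar{\alpha}_T}\tilde{\epsilon}_T$ and then argue $\bar{\alpha}_T \to 0$, so that the signal term vanishes and the limit is standard Gaussian noise. The difference is one of rigour, and it is in your favour. The paper simply writes $\lim_{T\to\infty}\bar{\alpha}_T = \prod_i(1-\beta_i) = 0$ ``since $\beta_T \in (0,1)$'', which, as you correctly observe, is not a valid implication on its own: with $\beta_i = 2^{-i}$ every factor lies in $(0,1)$ yet the product converges to a strictly positive limit, and residual dependence on $\x_0$ survives. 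Your explicit divergence hypothesis $\sum_i \beta_i = \infty$, together with $\log(1-x)\le -x$, is exactly the missing ingredient (and it does hold for the monotone increasing schedules the paper assumes, since then $\beta_i \ge \beta_1 > 0$). You are also right that the stated limit $\mathcal{N}(\x_T;\x_0,\I)$ is a typographical slip for $\mathcal{N}(\x_T;\mathbf{0},\I)$; the paper's own proof concludes $\x_T \to \tilde{\epsilon}_T \sim \mathcal{N}(0,\I)$, consistent with your corrected statement. In short: same strategy, but your version closes a genuine gap in the paper's justification and fixes the statement's mean.
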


\begin{proof}
    \begin{align*}
        \lim_{T -> \infty}\bar{\alpha}_T &= \prod_{i=1}^{T}(1 - \beta_i) = 0 \text{ since } \beta_T \in (0, 1)\\
        \lim_{T -> \infty}\x_T &= \lim_{T -> \infty}\sqrt{\bar{\alpha}_T}\x_0 + \sqrt{1 - \bar{\alpha}_T}\tilde{\epsilon}_T = \tilde{\epsilon}_T \sim \mathcal{N}(0, \I)
    \end{align*}
\end{proof}

The probability distribution of the true denoising process $q(\x_{t-1}|\x_t)$ is intractable because the probability over the entire data space $q(\x_0)$ is unknown:

\begin{align*}
    q(\x_{t-1}) = \int q(\x_{t-1}|\x_0)q(\x_0)d\x_0  \text{ intractable }\implies q(\x_{t-1}|\x_t) = \frac{q(\x_t|\x_{t-1})q(\x_{t-1})}{q(\x_t)} \text{ intractable}
\end{align*}

An important observation is that if the original state $\x_0$ is known, it becomes easy to model the transition between $\x_t$ and $\x_{t-1}$, i.e. to infer and remove the added noise. The conditional distribution $q(\x_{t-1}|\x_t, \x_0)$ can be computed as follows:

\begin{align*}
    q(\x_{t-1}|\x_t, \x_0) &= \frac{q(\x_t|\x_{t-1}, \x_0)q(\x_{t-1}|\x_0)}{q(\x_t|\x_0)} \\
    &= \frac{q(\x_t|\x_{t-1})q(\x_{t-1}|\x_0)}{q(\x_t|\x_0)}&& \text{Markov property}\\
\end{align*}

Since $q(\x_t|\x_0)$ acts as a normalisation constant independent of $\x_{t-1}$, it can be omitted when considering the shape of the distribution. Thus, the following proportional relationship holds:
\begin{align*}
    q(\x_{t-1}|\x_t, \x_0) &\propto q(\x_t|\x_{t-1}) q(\x_{t-1}|\x_0). \\
    &= \mathcal{N}(\x_{t-1}|\sqrt{1 - \beta_t}\x_t, \beta_t\I) \mathcal{N}(\x_{t-1};\sqrt{\bar{\alpha}_{t-1}}\x_0, (1-\bar{\alpha}_{t-1})\I)\\
\end{align*}
This means that, conditioned on the original state $\x_0$, it is straightforward to model the denoising transition from $\x_t$ to $\x_{t-1}$. Given $q(\x_t|\x_{t-1})$ and $q(\x_{t-1}|\x_0)$ of the forward diffusion process as Gaussian distributions, their product is itself bell-shaped. As a result, the conditional denoising probability distribution $q(\x_{t-1}|\x_t, \x_0)$ can be explicitly computed as another Gaussian whose mean and variance can be derived analytically.

\begin{theorem}
\label{theorem:miu_theta}
    $q(\x_{t-1}|\x_t, \x_0)$ is a Gaussian distribution:
    \begin{gather*}
        q(\x_{t-1}|\x_t, \x_0) = \mathcal{N}(\x_{t-1};\tilde{\mu}_{t}(\x_t,\x_0),\tilde{\beta}_{t}{\I})\\
        \text{where } \tilde{\mu}_{t}(\x_{t},\x_{0}):=\frac{\sqrt{\bar{\alpha}_{t-1}}\beta_{t}}{1-\bar{\alpha}_{t}}\x_{0}+\frac{\sqrt{\alpha_t}(1 - \bar{\alpha}_{t-1})}{1-\bar{\alpha}_{t}}\x_{t} \text{ and } \tilde{\beta}_{t}:=\frac{1-\bar{\alpha}_{t-1}}{1-\bar{\alpha}_{t}}\beta_{t}
    \end{gather*}
\end{theorem}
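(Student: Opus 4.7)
The plan is to start from the proportionality relation already derived in the excerpt, namely
\begin{align*}
q(\x_{t-1}|\x_t,\x_0) \propto q(\x_t|\x_{t-1})\,q(\x_{t-1}|\x_0) = \mathcal{N}(\x_t;\sqrt{\alpha_t}\x_{t-1},\beta_t\I)\,\mathcal{N}(\x_{t-1};\sqrt{\bar\alpha_{t-1}}\x_0,(1-\bar\alpha_{t-1})\I),
\end{align*}
and reduce the statement to a completion-of-the-square exercise. Since the unknown on the left is $\x_{t-1}$ and the two factors on the right are Gaussian in $\x_{t-1}$ (the first after rewriting $\x_t-\sqrt{\alpha_t}\x_{t-1}$), their product is a Gaussian in $\x_{t-1}$ up to a multiplicative constant in $\x_t,\x_0$. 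This immediately gives the Gaussian form; the only remaining work is to read off the mean and variance.

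The concrete steps I would carry out are as follows. First, I would write the log of the product, discard any term independent of $\x_{t-1}$, and obtain an expression of the form
\begin{align*}
\log q(\x_{t-1}|\x_t,\x_0) = -\tfrac{1}{2}\Bigl(A\,\x_{t-1}^{\top}\x_{t-1} - 2B^{\top}\x_{t-1}\Bigr) + \text{const},
\end{align*}
with $A = \bigl(\tfrac{\alpha_t}{\beta_t} + \tfrac{1}{1-\bar\alpha_{t-1}}\bigr)$ a scalar and $B = \tfrac{\sqrt{\alpha_t}}{\beta_t}\x_t + \tfrac{\sqrt{\bar\alpha_{t-1}}}{1-\bar\alpha_{t-1}}\x_0$ a vector. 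Completing the square then yields variance $A^{-1}\I$ and mean $A^{-1}B$. Finally, I would simplify $A^{-1}$ using $\alpha_t(1-\bar\alpha_{t-1}) + \beta_t = \alpha_t - \bar\alpha_t + 1 - \alpha_t = 1-\bar\alpha_t$, which gives the coefficient $\tilde\beta_t = \tfrac{(1-\bar\alpha_{t-1})\beta_t}{1-\bar\alpha_t}$; multiplying this through $B$ gives the two stated coefficients of $\x_0$ and $\x_t$ in $\tilde\mu_t$.

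The main obstacle is purely algebraic: keeping track of the three compressed symbols $\alpha_t$, $\bar\alpha_t$, $\bar\alpha_{t-1}$ and repeatedly invoking $\bar\alpha_t = \alpha_t\bar\alpha_{t-1}$ to collapse the sum $\tfrac{\alpha_t}{\beta_t} + \tfrac{1}{1-\bar\alpha_{t-1}}$ into the compact form $\tfrac{1-\bar\alpha_t}{\beta_t(1-\bar\alpha_{t-1})}$. No new conceptual machinery is required beyond the already-established identity $q(\x_t|\x_0) = \mathcal{N}(\x_t;\sqrt{\bar\alpha_t}\x_0,(1-\bar\alpha_t)\I)$ from the preceding lemma and the Bayes factorisation $q(\x_{t-1}|\x_t,\x_0) = q(\x_t|\x_{t-1})q(\x_{t-1}|\x_0)/q(\x_t|\x_0)$, the denominator of which absorbs the normalising constant that the proportional derivation ignored. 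Once $A^{-1}$ has been simplified, verifying that $A^{-1}B$ matches the stated $\tilde\mu_t$ is a matter of clearing denominators in each of the two summands and does not involve any further non-trivial identities.
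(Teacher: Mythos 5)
Your proposal is correct and matches the route the paper itself sketches immediately before the theorem (Bayes factorisation via the Markov property, then treating the product of the two Gaussian factors as a Gaussian in $\x_{t-1}$); the paper omits the actual computation ``for brevity,'' and your completion-of-the-square argument supplies exactly that omitted detail, with the key simplification $\alpha_t(1-\bar{\alpha}_{t-1})+\beta_t = 1-\bar{\alpha}_t$ correctly yielding $\tilde{\beta}_t$ and $\tilde{\mu}_t$ as stated.
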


The proof was excluded for brevity.

\paragraph{Learnt denoising process}

Recall $q(\x_{t-1}|\x_t) = \frac{q(\x_t|\x_{t-1})q(\x_{t-1})}{q(\x_t)}$. The posterior (i.e. true denoising process) $q(\x_{t-1}|\x_t)$ is approximately Gaussian, a key insight. This is because $q(\x_t|\x_{t-1})$ is a normal distribution with variance $\beta_t \ll 1$ and $q(\x_{t-1})$ does not vary a lot over the density of $q(\x_t|\x_{t-1})$. As such, the product of their probability density functions will be bell-shaped. Additionally, $q(\x_t)$ is a normalisation factor. See \autoref{fig:theory:bishop} for an illustrative example.

\begin{figure}[h]
    \centering
    \includegraphics[scale=0.3]{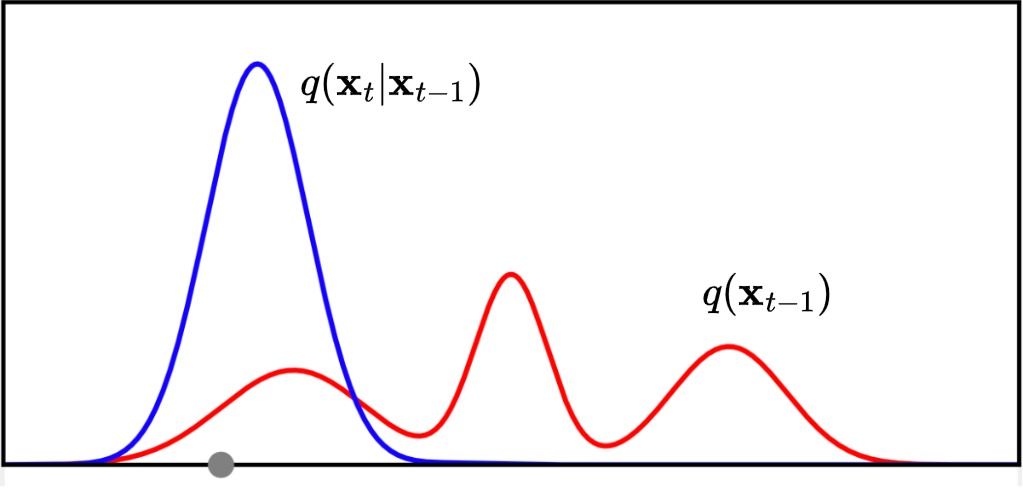}
    \caption[Intuition for why the denoising process of a diffusion model is approximately Gaussian.]{Intuition for why the true denoising process $q(\x_{t-1}|\x_t)$ is approximately Gaussian, since the product of $q(\x_t|\x_{t-1})$ and $q(\x_{t-1}|\x_t)$ will be bell-shaped. This figure and the idea behind the explanation were adapted from~\cite{Bishop2024}.}
    \label{fig:theory:bishop}
\end{figure}

The true posterior can be approximated using a normal distribution whose mean $\mu_{\theta}(\x_t, t)$ and covariance $\Sigma_{\theta}(\x_t, t)$ are predicted by a neural network with parameters $\theta$:
\begin{align}
    p_{\theta}(\x_{t-1}|\x_t) = \mathcal{N}(\x_{t-1};\mu_{\theta}(\x_t, t), \Sigma_{\theta}(\x_t, t))
\end{align}

The covariance matrix of~\cite{ho2020denoising} is assumed to be diagonal $\Sigma_{\theta}(\x_t, t)=\sigma_t^2I$, where $\sigma_t^2=\tilde{\beta}_t$, a known quantity. As such, the neural network does not have to predict a separate variance term. Experimentally, \cite{ho2020denoising} shows that selecting $\sigma_t^2=\beta_t$ produces similar results.

\paragraph{Evidence Lower Bound (ELBO)}

    The \text{log}-likelihood is lower bounded by:
    \begin{align*}
        \text{log}~p_{\theta}(\x_0) \geq \E_{\x_{1:T}\sim q(\cdot|\x_0)}[\text{log}~p_{\theta}(\x_{0:T}) - \text{log}~q(\x_{1:T}|\x_0)]
    \end{align*}

\begin{proof}
    \begin{align*}
        \text{log}~p_{\theta}(\x_0) &= \text{log} \int p_{\theta}(\x_0, \x_{1:T})d\x_{1:T} \\
        &= \text{log} \int p_{\theta}(\x_0, \x_{1:T})\frac{q(\x_{1:T}|\x_0)}{q(\x_{1:T}|\x_0)}d\x_{1:T} \\
        &= \text{log} \E_{\x_{1:T}\sim q(\cdot|\x_0)} \left[ \frac{p_{\theta}(\x_{0:T})}{q(\x_{1:T}|\x_0)} \right] \\
        &\geq \E_{\x_{1:T}\sim q(\cdot|\x_0)}[\text{log}~p_{\theta}(\x_{0:T}) - \text{log}~q(\x_{1:T}|\x_0)] && \text{Jensen's inequality for log, concave}
    \end{align*}
\end{proof}
In an optimisation procedure, maximising the evidence lower bound would maximise the data log-likelihood. However, in practice, \(E_{\x_{1:T}\sim q(\cdot|\x_0)}[\text{log}~p_{\theta}(\x_{0:T}) - \text{log}~q(\x_{1:T}|\x_0)]\) remains hard to compute as it relies on the complete Markov process.

The negative log-likelihood is bounded by the negative evidence lower bound, which can be rewritten as follows:
\begin{align*}
    & -log~p_{\theta}(\x_0) \\
    &\leq -\E_{\x_{1:T}\sim q(\cdot|\x_0)}[\text{log}~p_{\theta}(\x_{0:T}) - \text{log}~q(\x_{1:T}|\x_0)] \\
    &= \E_{\x_{1:T}\sim q(\cdot|\x_0)} \left[ -\text{log}~\frac{p_{\theta}(\x_{0:T})}{q(\x_{1:T}|\x_0)} \right] \\
    &= \E_{\x_{1:T}\sim q(\cdot|\x_0)} \left[ -\text{log}~p_{\theta}(\x_{T}) - \sum_{t \geq 1} \text{log}~\frac{p_{\theta}(\x_{t-1}|\x_t)}{q(\x_t|\x_{t-1})} \right] 
 \text{   \qquad from the Markov property}\\
    &= \E_{\x_{1:T}\sim q(\cdot|\x_0)} \left[ -\text{log}~p_{\theta}(\x_{T}) - \sum_{t > 1} \text{log}~\frac{p_{\theta}(\x_{t-1}|\x_t)}{q(\x_t|\x_{t-1})} - \text{log}~\frac{p_{\theta}(\x_0|\x_1)}{q(\x_1|\x_0)} \right] \\
    &= \E_{\x_{1:T}\sim q(\cdot|\x_0)} \left[ -\text{log}~p_{\theta}(\x_{T}) - \sum_{t > 1} \text{log}~\frac{p_{\theta}(\x_{t-1}|\x_t)}{q(\x_{t-1}|\x_{t}, \x_0)} - \sum_{t>1} \text{log}~\frac{q(\x_{t-1}|\x_0)}{q(\x_{t}|\x_0)} - \text{log}~\frac{p_{\theta}(\x_0|\x_1)}{q(\x_1|\x_0)} \right] \\
    &= \E_{\x_{1:T}\sim q(\cdot|\x_0)} \left[ \text{log}~\frac{q(\x_T|\x_0)}{p_{\theta}(\x_{T})} + \sum_{t > 1} \text{log}~\frac{q(\x_{t-1}|\x_{t}, \x_0)}{p_{\theta}(\x_{t-1}|\x_t)} - \text{log}~p_{\theta}(\x_0|\x_1) \right] \text{ sum terms cancelled out}\\
    &\triangleq \E_{\x_{1:T}\sim q(\cdot|\x_0)} \left[ L_T + \sum_{t>1} L_{t-1} - L_0 \right]
\end{align*}

The expected negative log-likelihood is:
\begin{align*}
    \E_{\x_0 \sim q(\x_0)} \left[ -log~p_{\theta}(\x_0) \right]
    \leq \E_{\x_0 \sim q(\x_0),\x_{1:T}\sim q(\cdot|\x_0)} \left[ L_T + \sum_{t>1} L_{t-1} - L_0 \right]
\end{align*}

\begin{lemma}
    \begin{align*}
        L_{t-1} = \frac{1}{2\sigma_t^2}\norm{\tilde{\mu}_{t}(\x_t,\x_0) - \mu_{\theta}(\x_t, t)}^2 + C
    \end{align*}
\end{lemma}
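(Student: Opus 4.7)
The plan is to recognize that $L_{t-1}$, as it appears in the ELBO, contributes under expectation as a Kullback--Leibler divergence between two Gaussians with the same covariance, and that KL of same-covariance Gaussians collapses to a scaled squared mean difference. Concretely, within the outer expectation $\E_{\x_{1:T} \sim q(\cdot|\x_0)}$, marginalising out every coordinate except $\x_{t-1}$ (conditioned on $\x_t,\x_0$) turns $\E[L_{t-1}]$ into $\E_{\x_t \sim q(\cdot|\x_0)}\!\bigl[D_{\text{KL}}\!\bigl(q(\x_{t-1}|\x_t,\x_0)\,\|\,p_\theta(\x_{t-1}|\x_t)\bigr)\bigr]$, so it suffices to evaluate the inner KL pointwise in $\x_t$.

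Next I would invoke Theorem~\ref{theorem:miu_theta}, which gives $q(\x_{t-1}|\x_t,\x_0) = \mathcal{N}(\x_{t-1}; \tilde\mu_t(\x_t,\x_0), \tilde\beta_t \I)$, together with the modelling assumption $p_\theta(\x_{t-1}|\x_t) = \mathcal{N}(\x_{t-1}; \mu_\theta(\x_t,t), \sigma_t^2 \I)$ with $\sigma_t^2$ fixed (not learned). I would then apply the standard closed-form KL between two $d$-dimensional Gaussians with covariances $\Sigma_1, \Sigma_2$:
\begin{align*}
D_{\text{KL}}\!\bigl(\mathcal{N}(\mu_1,\Sigma_1) \,\|\, \mathcal{N}(\mu_2,\Sigma_2)\bigr) = \tfrac{1}{2}\Bigl[\log \tfrac{\det \Sigma_2}{\det \Sigma_1} - d + \operatorname{tr}(\Sigma_2^{-1}\Sigma_1) + (\mu_1 - \mu_2)^\top \Sigma_2^{-1}(\mu_1 - \mu_2)\Bigr].
\end{align*}
With $\Sigma_1 = \tilde\beta_t \I$ and $\Sigma_2 = \sigma_t^2 \I$, the first three bracketed terms depend only on $\tilde\beta_t$, $\sigma_t^2$ and the dimension --- none of them on the learnable parameter $\theta$ --- so they collect into a single constant $C$, while the quadratic form reduces to $\tfrac{1}{2\sigma_t^2}\,\lVert \tilde\mu_t(\x_t,\x_0) - \mu_\theta(\x_t,t)\rVert^2$, yielding the claim.

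The main obstacle is really a matter of interpretation rather than calculation: the lemma writes $L_{t-1}$ as if it were a deterministic function of $(\x_t,\x_0)$, whereas its original definition $L_{t-1} = \log \tfrac{q(\x_{t-1}|\x_t,\x_0)}{p_\theta(\x_{t-1}|\x_t)}$ still depends on $\x_{t-1}$. I would therefore open the proof by clarifying that the identity is understood under the expectation $\E_{\x_{t-1} \sim q(\cdot|\x_t, \x_0)}$ induced by the surrounding ELBO --- exactly the step that turns the log-ratio into a KL divergence. Once this is made precise, the remainder is an algebraic substitution into the Gaussian KL formula, with the only subtlety being to verify that $C$ indeed contains no $\theta$-dependence so that it can be dropped during training.
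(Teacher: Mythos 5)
Your proposal is correct and follows essentially the same route as the paper: both reduce the statement to comparing the two Gaussian densities $q(\x_{t-1}|\x_t,\x_0)=\mathcal{N}(\tilde{\mu}_t,\tilde{\beta}_t\I)$ and $p_\theta(\x_{t-1}|\x_t)=\mathcal{N}(\mu_\theta,\sigma_t^2\I)$ and absorbing all $\theta$-independent terms into $C$, whether phrased as the closed-form Gaussian KL (your version) or as a direct subtraction of log-densities (the paper's version). In fact you are more careful than the paper, which silently collapses the $\x_{t-1}$-dependent log-ratio into $\frac{1}{2\sigma_t^2}\norm{\tilde{\mu}_t-\mu_\theta}^2$ without stating the expectation $\E_{\x_{t-1}\sim q(\cdot|\x_t,\x_0)}$ that makes this step valid — precisely the interpretive point you flag at the outset.
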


\begin{proof}
    \begin{align*}
        L_{t-1} &= \text{log}~q(\x_{t-1}|\x_t, \x_0) - \text{log}~p_{\theta}(\x_{t-1}|\x_t) \\
        &= \text{log}~\mathcal{N}(\x_{t-1};\tilde{\mu}_{t}(\x_t,\x_0),\tilde{\beta}_{t}{\I}) - \text{log}~\mathcal{N}(\x_{t-1};\mu_{\theta}(\x_t, t), \sigma_t^2\I) \\
        &= \frac{1}{2\sigma_t^2}\norm{\x_{t-1} - \mu_{\theta}(\x_t, t)}^2 - \frac{1}{2\tilde{\beta}_{t}}\norm{\x_{t-1} - \tilde{\mu}_{t}(\x_t,\x_0)}^2 + C \\
        &= \frac{1}{{2\sigma_t^2}}\norm{\tilde{\mu}_{t}(\x_t,\x_0) - \mu_{\theta}(\x_t, t)}^2 + C
    \end{align*}
\end{proof}

In practical implementations, $L_0$ and $L_T$, which correspond to an initial reconstruction and terminal KL terms, are ignored for the purpose of optimisation, together with the constant $C$ of $L_{t-1}$. The simplified loss function is:
\begin{align*}
    & \E_{\x_0 \sim q(\x_0),\x_{1:T}\sim q(\cdot|\x_0)} \left[ L_T + \sum_{t>1} L_{t-1} - L_0 \right] \\
    & \propto \E_{\x_0 \sim q(\x_0),\x_{1:T}\sim q(\cdot|\x_0)} \left[ \sum_{t>1} L_{t-1} \right] \\
    &= \E_{\x_0 \sim q(\x_0),\x_{1:T}\sim q(\cdot|\x_0)} \left[ \sum_{t>1} \left( \frac{1}{{2\sigma_t^2}}\norm{\tilde{\mu}_{t}(\x_t,\x_0) - \mu_{\theta}(\x_t, t)}^2 + C \right) \right] \\
    & \propto \E_{\x_0 \sim q(\x_0),\x_{1:T}\sim q(\cdot|\x_0)} \left[ \sum_{t>1} \left( \frac{1}{{2\sigma_t^2}}\norm{\tilde{\mu}_{t}(\x_t,\x_0) - \mu_{\theta}(\x_t, t)}^2 \right) \right] \\
    &\triangleq \mathcal{L(\theta)}
\end{align*}

Recall \autoref{theorem:miu_theta}, where $ \tilde{\mu}_{t}(\x_{t},\x_{0}):=\frac{\sqrt{\bar{\alpha}_{t-1}}\beta_{t}}{1-\bar{\alpha}_{t}}\x_{0}+\frac{\sqrt{\alpha_t}(1 - \bar{\alpha}_{t-1})}{1-\bar{\alpha}_{t}}\x_{t} \text{ and } \tilde{\beta}_{t}:=\frac{1-\bar{\alpha}_{t-1}}{1-\bar{\alpha}_{t}}\beta_{t}$

By optimising $\mathcal{L(\theta)}$, the model $\mu_{\theta}$ can be trained to predict the mean of the probability distribution of $\x_{t-1}$, given $\x_t$ and the timestep $t$, thus approximating $q(\x_{t-1}|\x_t, \x_0)$ through $p_{\theta}(\x_{t-1}|\x_t)$. At generation time, iteratively sampling from $p_{\theta}(\x_{t-1}|\x_t)$, gradually removes noise, ultimately producing a new sample. 

\paragraph{Reparameterisation} \cite{ho2020denoising} proposes a reparameterisation trick that improves results.

Recall each state $\x_t$ can be written in terms of the original state $\x_0$ as:
\begin{align*}
    \x_t=\sqrt{\bar{\alpha}_t}\x_0 + \sqrt{1 - \bar{\alpha}_t}\tilde{\epsilon}_t \implies \x_0 = \frac{1}{\sqrt{\bar{\alpha}_t}}(\x_t - \sqrt{1 - \bar{\alpha}_t}\tilde{\epsilon}_t) \\
\end{align*}
Where $\tilde{\epsilon}_t \sim \mathcal{N}(0, \I)$ is the full noise added to the original state $\x_0$

\begin{corollary}
\label{coro:miu}
The mean of the true posterior can be written as:
    \begin{align*}
        \tilde{\mu}_{t}(\x_t,\x_0) = \tilde{\mu}_{t} \left( \x_t, \frac{1}{\sqrt{\alpha_t}}(\x_t - \sqrt{1 - \bar{\alpha}_t}\tilde{\epsilon}_t) \right) = \frac{1}{\sqrt{\alpha_t}} \left( \x_t - \frac{\beta_t}{\sqrt{1 - \bar{\alpha}_t}}\tilde{\epsilon}_t \right)
    \end{align*}
\end{corollary}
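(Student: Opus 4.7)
The plan is to prove the identity by direct substitution of the reparameterisation for $\x_0$ into the closed form of $\tilde{\mu}_{t}(\x_t,\x_0)$ given in \autoref{theorem:miu_theta}, followed by algebraic simplification using the two relations $\beta_t = 1 - \alpha_t$ and $\bar{\alpha}_t = \alpha_t \bar{\alpha}_{t-1}$ that tie together the forward diffusion coefficients. No new stochastic arguments are required: everything reduces to scalar manipulation once the substitution is made.

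Concretely, I would first recall from \autoref{theorem:miu_theta} that
\begin{align*}
\tilde{\mu}_{t}(\x_t,\x_0) = \frac{\sqrt{\bar{\alpha}_{t-1}}\,\beta_t}{1-\bar{\alpha}_t}\,\x_0 + \frac{\sqrt{\alpha_t}(1-\bar{\alpha}_{t-1})}{1-\bar{\alpha}_t}\,\x_t,
\end{align*}
and replace $\x_0$ by $\tfrac{1}{\sqrt{\bar{\alpha}_t}}\bigl(\x_t - \sqrt{1-\bar{\alpha}_t}\,\tilde{\epsilon}_t\bigr)$, as derived just above the corollary. Writing $\sqrt{\bar{\alpha}_t} = \sqrt{\alpha_t}\sqrt{\bar{\alpha}_{t-1}}$ lets the $\sqrt{\bar{\alpha}_{t-1}}$ factor in the numerator cancel, yielding a clean expression in which the coefficient of $\tilde{\epsilon}_t$ immediately collapses to $\tfrac{\beta_t}{\sqrt{\alpha_t}\sqrt{1-\bar{\alpha}_t}}$ after simplifying $\tfrac{\sqrt{1-\bar{\alpha}_t}}{1-\bar{\alpha}_t}$.

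The only step that requires real care is collecting the two $\x_t$ terms, so this is where I would slow down. After the cancellation above, the total coefficient of $\x_t$ is
\begin{align*}
\frac{\beta_t}{(1-\bar{\alpha}_t)\sqrt{\alpha_t}} + \frac{\sqrt{\alpha_t}(1-\bar{\alpha}_{t-1})}{1-\bar{\alpha}_t} = \frac{\beta_t + \alpha_t(1-\bar{\alpha}_{t-1})}{(1-\bar{\alpha}_t)\sqrt{\alpha_t}}.
\end{align*}
The key observation is that $\beta_t + \alpha_t - \alpha_t\bar{\alpha}_{t-1} = (1-\alpha_t) + \alpha_t - \bar{\alpha}_t = 1 - \bar{\alpha}_t$, so the numerator matches the factor $(1-\bar{\alpha}_t)$ in the denominator and the coefficient reduces cleanly to $\tfrac{1}{\sqrt{\alpha_t}}$.

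Combining the simplified $\x_t$ and $\tilde{\epsilon}_t$ coefficients then gives
\begin{align*}
\tilde{\mu}_{t}(\x_t,\x_0) = \frac{1}{\sqrt{\alpha_t}}\x_t - \frac{\beta_t}{\sqrt{\alpha_t}\sqrt{1-\bar{\alpha}_t}}\tilde{\epsilon}_t = \frac{1}{\sqrt{\alpha_t}}\left(\x_t - \frac{\beta_t}{\sqrt{1-\bar{\alpha}_t}}\tilde{\epsilon}_t\right),
\end{align*}
which is the claimed identity. The main obstacle, and the only conceptually non-routine step, is recognising the telescoping $\beta_t + \alpha_t(1-\bar{\alpha}_{t-1}) = 1 - \bar{\alpha}_t$; everything else is bookkeeping.
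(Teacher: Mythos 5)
Your proposal is correct and follows exactly the route the paper takes: substituting $\x_0 = \tfrac{1}{\sqrt{\bar{\alpha}_t}}(\x_t - \sqrt{1-\bar{\alpha}_t}\,\tilde{\epsilon}_t)$ into the expression from \autoref{theorem:miu_theta}, cancelling via $\sqrt{\bar{\alpha}_t} = \sqrt{\alpha_t}\sqrt{\bar{\alpha}_{t-1}}$, and collapsing the $\x_t$ coefficient with $\beta_t + \alpha_t(1-\bar{\alpha}_{t-1}) = 1-\bar{\alpha}_t$. No differences worth noting; you even correctly use $\sqrt{\bar{\alpha}_t}$ in the substitution where the corollary statement itself has a typo ($\sqrt{\alpha_t}$).
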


\begin{proof}
    \begin{align*}
        \tilde{\mu}_{t}(\x_t,\x_0) &= \frac{\sqrt{\bar{\alpha}_{t-1}}\beta_{t}}{1-\bar{\alpha}_{t}}\x_{0}+\frac{\sqrt{\alpha_t}(1 - \bar{\alpha}_{t-1})}{1-\bar{\alpha}_{t}}\x_{t} \\
        &= \frac{\sqrt{\bar{\alpha}_{t-1}}\beta_{t}}{1-\bar{\alpha}_{t}}\frac{1}{\sqrt{\bar{\alpha}_t}}(\x_t - \sqrt{1 - \bar{\alpha}_t}\tilde{\epsilon}_t) + \frac{\sqrt{\alpha_t}(1 - \bar{\alpha}_{t-1})}{1-\bar{\alpha}_{t}}\x_{t} \\
        &= \frac{\beta_t + \alpha_t(1 - \bar{\alpha}_{t-1})}{(1-\bar{\alpha}_t)\sqrt{\alpha_t}} \x_t - \frac{\beta_t}{\sqrt{1 - \bar{\alpha}_t}\sqrt{\alpha_t}}\tilde{\epsilon}_t \\ 
        &= \frac{1 - \alpha_t + \alpha_t - \bar{\alpha}_t}{(1-\bar{\alpha}_t)\sqrt{\alpha_t}} \x_t - \frac{\beta_t}{\sqrt{1 - \bar{\alpha}_t}\sqrt{\alpha_t}}\tilde{\epsilon}_t \\ 
        &= \frac{1}{\sqrt{\alpha_t}} \left( \x_t - \frac{\beta_t}{\sqrt{1 - \bar{\alpha}_t}}\tilde{\epsilon}_t \right)
    \end{align*}
\end{proof}

\begin{theorem}
    In the reparameterised DDPM framework, the underlying model is trained to predict the noise $\tilde{\epsilon}_t$ added to the original state $\x_0$, instead of predicting the denoised representation $\x_{t-1}$, from $\x_t$ and $t$:
    \begin{align}
        \mu_\theta(\x_t, t) &= \frac{1}{\sqrt{\alpha_t}} \left( \x_t - \frac{\beta_t}{\sqrt{1 - \bar{\alpha}_t}} \epsilon_{\theta}(\x_t, t) \right) \\
        \implies L_{t-1} &= \frac{\beta_t^2}{2\sigma_t^2\alpha_t(1 - \bar{\alpha}_t)}\norm{\tilde{\epsilon}_t - \epsilon_{\theta}(\x_t, t)}^2 + C
    \end{align}
\end{theorem}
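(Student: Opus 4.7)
The plan is to combine the Gaussian form of $L_{t-1}$ from the preceding lemma with the reparameterisation of $\tilde{\mu}_t$ established in \autoref{coro:miu}, and then choose the parametric form of $\mu_\theta$ so as to mirror that structure. Concretely, the existing lemma gives
\begin{align*}
    L_{t-1} = \frac{1}{2\sigma_t^2}\norm{\tilde{\mu}_{t}(\x_t,\x_0) - \mu_{\theta}(\x_t, t)}^2 + C,
\end{align*}
so the entire statement reduces to computing the squared norm on the right-hand side under the proposed parameterisation of $\mu_\theta$.

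First, I would substitute \autoref{coro:miu} for $\tilde{\mu}_t$, yielding
\begin{align*}
    \tilde{\mu}_{t}(\x_t,\x_0) = \frac{1}{\sqrt{\alpha_t}}\left(\x_t - \frac{\beta_t}{\sqrt{1-\bar{\alpha}_t}}\tilde{\epsilon}_t\right).
\end{align*}
Then, by choosing
\begin{align*}
    \mu_\theta(\x_t, t) = \frac{1}{\sqrt{\alpha_t}}\left(\x_t - \frac{\beta_t}{\sqrt{1 - \bar{\alpha}_t}}\epsilon_{\theta}(\x_t, t)\right),
\end{align*}
the $\x_t$ terms cancel inside the norm, leaving a residual proportional to $\tilde{\epsilon}_t - \epsilon_\theta(\x_t, t)$. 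Pulling out the common scalar factor $\beta_t/(\sqrt{\alpha_t}\sqrt{1-\bar{\alpha}_t})$ and squaring gives the announced prefactor $\beta_t^2/\bigl(\alpha_t(1-\bar{\alpha}_t)\bigr)$ in front of $\norm{\tilde{\epsilon}_t - \epsilon_\theta(\x_t,t)}^2$. Finally, multiplying by $1/(2\sigma_t^2)$ from the lemma yields the stated coefficient $\beta_t^2/\bigl(2\sigma_t^2\alpha_t(1-\bar{\alpha}_t)\bigr)$.

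There is no real obstacle; the argument is essentially bookkeeping. The only point where care is required is the motivation for the parameterisation itself: one must notice that $\mu_\theta$ has access to $\x_t$ and $t$ directly, so any explicit dependence on $\x_t$ inside $\tilde{\mu}_t$ is redundant to learn. Absorbing the $\x_t$ part analytically and asking the network only to predict the unknown residual noise $\tilde{\epsilon}_t$ via $\epsilon_\theta(\x_t, t)$ is therefore both natural and sufficient to recover the full $\mu_\theta$, which in turn makes the algebra above collapse to the clean noise-matching objective. Dropping the constant $C$ and the weight $\beta_t^2/(2\sigma_t^2\alpha_t(1-\bar{\alpha}_t))$ (as in the simplified training objective of \cite{ho2020denoising}) then recovers the familiar $\norm{\tilde{\epsilon}_t - \epsilon_\theta(\x_t, t)}^2$ loss used in practice, but that simplification lies outside what the theorem formally asserts.
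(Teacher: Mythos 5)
Your proposal is correct and follows exactly the paper's own argument: substitute Corollary~\ref{coro:miu} for $\tilde{\mu}_t$ into the preceding lemma's expression for $L_{t-1}$, plug in the stated parameterisation of $\mu_\theta$, cancel the $\x_t$ terms, and factor out $\beta_t/(\sqrt{\alpha_t}\sqrt{1-\bar{\alpha}_t})$ to obtain the stated coefficient. The additional remarks on why the parameterisation is natural and on the simplified objective are consistent with the paper's surrounding discussion.
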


\begin{proof}
    \begin{align*}
        L_{t-1} &= \frac{1}{2\sigma_t^2}\norm{\tilde{\mu}_{t}(\x_t,\x_0) - \mu_{\theta}(\x_t, t)}^2 + C \\
        &= \frac{1}{2\sigma_t^2}\norm{\frac{1}{\sqrt{\alpha_t}} \left( \x_t - \frac{\beta_t}{\sqrt{1 - \bar{\alpha}_t}}\tilde{\epsilon}_t \right) - \frac{1}{\sqrt{\alpha_t}} \left( \x_t - \frac{\beta_t}{\sqrt{1 - \bar{\alpha}_t}} \epsilon_{\theta}(\x_t, t) \right)}^2 + C \\
        &= \frac{\beta_t^2}{2\sigma_t^2\alpha_t(1 - \bar{\alpha}_t)}\norm{\tilde{\epsilon}_t - \epsilon_{\theta}(\x_t, t)}^2 + C
    \end{align*}
\end{proof}

The simplified loss of the reparameterised model is:
\begin{align*}
    \mathcal{L'}(\theta) = \E_{t \sim (1, T],\x_0 \sim q(\x_0),\tilde{\epsilon}_t \sim \mathcal{N}(0, \I)} \left[ \norm{\tilde{\epsilon}_t - \epsilon_{\theta}(\sqrt{\bar{\alpha}_t}\x_0 + \sqrt{1 - \bar{\alpha}_t}\tilde{\epsilon}_t, t)}^2 \right]
\end{align*}

Derivation
\begin{align*}
    &\E_{\x_0 \sim q(\x_0)} \left[ -log~p_{\theta}(\x_0) \right] \\
    &\leq \E_{\x_0 \sim q(\x_0),\x_{1:T}\sim q(\cdot|\x_0)} \left[ L_T + \sum_{t>1} L_{t-1} - L_0 \right] \\
    & \propto \E_{\x_0 \sim q(\x_0),\x_{1:T}\sim q(\cdot|\x_0)} \left[ \sum_{t>1} L_{t-1} \right] \\
    &= \E_{\x_0 \sim q(\x_0),\x_{1:T}\sim q(\cdot|\x_0),\tilde{\epsilon}_t \sim \mathcal{N}(0, \I)} \left[ \sum_{t>1} \left( \frac{\beta_t^2}{2\sigma_t^2\alpha_t(1 - \bar{\alpha}_t)}\norm{\tilde{\epsilon}_t - \epsilon_{\theta}(\x_t, t)}^2 + C \right) \right] \\
    & \propto \E_{\x_0 \sim q(\x_0),\x_{1:T}\sim q(\cdot|\x_0),\tilde{\epsilon}_t \sim \mathcal{N}(0, \I)} \left[ \sum_{t>1} \norm{\tilde{\epsilon}_t - \epsilon_{\theta}(\x_t, t)}^2 \right] \\
    &= \E_{t \sim (1, T],\x_0 \sim q(\x_0),\tilde{\epsilon}_t \sim \mathcal{N}(0, \I)} \left[ \norm{\tilde{\epsilon}_t - \epsilon_{\theta}(\x_t, t)}^2 \right] \text{   recall } \x_t=\sqrt{\bar{\alpha}_t}\x_0 + \sqrt{1 - \bar{\alpha}_t}\tilde{\epsilon}_t \\
    &= \E_{t \sim (1, T],\x_0 \sim q(\x_0),\tilde{\epsilon}_t \sim \mathcal{N}(0, \I)} \left[ \norm{\tilde{\epsilon}_t - \epsilon_{\theta}(\sqrt{\bar{\alpha}_t}\x_0 + \sqrt{1 - \bar{\alpha}_t}\tilde{\epsilon}_t, t)}^2 \right] \\
    &\triangleq \mathcal{L'}(\theta)
\end{align*}

 In this formulation, $\x_t$ is generated by corrupting the clean sample $\x_0$ with noise $\tilde{\epsilon}_t$ according to the known forward diffusion process. By minimising $\mathcal{L'}(\theta)$, the network $\epsilon_{\theta}$ learns to undo the full noise added to $\x_0$, which resulted in $\x_t$, by conditioning on this noisy sample and the timestep $t$. Notably, each training step samples a single timestep $t$ randomly, unlike at generation time, where all timesteps must be sequentially traversed.

 \paragraph{Generation at test time}
Once the model $\epsilon_{\theta}$ has been trained, sampling a new data point proceeds by iteratively applying the learnt reverse denoising process, starting from pure Gaussian noise $\x_T \sim \mathcal{N}(0, \I)$. At each timestep $t$, the model predicts the full noise component $\epsilon_{\theta}(\x_t, t)$, and a sample $x_{t-1}$ from the learnt posterior $p_{\theta}(\x_{t-1}|\x_t)$ is obtained using the following update rule:
\begin{align*}
    \x_{t-1} = \frac{1}{\sqrt{\alpha_t}} \left( \x_t - \frac{\beta_t}{\sqrt{1-\bar{\alpha}_t}} \, \epsilon_{\theta}(\x_t, t) \right) + \sigma_t \z,
\end{align*}
where $\mathbf{e} \sim \mathcal{N}(0, {\I})$ is standard Gaussian noise and $\sigma_t$ is the standard deviation corresponding to timestep $t$. This is because $p_{\theta}(\x_{t-1}|\x_t)$ is a Gaussian distribution, trained to approximate a true posterior with $\tilde{\mu}_{t}(\x_t,\x_0) = \frac{1}{\sqrt{\alpha_t}} \left( \x_t - \frac{\beta_t}{\sqrt{1 - \bar{\alpha}_t}}\tilde{\epsilon}_t \right)$, as demonstrated in Corollary~\ref{coro:miu}. This procedure is repeated sequentially for $t = T, T-1, \dots, 1$ (i.e. 1000 steps).

\subsection{Denoising Diffusion Implicit Model (DDIM)}

\begin{figure}
    \centering
    \includegraphics[width=0.6\linewidth]{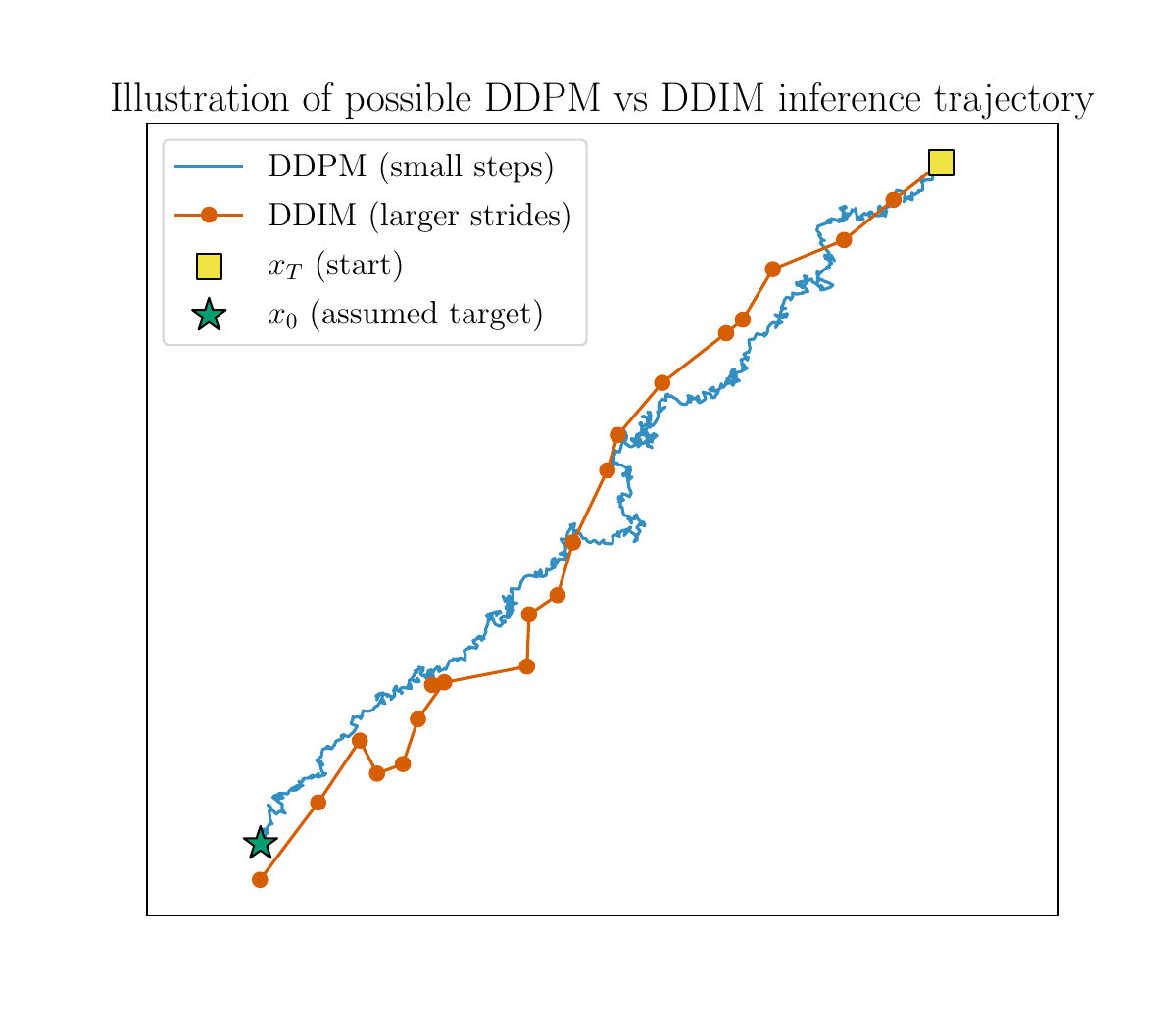}
    \caption[Illustration of the denoising trajectories of DDIM and DDPM]{Comparison between the denoising trajectories of DDPM and DDIM. DDPM follows a noisy path with small incremental steps, eventually reaching \( x_0 \). In contrast, DDIM takes larger strides, which can lead to divergence from the true data distribution. This figure is only meant to illustrate the differences.}
    \label{fig:enter-label}
\end{figure}

While DDPMs achieve impressive generative performance, their generation procedure is extremely slow. To obtain a high-quality sample with no noise, a large number of sequential denoising steps (typically $T = 1000$) is required. Each step introduces only a small amount of denoising, meaning that hundreds of iterative updates are necessary to progressively remove noise from the initial Gaussian input.

Denoising Diffusion Implicit Models (DDIMs)~\cite{song2020denoising} address this inefficiency by reformulating the sampling process. The key idea is to traverse the learnt denoising trajectory in larger strides, reducing the number of steps needed during generation without significantly sacrificing sample quality.

Intuitively, at each timestep $t$, the model $\epsilon_{\theta}$ predicts the full noise component in $\x_t$. Rather than taking a small stochastic step from $\x_t$ to $\x_{t-1}$ (as in DDPM), DDIMs interpret $\epsilon_{\theta}(\x_t, t)$ as providing the direction towards the true data subspace\footnote{This is also referred to as the data manifold, where samples from the real distribution lie on a surface within the high-dimensional space of all possible data points.}, and then deterministically take a larger step in that direction, yet smaller than the full denoising that could be inferred from the neural network output.

\paragraph{Sampling trade-offs}
By choosing a suitable number of sampling steps (often $\approx50$ instead of $1000$), DDIM significantly accelerates the generation process. However, larger stride steps imply that the denoising trajectory might deviate from the true data and introduce artefacts.

\subsection{Latent Diffusion Model (LDM)}

Training denoising diffusion probabilistic models (DDPMs)~\cite{ho2020denoising} directly on high-resolution images, such as those of size $512 \times 512 \times 3$, is prohibitively expensive in terms of computational resources. Latent Diffusion Models (LDMs)~\cite{rombach2022high} offer a practical solution by performing the generative process within a lower-dimensional latent space. Typically, images are encoded into a latent representation of size $64 \times 64 \times 4$, significantly reducing memory and compute requirements while retaining essential semantic and structural information.

This dimensionality reduction is achieved through a pre-trained Variational Autoencoder (VAE)~\cite{kingma2013auto}, where the encoder compresses the input image into a latent vector $\z_0$, and the decoder reconstructs it back into pixel space. During the diffusion model's training, the VAE parameters are kept fixed, ensuring the latent space remains stable and semantically meaningful. The diffusion model is trained to denoise latent representations rather than raw pixel data, enabling faster and more scalable training. At the same time, decoded samples remain visually realistic and consistent with the data distribution.

\paragraph{Base architecture}

The core architecture of the latent diffusion model is a U-Net~\cite{ronneberger2015u}, introduced initially for biomedical image segmentation. The U-Net comprises a contracting path, which captures contextual features at various spatial resolutions, and a symmetric expanding path, which supports precise spatial reconstruction via skip connections. This hierarchical structure makes U-Nets especially effective for modelling the complex dependencies between features in natural images. In modern latent diffusion models, this U-Net is further improved with attention layers~\cite{vaswani2017attention}, which enable the model to selectively focus on relevant spatial and semantic regions within the latent input, encouraging a global receptive field.

\paragraph{Controllable generation}

To enable conditional generation, LDMs incorporate additional information, such as class labels, textual descriptions, or visual cues, during training and inference. This is achieved by extending the diffusion model's loss function to depend on the noisy latent and a conditioning signal $C$. Formally, the training loss becomes:

\begin{corollary}
    The simplified loss of the conditional latent diffusion model is given by
    \begin{align*}
        \mathcal{L''}(\theta) = \mathbb{E}_{t \sim [1, T], \z_0 \sim q(\z_0), \tilde{\epsilon}_t \sim \mathcal{N}(0, \I)} \left[ \left\| \tilde{\epsilon}_t - \epsilon_{\theta}\left(\sqrt{\bar{\alpha}_t} \z_0 + \sqrt{1 - \bar{\alpha}_t} \tilde{\epsilon}_t, t, C \right) \right\|^2 \right],
    \end{align*}
\end{corollary}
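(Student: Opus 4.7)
The plan is to mirror the DDPM derivation already established in the excerpt, applied in the latent space of the pre-trained VAE and with the conditioning signal $C$ carried through as an additional argument to the noise predictor. Since the VAE parameters are frozen, the encoder output $\z_0$ can be treated as a fixed random variable drawn from $q(\z_0)$ (the push-forward of the data distribution through the encoder), and the forward diffusion process is defined identically on $\z$ as it was on $\x$, namely $q(\z_t|\z_{t-1}) = \mathcal{N}(\z_t;\sqrt{1-\beta_t}\,\z_{t-1},\beta_t\I)$, with the same schedule $\{\beta_t\}$. This means every structural property derived earlier, in particular $\z_t = \sqrt{\bar{\alpha}_t}\,\z_0 + \sqrt{1-\bar{\alpha}_t}\,\tilde{\epsilon}_t$ and the Gaussian form of $q(\z_{t-1}|\z_t,\z_0)$ in Theorem~\ref{theorem:miu_theta}, transfers verbatim.

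Next, I would introduce the conditioning by defining the learnt reverse process as $p_\theta(\z_{t-1}|\z_t,C) = \mathcal{N}(\z_{t-1};\mu_\theta(\z_t,t,C),\sigma_t^2\I)$, so that $C$ enters only through the mean predictor. The ELBO derivation proceeds exactly as before: starting from $\log p_\theta(\z_0|C)$, applying the same importance-sampling and Jensen's inequality argument with the forward posterior $q(\z_{1:T}|\z_0)$ (which does not depend on $C$, since diffusion is data-independent), one obtains the same telescoping decomposition into $L_T$, $\sum_{t>1}L_{t-1}$, and $L_0$. The middle terms become $L_{t-1} = \frac{1}{2\sigma_t^2}\norm{\tilde{\mu}_t(\z_t,\z_0) - \mu_\theta(\z_t,t,C)}^2 + C'$, just as in the unconditional case.

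Finally, I would apply the reparameterisation of Corollary~\ref{coro:miu} to express $\tilde{\mu}_t$ in terms of $\tilde{\epsilon}_t$, and parameterise $\mu_\theta(\z_t,t,C)$ analogously through $\epsilon_\theta(\z_t,t,C)$. Dropping the weighting factor $\frac{\beta_t^2}{2\sigma_t^2\alpha_t(1-\bar{\alpha}_t)}$ and the constant (as done in the simplified DDPM loss), and substituting $\z_t = \sqrt{\bar{\alpha}_t}\,\z_0 + \sqrt{1-\bar{\alpha}_t}\,\tilde{\epsilon}_t$, yields exactly the stated $\mathcal{L''}(\theta)$.

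The main subtlety, rather than a genuine obstacle, is justifying that $C$ can be appended to the network without disturbing the ELBO machinery. Since the forward process $q$ is independent of $C$ and only the reverse model is conditioned, $C$ is effectively a constant parameter of the generative model from the perspective of the variational bound; all expectations and KL terms factorise through as before. Once this is observed, the corollary follows by renaming $\x \mapsto \z$ and appending $C$ to the arguments of $\epsilon_\theta$ throughout the earlier derivation.
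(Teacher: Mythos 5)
Your proposal is correct and follows essentially the same route the paper relies on: the paper states this corollary as a direct transfer of the earlier simplified DDPM loss $\mathcal{L'}(\theta)$ to the frozen-VAE latent space, with the conditioning signal $C$ simply appended to the arguments of $\epsilon_\theta$, which is exactly the substitution-and-renaming argument you spell out (including the correct observation that the forward process, and hence the variational bound's structure, is unaffected by $C$).
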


\noindent
where $\z_t = \sqrt{\bar{\alpha}_t} \z_0 + \sqrt{1 - \bar{\alpha}_t} \tilde{\epsilon}_t$ represents the noisy latent representation at time step $t$. The model learns to predict the noise $\tilde{\epsilon}_t$ added to the clean latent $\z_0$, conditioned on $C$.

More advanced conditioning techniques, such as ControlNet~\cite{zhang2023controlnet}, have been proposed to enhance controllability. These models introduce conditioning signals, such as edge maps, keypoints, or segmentation masks, at multiple levels in the decoder of the U-Net architecture. By injecting information at various spatial resolutions, ControlNet allows for fine-grained manipulation of specific visual attributes, such as object pose, layout, or structural detail, during the generation process.

Overall, latent diffusion offers a scalable and flexible framework for high-resolution image synthesis, with support for structured conditioning and fine-grained control through simple mechanisms.

\subsection{Diffusion Inpainting}

\begin{figure}[ht]
    \centering
    \includegraphics[width=0.8\linewidth]{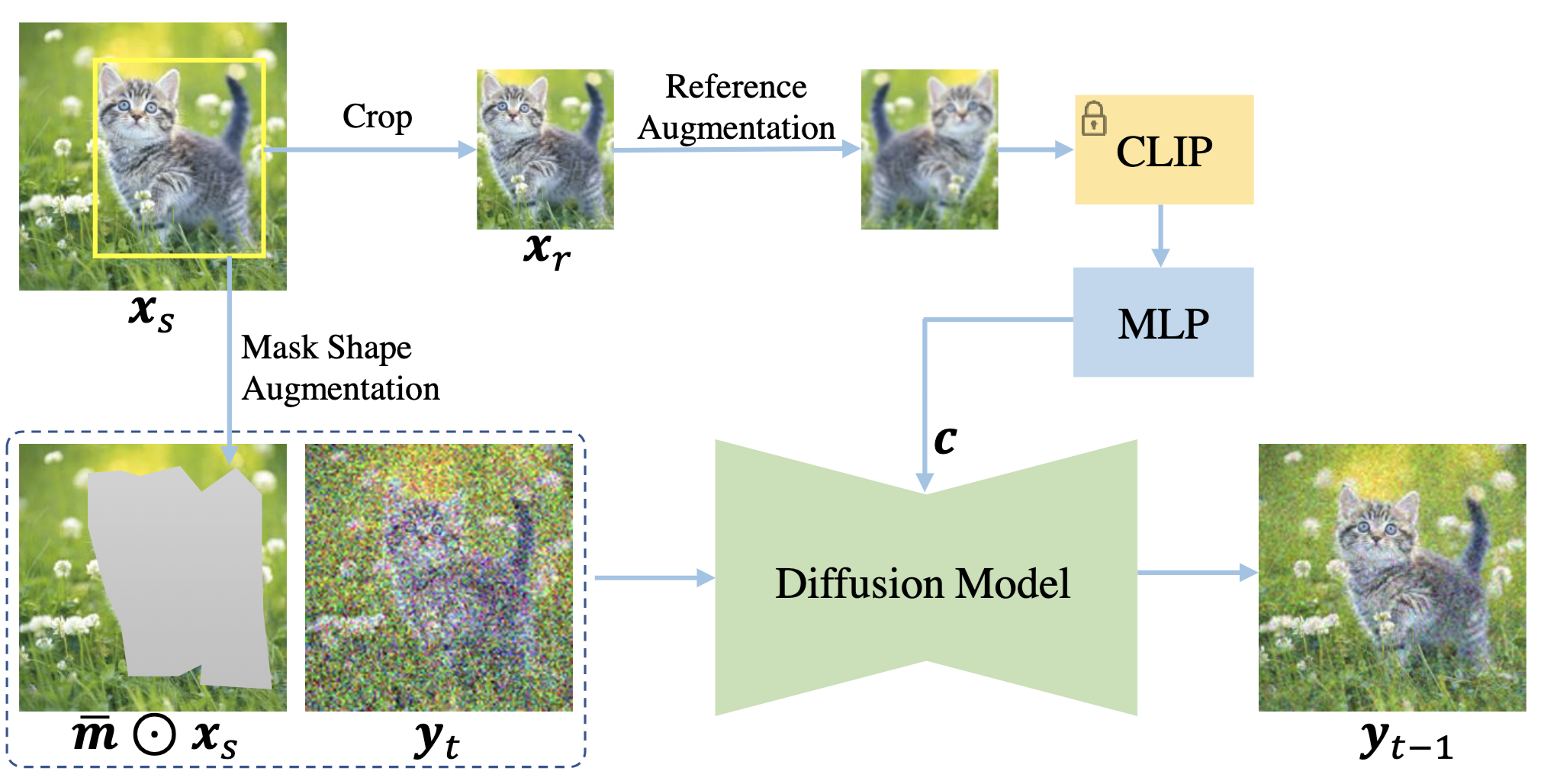}
    \caption[Architecture and training pipeline of Paint-by-Example.]{Architecture and training pipeline of Paint-by-Example~\cite{yang2023paint}.}
    \label{fig:pbe}
\end{figure}

\paragraph{Inpainting}
Inpainting is the task of reconstructing missing or occluded regions of an image in a semantically coherent and visually plausible manner. Within the diffusion framework, this is achieved by predicting suitable content for masked regions through iterative denoising of a noisy latent representation, typically conditioned on the unmasked context. The image with a binary mask applied is first projected into a latent space using a pre-trained VAE. The diffusion model is then trained to synthesise the complete latent representation, which is decoded into the image space.

\paragraph{Compositing}
While traditional inpainting involves the restoration of missing parts without external guidance, compositing—referred to here as \emph{reference-guided inpainting} in this work—involves the integration of visual content from an external source. In this setting, the goal is to fill in masked regions and insert an object or concept derived from a separate reference image. This requires generating content contextually consistent with the destination scene and visually aligned with the provided reference.

\paragraph{Paint-by-Example}

The Paint-by-Example (PBE)~\cite{yang2023paint} framework extends image inpainting by conditioning the diffusion process on a semantic encoding of a reference image. During training, an object is removed from the destination scene, and a masked latent is constructed. The diffusion model is trained to reinsert the missing object by denoising the latent representation of the composited image while being conditioned on features extracted from the reference image and the latent representation of the image context.

An informational bottleneck is imposed on the reference encoding to promote generalisation and semantic transfer. This prevents the model from memorising low-level details and encourages learning abstract, transferable representations. As a result, the model learns to synthesise context-aware insertions that blend naturally into the scene. An overview of the Paint-by-Example training setup, including the conditioning mechanism, is provided in~\autoref{fig:pbe}.

\chapter{MObI: Multimodal Object Inpainting Using Diffusion Models}
\label{chaper:mobi}

This chapter is based on the following first-authored peer-reviewed publication~\cite{buburuzan2025mobi};

\begin{quote} \textbf{Buburuzan, A.}, Sharma, A., Redford, J., Dokania, P.K., and Mueller, R. (2025). \textit{MObI: Multimodal Object Inpainting Using Diffusion Models}. In Proceedings of the Computer Vision and Pattern Recognition Conference Workshops (CVPRW) (pp. 1974-1984). \end{quote}

\newpage

\section{Introduction}
Extensive multimodal data, including camera and lidar, is crucial for the safe testing and deployment of autonomous driving systems. 
However, collecting large amounts of multimodal data in the real world can be prohibitively expensive because rare but high-severity failures have an outstripped impact on the overall safety of such systems~\cite{koopman2016challenges}.
Synthetic data offers a way to address this problem by allowing the generation of diverse safety-critical situations before deployment. Still, existing methods often fall short either by lacking controllability or realism.

For example, reference-based image inpainting methods~\cite{yang2023paint, chen2023anydoor, ruiz2024magicinsertstyleawaredraganddrop,kulal2023puttingpeopleplaceaffordanceaware} can produce realistic samples that seamlessly blend into the scene using a single reference, but they often lack precise control over the 3D positioning and orientation of the inserted objects.
In contrast, methods based on actor insertion using 3D assets~\cite{wang2023cadsim, chang2024just, zhou2023scene, wei2024editable, chen2021geosim, lin2024drive, multitest, li2023lift3d} provide a high degree of control---enabling precise object placement in the scene---but often struggle to achieve realistic blending and require high-quality 3D assets, which can be challenging to produce. Similarly, reconstruction methods~\cite{prism1bywayve,tonderski2024neurad, yang2023unisim} are also highly controllable but require almost full coverage of the inserted actor.
Some of these shortcomings are illustrated in \autoref{fig:failure modes of pbe}.
More recent methods have explored 3D geometric control for image editing~\cite{wang2025diffusion, wu2024neural, yenphraphai2024image, pandey2024diffusion, michel2024object, yuan2023customnet}. However, none consider multimodal generation, which is crucial in autonomous driving.

Recent advancements in controllable full-scene generation in autonomous driving for multiple cameras~\cite{gao2023magicdrive, li2023drivingdiffusion, wen2023panacea, su2024text2street, huang2024subjectdrivescalinggenerativedata, drivescape}, and lidar~\cite{lidardiffusion, lidargen, hu2024rangeldmfastrealisticlidar, xiong2023ultralidarlearningcompactrepresentations, bian2024dynamiccitylargescalelidargeneration,xie2024x} have led to impressive results. 
However, generating full scenes can create a large domain gap, especially for downstream tasks such as object detection, making it challenging to generate realistic counterfactual examples.
For this reason, works such as GenMM~\cite{singh2024genmm} have focused instead on camera-lidar object inpainting using a multi-stage pipeline.
This work takes a similar approach, but proposes an end-to-end method that generates camera and lidar jointly.

\noindent The contributions of this work are:
\begin{itemize}
    \item A multimodal inpainting approach for joint camera-lidar editing using a single reference image.
    \item Conditioning the object inpainting process on a 3D bounding box to ensure accurate spatial placement.
    \item Demonstrating the generation of realistic and controllable multimodal counterfactuals of driving scenes.
\end{itemize}

\begin{figure*}[htbp]
    \centering
    \footnotesize
    \begin{minipage}{0.57\textwidth}
    \begin{tabularx}{0.96\columnwidth}{@{}>{\centering\arraybackslash}X>{\centering\arraybackslash}X>{\centering\arraybackslash}X>{\centering\arraybackslash}X@{}}
        Reference & Edit mask & PbE~\cite{yang2023paint} & MObI \\
    \end{tabularx}
    \includegraphics[width=\columnwidth]{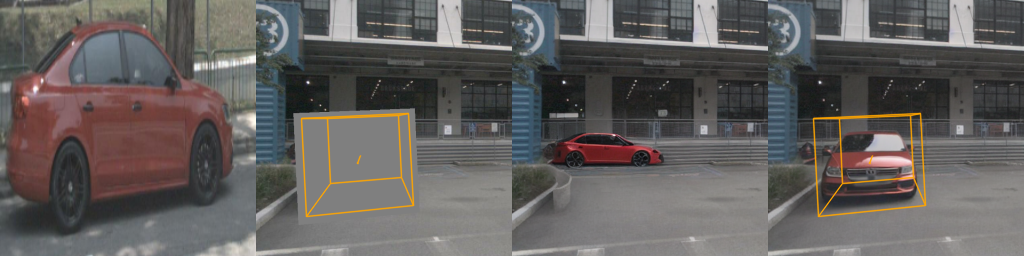}
    \includegraphics[width=\columnwidth]{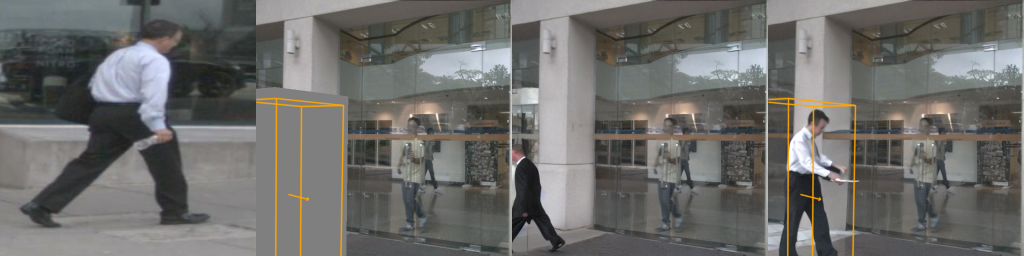} \\
    \centering Object inpainting
    \end{minipage}
    \footnotesize
    \begin{minipage}{0.4\textwidth}
    \centering
    \vspace{11pt}
    \begin{tabular}{r@{\hspace{4pt}}c@{\hspace{2pt}}c@{\hspace{4pt}}l}
        \rotatebox{90}{\qquad{Original}} & 
        \includegraphics[width=0.35\columnwidth]{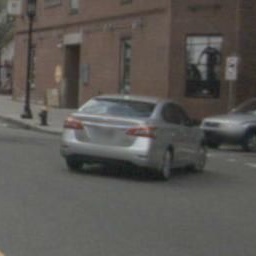} &
        \includegraphics[width=0.35\columnwidth]{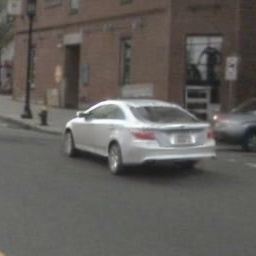} & 
        \rotatebox{90}{\qquad{PbE~\cite{yang2023paint}}} \\
        \rotatebox{90}{{\quad NeuRAD~\cite{tonderski2024neurad}}} & 
        \includegraphics[width=0.35\columnwidth]{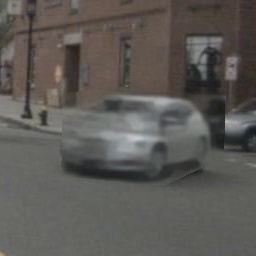} &
        \includegraphics[width=0.35\columnwidth]{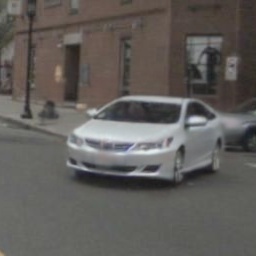} & 
        \rotatebox{90}{\qquad\quad{MObI}} \\
    \end{tabular}
    \centering Object 180$^\circ$ flip\\
    \end{minipage}
    \caption[]{The proposed method can inpaint objects with a high degree of realism and controllability. Left: object inpainting methods based on edit masks alone such as Paint-by-Example~\cite{yang2023paint} (PbE) achieve high realism but can lead to surprising results because there are often multiple semantically consistent ways to inpaint an object within a scene.
    Right: methods based on 3D reconstruction such as NeuRAD~\cite{tonderski2024neurad} have strong controllability but sometimes lead to low realism, especially for object viewpoints that have not been observed.
    The proposed method achieves both high semantic consistency and controllability of the generation.
    }
    \label{fig:failure modes of pbe}
\end{figure*}

\section{Related work}

Multimodal data is crucial for ensuring safety in autonomous driving, and most state-of-the-art perception systems employ a sensor fusion approach, particularly for tasks like 3D object detection~\cite{liang2022bevfusionsimplerobustlidarcamera, liu2023bevfusion, gunn2024liftattendsplatbirdseyeviewcameralidarfusion}. However, testing and developing such safety-critical systems require vast amounts of data, which is costly and time-consuming to obtain in the real world. Consequently, there is a growing need for simulated data, enabling models to be tested efficiently without requiring on-road vehicle testing.

\paragraph{Copy-and-paste} Early efforts in synthetic data generation relied on copy-and-paste methods. For example, \cite{georgakis2017synthesizing} used depth maps for accurate scaling and positioning when inserting objects, while later approaches like~\cite{dwibedi2017cut} focused on achieving patch-level realism through blending, improving 2D object detection. A more straightforward approach, presented by~\cite{ghiasi2021simple}, naïvely pastes objects into images without blending and demonstrates its efficacy in improving image segmentation.
In autonomous driving, PointAugmenting~\cite{wang2021pointaugmenting} extends this copy-and-paste approach to both camera and lidar data to enhance 3D object detection. Building on the lidar GT-Paste method~\cite{yan2018second}, it incorporates ideas from CutMix augmentation~\cite{yun2019cutmix} while ensuring multimodal consistency. This method addresses scale mismatches and occlusions by utilising the lidar point cloud for guidance during the insertion process. Similarly, MoCa~\cite{zhang2020exploring} employs a segmentation network to extract source objects before insertion, instead of directly pasting entire patches. Geometric consistency in monocular 3D object detection has also been explored in~\cite{lian2022exploring}. While these methods improve object detection and mitigate class imbalance, their compositing strategy leads to unrealistic blending, especially in image space. Furthermore, they lack controllability, such as the ability to adjust the position and orientation of inserted objects, limiting their utility for testing.

\paragraph{Full scene generation} Recent advancements in conditional full-scene generation have yielded impressive results. BEVControl~\cite{yang2023bevcontrol} uses a two-stage method (controller and coordinator) to generate scenes conditioned on sketches, ensuring accurate foreground and background content. Text2Street~\cite{su2024text2street} combines bounding box encoding with text conditions, employing a ControlNet-like~\cite{zhang2023controlnet} architecture for guidance. DrivingDiffusion~\cite{li2023drivingdiffusion} represents bounding boxes as layout images passed as an extra channel in the U-Net~\cite{ronneberger2015u}. MagicDrive~\cite{gao2023magicdrive} incorporates bounding boxes and camera parameters alongside text conditions for full-scene generation, with a cross-view attention module leveraging BEV layouts. SubjectDrive~\cite{huang2024subjectdrivescalinggenerativedata} generates camera videos conditioned on the appearance of foreground objects. LiDM~\cite{lidardiffusion} focuses on lidar scene generation conditioned on semantic maps, text, and bounding boxes. DriveScape~\cite{drivescape} introduces a method to generate multi-view camera videos conditioned on 3D bounding boxes and maps using a bi-directional modulated transformer for spatial and temporal consistency.

Synthetic lidar data generation has also advanced significantly. LidarGen~\cite{lidargen} and LiDM~\cite{lidardiffusion} employ diffusion for lidar generation, with the latter also incorporating semantic maps, bounding boxes, and text. UltraLidar~\cite{xiong2023ultralidarlearningcompactrepresentations} densifies sparse lidar point clouds, while RangeLDM~\cite{hu2024rangeldmfastrealisticlidar} accelerates lidar data generation by converting point clouds into range images using Hough sampling and enhancing reconstruction through a range-guided discriminator. DynamicCity~\cite{bian2024dynamiccitylargescalelidargeneration} generates lidar sequences conditioned on dynamic scene layouts, and~\cite{XIANG2024105207} generates object-level lidar data, demonstrating its benefits for object detection. However, these works do not jointly generate camera and lidar data, and full-scene generation can result in a large domain gap, particularly for downstream tasks like object detection, making it challenging to create realistic counterfactuals.

\paragraph{Multimodal object inpainting} GenMM~\cite{singh2024genmm} represents a new direction in multimodal object inpainting using a multi-stage pipeline that ensures temporal consistency. However, it remains limited in controllability, requiring the reference to closely align with the insertion angle. Furthermore, it does not generate lidar and camera modalities jointly; instead, it focuses on geometric alignment while excluding lidar intensity values. This work takes a similar approach, but proposes an end-to-end method that jointly generates camera and lidar data for reference-guided multimodal object inpainting. The proposed method achieves realistic and consistent multimodal outputs across diverse object angles.

\section{Method}
\label{sec:method}

\begin{figure*}[t]
  \centering
  \includegraphics[width=\linewidth]{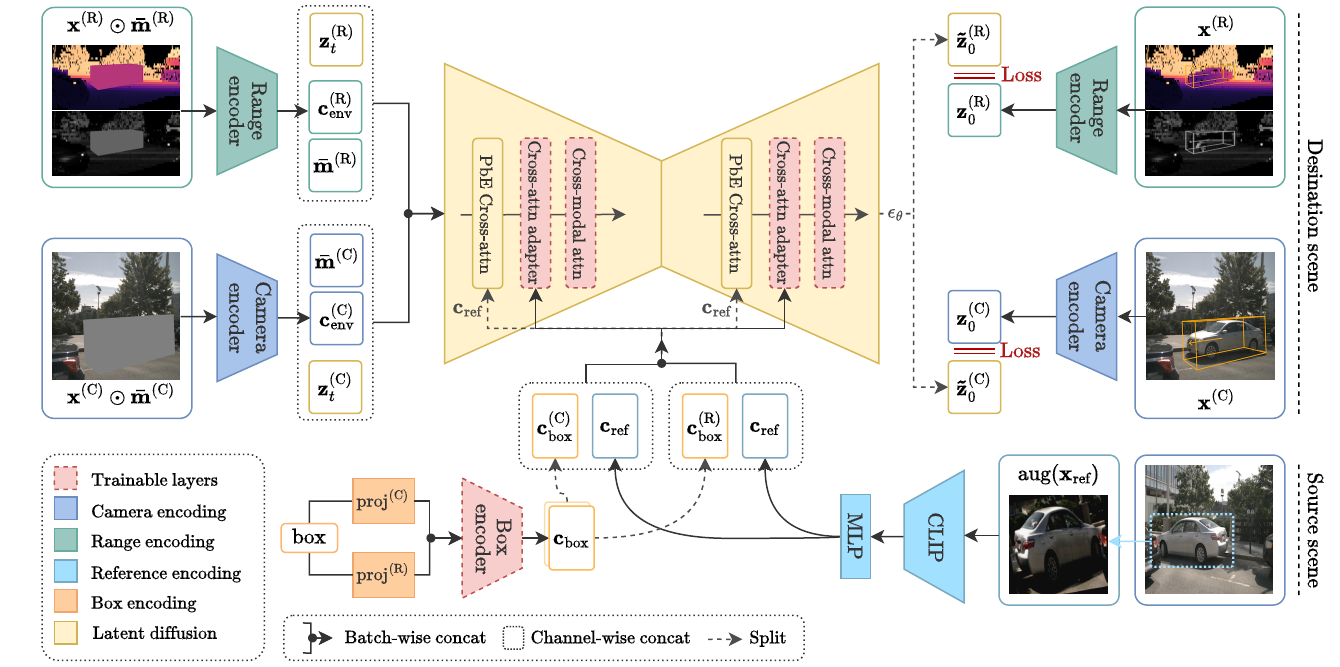}
  \caption{MObI architecture and training procedure.}
  \label{fig:architecture}
\end{figure*}

This work extends Paint-by-Example~\cite{yang2023paint} (PbE), a reference-based image inpainting method, to include bounding box conditioning and to jointly generate camera and lidar perception inputs. A diffusion model~\cite{rombach2022high, ho2020denoising, sohl2015deep} is trained using the architecture illustrated in~\autoref{fig:architecture}, where the denoising process is conditioned on the latent representations of the camera and lidar range view contexts ($\mathbf{c}^{\text{(R)}}_{\text{env}}$ and $\mathbf{c}^{\text{(C)}}_{\text{env}}$), the RGB object reference $\mathbf{c}_{\text{ref}}$, a per-modality projected 3D bounding box conditioning ($\mathbf{c}_{\text{box}}^{\text{(R)}}$ and $\mathbf{c}_{\text{box}}^{\text{(C)}}$) and the complement of the edit mask targets ($\bar{\mathbf{m}}^{\text{(C)}}$ and $\bar{\mathbf{m}}^{\text{(R)}}$). 
The diffusion model \( \epsilon_\theta \) is trained in a self-supervised manner as in~\cite{yang2023paint} to predict the full scene based on the masked-out inputs.
More formally, the model predicts the total noise added to the latent representation of the scene \(\{ \mathbf{z}_0^{\text{(R)}}, \mathbf{z}_0^{\text{(C)}} \} \) using the loss
\begin{align*}
  \mathcal{L} = \mathbb{E}_{\mathbf{z}^{\text{(R)}}_0, \mathbf{z}^{\text{(C)}}_0, t, \mathbf{c}, \epsilon \sim \mathcal{N}(0, 1)} 
  \left[ \left\| \epsilon - \epsilon_{\theta}(\mathbf{z}^{\text{(R)}}_t, \mathbf{z}^{\text{(C)}}_t, \mathbf{c}, t) \right\|^2 \right],
\end{align*}
where $\mathbf{c} = \{ \mathbf{c}^{\text{(R)}}_{\text{env}}, \mathbf{c}^{\text{(C)}}_{\text{env}}, \mathbf{c}_{\text{ref}}, \mathbf{c}_{\text{box}}^{\text{(R)}}, \mathbf{c}_{\text{box}}^{\text{(C)}}, \bar{\mathbf{m}}^{\text{(R)}}, \bar{\mathbf{m}}^{\text{(C)}}\}$.
The input of the UNet-style network~\cite{ronneberger2015u} is the noised sample ($\mathbf{z}_t^{\text{(R)}}$ and $\mathbf{z}_t^{\text{(C)}}$) at step \( t \), concatenated channel-wise with the latent representation of the scene context and its corresponding edit mask, resized to the latent dimension.

\subsection{Image processing and encoding}
\label{sec:method:multimodal encoding}

The model is trained to insert an object from a source scene with image $ I_s \in \mathbb{R}^{H \times W \times 3}$ and bonding box $ \text{box}_s \in \mathbb{R}^{8 \times 3}$, into a destination scene with corresponding camera image $ I_d \in \mathbb{R}^{H \times W \times 3}$ and annotation bounding box $ \text{box}_d \in \mathbb{R}^{8 \times 3}$. During training, these bounding boxes correspond to the same object at different timestamps, while at inference, they can be chosen arbitrarily. The bounding boxes from the source and destination scenes, $ \text{box}_s, \text{box}_d \in \mathbb{R}^{8 \times 3}$, are projected onto the image space using the respective camera transformations:
\[
\text{box}_s^{\text{(C)}} = \mathbf{T}_s^{\text{(C)}} \cdot \text{box}_s \in \mathbb{R}^{8 \times 2}, \quad 
\text{box}_d^{\text{(C)}} = \mathbf{T}_d^{\text{(C)}} \cdot \text{box}_d \in \mathbb{R}^{8 \times 2}.
\] 

Following the zoom-in strategy of AnyDoor~\cite{chen2023anydoor},  \( I_d \) is cropped and resized to \( \mathbf{x}^{\text{(C)}} \in \mathbb{R}^{D \times D \times 3} \), centering it around \( \text{box}_d^{\text{(C)}} \), in such a way that the projected bounding box covers at least 20\% of the area. The same viewport transformation is applied to \( \text{box}_d^{\text{(C)}} \).
Following PbE~\cite{yang2023paint}, the image \( \mathbf{x}^{\text{(C)}} \) is encoded using the pre-trained VAE~\cite{kingma2013auto} from StableDiffusion~\cite{rombach2022high}, obtaining the latent \( \mathbf{z}_0^{\text{(C)}} = \mathcal{E}^{\text{(C)}}(\mathbf{x}^{\text{(C)}}) \). Similarly, the latent representation of the camera context is computed as \( \mathbf{c}^{\text{(C)}}_{\text{env}} = \mathcal{E}^{\text{(C)}}(\mathbf{x}^{\text{(C)}} \odot \bar{\mathbf{m}}^{\text{(C)}}) \), where \( \odot \) denotes element-wise multiplication. The edit region is defined by a binary mask $ \mathbf{m}^{\text{(C)}} \in \{0, 1\}^{D \times D} $, created by inpainting $ \text{box}_d^{\text{(C)}} $ onto an initially all-zero matrix, where the inpainted region is assigned values of 1. The complement of this mask is defined as:
\[
\bar{\mathbf{m}}^{\text{(C)}} = \mathbf{J} - \mathbf{m}^{\text{(C)}}, \quad \mathbf{J} \in \{1\}^{D \times D}.
\]

\subsection{Lidar processing and encoding}
Lidar (Light Detection and Ranging) is a sensing technology that uses laser beams to measure distances to surrounding objects. A lidar sensor performs a rapid 360-degree sweep of its environment, emitting laser pulses and recording the time it takes for each pulse to return. This process generates a point cloud, a collection of 3D points that capture the scene's geometry. Each point typically includes spatial coordinates $(x, y, z)$ and an intensity value corresponding to the reflected laser signal strength.

This work considers the lidar point cloud of the destination scene, $ P_d \in \mathbb{R}^{N \times 4} $, where $N$ represents the number of points and the four channels correspond to the $x, y, z$ coordinates and intensity values. The lidar points are projected onto a cylindrical view, so-called range view, $ R_d \in \mathbb{R}^{32 \times 1096 \times 2} $ using the transformation described below. This projection is essentially lossless, except for a small set of points at the boundary of the sweep. During the lidar capture, the point cloud forms a slightly twisted, helical structure rather than a perfect cylinder, for each beam, in the x and y axes. Due to motion compensation, the sensor attempts to correct for its motion during the sweep, effectively "morphing" the helical structure into a more cylindrical shape. However, this process causes points near the end of the sweep to drift and overlap with points from the beginning. When projecting onto a cylindrical range view, points collected at the end of the sweep may spatially overlap with points from the beginning, introducing minor occlusions in the projected view.

\paragraph{Point cloud to range view transformation}

For each point in \( P_d \), the depth (Euclidean distance from the sensor) is calculated as:
\[
d_i = \sqrt{x_i^2 + y_i^2 + z_i^2}.
\]
Points with depths outside the predefined range \( [1.4\text{m}, 54\text{m}] \) are filtered out. The yaw and pitch angles are then computed as:
\[
\text{yaw}_i = -\arctan2(y_i, x_i), \quad \text{pitch}_i = \arcsin\left(\frac{z_i}{d_i}\right).
\]

The beam pitch angles \( \{\theta_k\}_{k=1}^{H} \) are chosen as \( \theta_k = 0.0232 \cdot x_k \), where \( x_k \in \{-23, -22, \ldots, 8\} \), to best match the binning of the nuScenes~\cite{caesar2020nuscenes} lidar sensor's vertical beams and its field of view. Each point is assigned to the closest vertical beam based on its pitch angle, determining its $y_i$ vertical coordinate, an integer in the range $ [0, 31] $. 

The yaw angle is mapped to the horizontal coordinate \( x \) of the range view grid as:
\[
x_i = \left\lfloor \frac{\text{yaw}_i}{\pi} \cdot \frac{W}{2} + \frac{W}{2} \right\rfloor,
\]

The final range view representation \( R_d \) of the destination scene encodes depth and intensity for each point projected onto the \( H \times W \) grid, where \( H = 32 \) denotes the number of vertical beams, and \( W = 1096 \) represents the horizontal resolution. Unassigned pixels in the range view are set to a default value. Each point is mapped to a specific pixel coordinate in the range view. 

Again, note that the transformation is not injective, as some points overlap at the start and end of the lidar sweep due to motion compensation; however, this overlap has minimal impact. Additionally, the proposed processing technique store the original pitch and yaw values for each point assigned to a range view pixel in matrices \( R_d^{\text{yaw}} \in \mathbb{R}^{H \times W} \) and \( R_d^{\text{pitch}} \in \mathbb{R}^{H \times W} \), respectively. These matrices enhance the inverse transformation from range view to point cloud by preserving the unrasterised angular information.

\paragraph{Range view to range image processing}
The bounding box $ \text{box}_d $ is projected onto $ R_d $ using the coordinate-to-range transformation, resulting in $ \text{box}_d^{\text{(R)}} \in \mathbb{R}^{8 \times 3} $, while preserving the depth of each bounding box point.
To enhance the region of interest, a zoom-in strategy is employed, analogous to that used in the image processing, by cropping the range view width-wise around $ \text{box}_d^{\text{(R)}} $, resulting in a $ {32 \times W^{\text{(R)}} \times 2} $ object-centric range view, and resizing it to obtain the range image $ \mathbf{x}^{\text{(R)}} \in \mathbb{R}^{D \times D \times 2} $. The same viewport transformation is applied to the bounding box $\text{box}_d^{\text{(R)}}$.
The edit region is defined by a mask $ \mathbf{m}^{\text{(R)}} \in \{0, 1\}^{D \times D} $, which is created by inpainting the bounding box $\text{box}_d$ onto an initially all-zero matrix, where the inpainted region has values of 1. The complement of this mask is: 
\[
\bar{\mathbf{m}}^{\text{(R)}} = \left(\mathbf{J} - \mathbf{m}^{\text{(R)}}\right).
\]

\paragraph{Range image encoding}
This work adapts the pre-trained image VAE~\cite{kingma2013auto} of StableDiffusion~\cite{rombach2022high} to the lidar modality through a series of training-free adaptations and a fine-tuning step, ablated in \autoref{tab:lidar reconstruction}.

As a naïve solution to encode the lidar modality, the preprocessed range view $\mathbf{x}^{\text{(R)}} \in \mathbb{R}^{D \times D \times 2}$ is considered, duplicates the depth channel, and passes the resulting 3-channel representation through the image VAE~\cite{kingma2013auto}. After discarding one depth channel and resizing back to $32 \times W^{\text{(R)}} \times 2$ using nearest neighbour interpolation, it computes reconstruction errors using the lidar reconstruction metrics described in~\autoref{sec:suppl:method:lidar processing}. This approach results in unsatisfactory reconstruction errors.

To address this, this work proposes three cumulative adaptations that improve depth and intensity reconstruction for object points and the extended edit mask. First, it leverages the higher resolution of $\mathbf{x}^{\text{(R)}}$ by applying average pooling when downsizing, which serves as an error correction mechanism.

Next, it is observed that the reconstruction error of range pixel values is proportional to the interval size of their distribution. Since intensity values follow an exponential distribution, intensity values $i \in [0, 255]$ are normalised using the cumulative distribution function (CDF) of the exponential distribution, choosing $\lambda = 4$ experimentally:
\[
i' = 2e^{-\lambda\frac{i}{255}} - 1 \in [-1, 1]
\]

To enhance object-level depth reconstruction, depth normalisation is applied based on the minimum and maximum depth of $\text{box}_d^{\text{(R)}}$, which extends the interval in which the object depth values are distributed and, in turn, improves object reconstruction error:

\[
d' = 
\begin{cases} 
    -\alpha + 2\alpha \cdot \frac{d - \text{min}_d}{\text{max}_d - \text{min}_d} & \text{if } \text{min}_d \leq d \leq \text{max}_d \\
    -1 + (-(\alpha - 1)) \cdot \frac{d + 1}{\text{min}_d + 1} & \text{if } -1 \leq d < \text{min}_d \\
    \alpha + (1 - \alpha) \cdot \frac{d - \text{max}_d}{1 - \text{max}_d} & \text{if } \text{max}_d < d \leq 1 
\end{cases}
\]

where $d$ is the depth value, $\alpha$ controls range scaling, and $\text{min}_d$, $\text{max}_d$ define normalisation boundaries within $[-1, 1]$, see \autoref{fig:range_scaling}. Depth values are originally between $[1.4, 54]$, but are linearly normalised to $[-1, 1]$.

\begin{figure}
    \centering
    \includegraphics[width=0.7\linewidth]{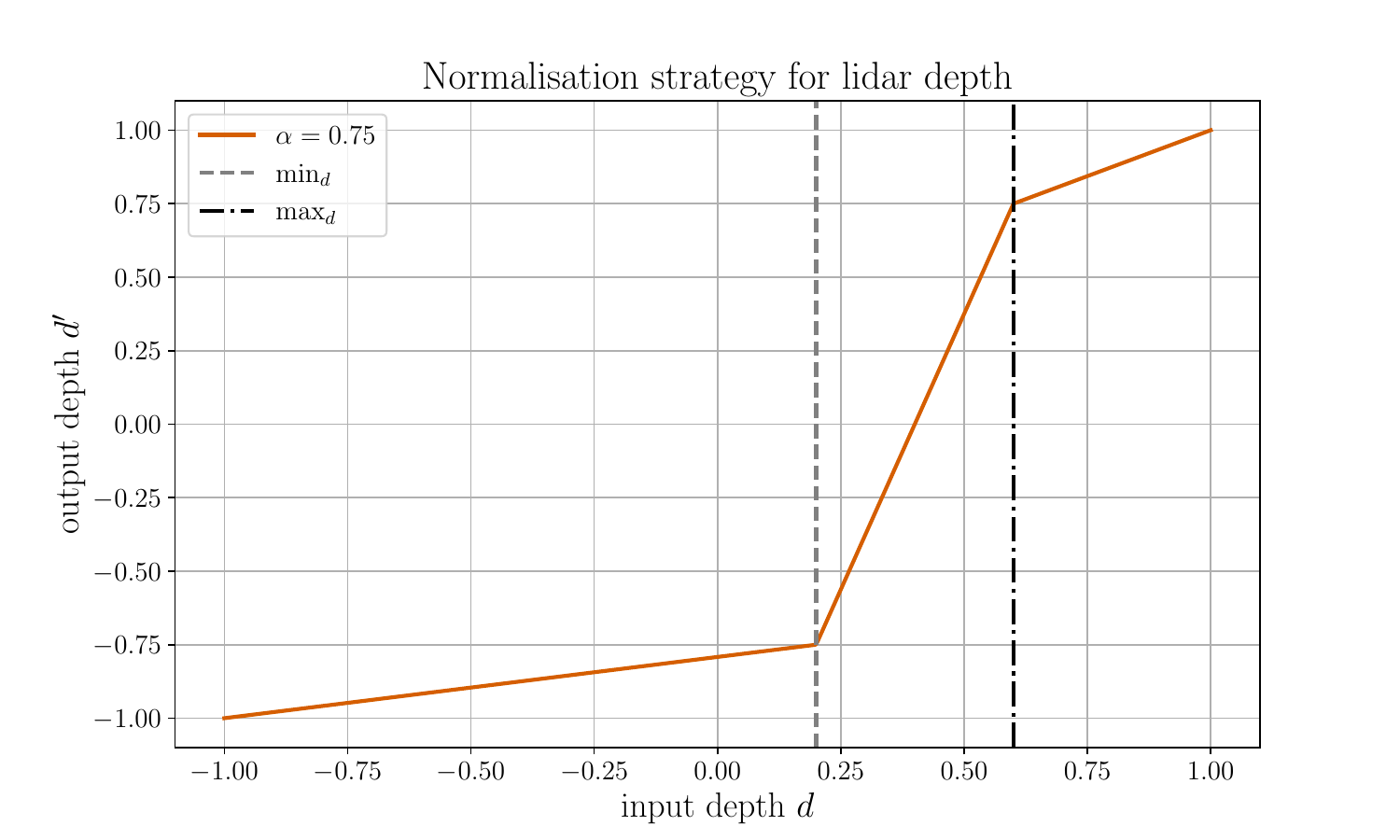}
    \caption[Normalisation strategy of the lidar depth.]{Normalisation strategy of the lidar depth, which influences the interval size allocated to the depth values of the object bounding box.}
    \label{fig:range_scaling}
\end{figure}

Thirdly, the input and output convolutions of the pre-trained image encoder and decoder are replaced with two residual blocks~\cite{he2016deep}, respectively. There are now two input and output channels. This work fine-tunes the VAE~\cite{kingma2013auto} with an additional discriminant~\cite{esser2021taming}. The same normalisation and resizing strategies are applied, yielding the best reconstruction metrics for $\mathbf{\tilde{x}}^{(R)} = \text{resize}(\mathcal{D}^{\text{(R)}}(\mathcal{E}^{\text{(R)}}(\text{norm}({\mathbf{x}^{\text{(R)}}}))))$.

Thus, this work adapts the pre-trained image VAE~\cite{kingma2013auto} of StableDiffusion~\cite{rombach2022high} to the lidar modality through a series of adaptations—improved downsampling, intensity and depth normalisation, and fine-tuning of input and output adaptation layers—to achieve better object reconstruction. These findings are demonstrated in~\autoref{tab:lidar reconstruction}.

Finally, the range image $ \mathbf{x}^{\text{(R)}} $ is encoded to obtain a latent representation $ \mathbf{z}_0^{\text{(R)}} = \mathcal{E}^{\text{(R)}}(\text{norm}({\mathbf{x}^{\text{(R)}}})) $.
Similarly, the range context $ \mathbf{x}^{\text{(R)}} \odot \bar{\mathbf{m}}^{\text{(R)}} $ is encoded to obtain a latent conditioning representation $ \mathbf{c}^{\text{(R)}}_{\text{env}} = \mathcal{E}^{\text{(R)}}(\text{norm}(\mathbf{x}^{\text{(R)}} \odot \bar{\mathbf{m}}^{\text{(R)}})) $.

\subsection{Conditioning encoding}

\paragraph{Reference extraction and encoding}
This work extracts the reference image \( \mathbf{x}_{\text{ref}} \) from the source image \( I_s \) by cropping the minimal 2D bounding box that encompasses \( \text{box}_s^{\text{(C)}} \), capturing the object's features. During inference, the reference image can be obtained from external sources. 
Following PbE~\cite{yang2023paint}, the reference image \( \mathbf{x}_{\text{ref}} \) is encoded using CLIP~\cite{radford2021learning}, selecting the classification token and passing it through a Multi-Layer Perceptron (MLP). These components initialised from PbE~\cite{yang2023paint}, are kept frozen during the training of the proposed method. While CLIP effectively preserves high-level details such as gestures or car models, it lacks fine-detail preservation. For applications requiring finer details, self-supervised pretrained encoders like DINOv2~\cite{oquab2023dinov2} may be preferable, as demonstrated in~\cite{chen2023anydoor}. This is further illustrated in~\autoref{chapter:anydoor}, where encoding references of medical anomalies requires fine detail preservation and granularity in feature extraction.

\paragraph*{Bounding box encoding}
This work considers the projected bounding boxes $ \text{box}_d^{\text{(C)}} \in \mathbb{R}^{8 \times 2} $ and $ \text{box}_d^{\text{(R)}} \in \mathbb{R}^{8 \times 3} $. The box $\text{box}_d^{\text{(C)}} $ captures the $ (x, y) $ coordinates in the camera view, scaled by the image dimensions; note some points may lie outside the image. The depth dimension from $ \text{box}_d^{\text{(R)}} $ is incorporated into $ \text{box}_d^{\text{(C)}} $ to aid with spatial consistency across modalities, resulting in $ \widetilde{\text{box}}_d^{\text{(C)}} \in \mathbb{R}^{8 \times 3} $.
These bounding boxes are encoded into conditioning tokens $ \mathbf{c}_{\text{box}}^{\text{(C)}} $ and $ \mathbf{c}_{\text{box}}^{\text{(R)}} $ using Fourier embeddings, similar to MagicDrive~\cite{gao2023magicdrive}, and modality-agnostic trainable linear layers:
\begin{align*}
  \mathbf{c}_{\text{box}}^{\text{(M)}} = \text{MLP}_{\text{box}}(\text{Fourier}(\widetilde{\text{box}}_d^{\text{(M)}})), \quad \text{for~} \text{M} \in \{\text{C}, \text{R}\}.
\end{align*}

Fourier embeddings map each coordinate value into a higher-dimensional space using sinusoidal functions (sine and cosine) at multiple frequencies. Specifically, for an input \( x \), the embedding includes terms of the form \( \sin(\omega_k x) \) and \( \cos(\omega_k x) \) for different frequencies \( \omega_k \). This allows the model to capture fine and coarse spatial patterns, facilitating the encoding of coordinates through the multilayer perception.

\subsection{Multimodal generation}
\label{sec:method:multimodal generation}
This work fine-tunes a single latent diffusion model for both modalities, leveraging the pre-trained weights of PbE~\cite{yang2023paint}.
Similar to the adaptation strategy of Flamingo~\cite{alayrac2022flamingo}, separate gated cross-attention layers are interwoven: a modality-agnostic bounding box adapter and modality-dependent cross-modal attention. The use of such layers is a commonly used strategy for methods in scene generation~\cite{gao2023magicdrive, xie2024x}, coupled with zero-initialised gating such as in ControlNet~\cite{zhang2023controlnet}.

\paragraph*{Cross-modal attention}
This method introduces a modality-dependent cross-modal attention mechanism which attends to the tokens of the other modality from the same scene in the batch.
The query, key, and value representations are derived from the input camera and lidar features for the cross-attention mechanism, from camera to lidar. Using learnable transformations \( W_Q^{\text{(C)}}, W_K^{\text{(R)}}, W_V^{\text{(R)}} \), the cross-attention is computed as:
\[
\text{Attn}^{\text{(C)}} = \text{softmax}\left(\frac{Q^{\text{(C)}} (K^{\text{(R)}})^T}{\sqrt{d_{\text{head}}}}\right) V^{\text{(R)}},
\]
where \( Q^{\text{(C)}} = W_Q^{\text{(C)}} \mathbf{h}^{\text{(C)}} \), \( K^{\text{(R)}} = W_K^{\text{(R)}} \mathbf{h}^{\text{(R)}} \), and \( V^{\text{(R)}} = W_V^{\text{(R)}} \mathbf{h}^{\text{(R)}} \). The camera features are updated by adding a residual connection through a zero-initialised gating module:
\(
\mathbf{h}^{\text{(C)}} \leftarrow \mathbf{h}^{\text{(C)}} + \text{Gate}^{\text{(C)}}(\text{Attn}^{\text{(C)}}).
\)

The zero-initialised gating module plays a crucial role in how the network is trained, particularly in the early stages of fine-tuning. At the start, when the gating module is initialised with zeros and the rest of the cross-attention matrices, randomly, the module acts as an identity function, meaning that it does not influence the camera features \( \mathbf{h}^{\text{(C)}} \) during the initial phase. This identity property allows the pretrained model (before fine-tuning) to maintain its learned knowledge without interference from new, randomly initialised parameters. The pretrained weights, which were trained on different tasks or datasets, are preserved, and no significant changes are made during the initial forward pass.

During fine-tuning, however, the gradients propagate through the gating module, and through gradient descent, the module gradually steers the model toward focusing on the new task-specific information as it adjusts its weights. The gated cross-attention thus enables the network to progressively learn task-specific features without sacrificing the performance of the pretrained model, facilitating efficient fine-tuning.

Finally, the computation for lidar-to-camera cross-attention is analogous, with lidar features attending to the camera modality. The cross-modal attention is not restricted and lets the network learn an implicit correspondence, which is facilitated by the respective projected bounding boxes. Lastly, the camera and lidar tokens are concatenated within the batch.

\paragraph*{Bounding box adapter}
The bounding box adapter is a modality-agnostic layer designed to provide bounding box conditioning while preserving reference features encoded in $\mathbf{c}_{\text{ref}}$. This adapter employs the same gating mechanism as the cross-attention module. Still, instead, it is conditioned on one of the bounding box tokens \( \mathbf{c}_{\text{box}}^{\text{(R)}} \) or \( \mathbf{c}_{\text{box}}^{\text{(C)}} \), depending on the modality, and the reference token \( \mathbf{c}_{\text{ref}} \). This enables flexible conditioning across modalities, ensuring that spatial information from the bounding box is effectively integrated alongside the reference features. Classifier-free guidance~\cite{ho2022classifier} with a scale of 5 is employed as in PbE~\cite{yang2023paint}, extending it to both reference and bounding box conditioning.

\subsection{Inference and compositing}
\label{sec:method:spatial compositing}
\paragraph*{Inference process}
At inference, the method starts from random noise \( \mathbf{\epsilon} \sim \mathcal{N}(0, \mathbf{I}) \) combined with the latent scene context and resized edit mask, and iteratively denoises this input for \( T = 50 \) steps using the DDIM sampler~\cite{song2020denoising}, conditioned on the reference \( \mathbf{c}_{\text{ref}} \) and 3D bounding box token \( \mathbf{c}_{\text{box}} \), to yield the final latent representations \( \{ \tilde{\mathbf{z}}_0^{(\text{C})}, \tilde{\mathbf{z}}_0^{(\text{R})}\} \). These latent representations are then decoded by the image and range decoders to produce the edited camera and range images \( \tilde{\mathbf{x}}^{(\text{C})} = \mathcal{D}^{(\text{C})}(\tilde{\mathbf{z}}_0^{(\text{C})}) \) and \( \tilde{\mathbf{x}}^{(\text{R})} = \mathcal{D}^{(\text{R})}(\tilde{\mathbf{z}}_0^{(\text{R})}) \).  Inference throughput is about 8 camera+lidar samples per minute on a single 80GB NVIDIA A100 GPU.

\paragraph{Range view to point cloud transformation}

To reconstruct the point cloud from the range view, the stored unrasterised pitch and yaw matrices, \( R_d^{\text{pitch}} \in \mathbb{R}^{H \times W} \) and \( R_d^{\text{yaw}} \in \mathbb{R}^{H \times W} \), are used, which preserve the original angular information for each pixel.

The depth values \( R_d^{\text{depth}} \in \mathbb{R}^{H \times W} \) are flattened to the vector \( \mathbf{d} \in \mathbb{R}^N \), where \( N = H \times W \). Similarly, the pitch and yaw matrices are flattened to the vectors \( \boldsymbol{\theta} \in \mathbb{R}^N \) and \( \boldsymbol{\phi} \in \mathbb{R}^N \), representing the pitch and yaw angles for each pixel in the range view. Using these angular and depth values, the point cloud \( P_d \in \mathbb{R}^{N \times 3} \) is reconstructed as:
\begin{align*}
\mathbf{p}_x &= \mathbf{d} \cdot \cos(\boldsymbol{\phi}) \cdot \cos(\boldsymbol{\theta}) \\
\mathbf{p}_y &= -\mathbf{d} \cdot \sin(\boldsymbol{\phi}) \cdot \cos(\boldsymbol{\theta}) \\
\mathbf{p}_z &= \mathbf{d} \cdot \sin(\boldsymbol{\theta}),
\end{align*}
where \( \mathbf{p}_x, \mathbf{p}_y, \mathbf{p}_z \in \mathbb{R}^N \) are the vectors of reconstructed \( x \), \( y \), and \( z \) coordinates, respectively. The reconstructed point cloud \( P_d \) is then given by stacking these coordinate vectors as \( P_d = [\mathbf{p}_x, \mathbf{p}_y, \mathbf{p}_z] \).

By leveraging the stored pitch and yaw matrices, the process accurately restores the point cloud while avoiding misalignments introduced by motion compensation. This ensures that the reconstructed point cloud aligns perfectly with the original input, except for the overlapping points previously mentioned, which are not reconstructed.

\paragraph*{Spatial compositing}
Final results are obtained by compositing the edited camera and range images back into the original scene.
For images, the region within the projected bounding box from the edited image $\tilde{\mathbf{x}}^{(\text{C})}$ is extracted and inserted back into the destination image \( I_d \). Following the approach of POC~\cite{de2024placing}, a Gaussian kernel is applied to improve blending, resulting in the final composited image.
For lidar, a 2D mask \( \mathbf{m}_{\text{points}} \) is created by selecting points from the original lidar point cloud \( P_d \) that fall within the destination 3D bounding box. The edited range image \( \mathbf{\tilde{x}}^{\text{(R)}} \) is then resized to an object-centric range view using average pooling and denormalised before computing coordinate and intensity values using the range view to point cloud transformation described above. Pixels in the original range view \( R_d \) are then replaced with the corresponding pixels from the edited range image if either (i) they fall within \( \mathbf{m}_{\text{points}} \) or (ii) its corresponding 3D point in the edited range image is contained by the bounding box of the object.

\subsection{Training details}
\label{sec:method:training details}

\paragraph{Sample selection}
This work considers objects from the nuScenes dataset~\cite{caesar2020nuscenes} train split with at least 64 lidar points, whose 2D bounding box is at least $100 \times 100$ pixels, with a 2D IoU overlap not exceeding 50\%, and current camera visibility of at least 70\%.
Unless stated otherwise, the proposed model is trained on ``car'' and ``pedestrian'' categories, dynamically sampling 4096 new actors per class each epoch.
During training, once an object is selected, the current scene is used as the destination, from which the 3D bounding box, environmental context, and ground truth insertion are extracted.

\paragraph{Reference selection}
Object references are taken from the same object at a different timestamp picked randomly as follows.
References for the current object are collected across all frames that meet the previous criteria to ensure good visibility and arranged by normalised temporal distance $\Delta t$, where $1$ represents the furthest reference in time and $0$ represents the current one. 
References are randomly sampled based on a beta distribution $\Delta t \sim \text{Beta}(4, 1)$, which ensures a preference for instances of the object that are far away from the current timestamp.
Thus, rather than reinserting objects into the scene using the same reference, this work utilises the temporal structure of the nuScenes dataset~\cite{caesar2020nuscenes} for augmentation. Thus, references for the current object are sampled from a different timestamp following the distribution shown in \autoref{fig:supply:beta_distribution}.

\paragraph{Augmentation}
During training, the reference image undergoes augmentations similar to those described in PbE~\cite{yang2023paint}, such as random flip, rotation, blurring and brightness and contrast transformations.
Additionally, empty bounding boxes are randomly sampled (i.e., containing no objects), overriding both the reference image and bounding box with zero values.
This encourages the model to infer and reconstruct missing details based on the surrounding context alone. Further details are provided in \autoref{sec:suppl:method}.

\paragraph{Range image reconstruction metrics}
\label{sec:suppl:method:lidar processing}
An important step towards achieving realistic lidar inpainting is ensuring the range autoencoder can reconstruct the input point cloud with high fidelity. Since the point cloud to range view transformation is lossless, the evaluation focused the quality of reconstructed range views.
The evaluation is restricted to the region within the edit mask $ \mathbf{m}^{\text{(R)}} $ and the object points from the target range view, selected using the 3D bounding box.
For each input range view $ \mathbf{X}^{\text{(R)}} $ 
and its reconstruction, $ \mathcal{D}^{\text{(R)}}(\mathcal{E}^{\text{(R)}}(\mathbf{X}^{\text{(R)}})) $, the median depth error and the mean squared error (MSE) of the intensity values are computed, restricted on the object points and the edit mask.

\paragraph{Fine-tuning procedure}
This work proceeds by training the newly added input and output adapters of the range autoencoder while keeping the rest of the image VAE~\cite{kingma2013auto} from Stable Diffusion~\cite{rombach2022high} frozen. This training phase spans 8 epochs (15k steps) with a learning rate of \(4.5 \times 10^{-5}\), selecting the checkpoint with the lowest reconstruction loss. Note, the image VAE~\cite{kingma2013auto} from Stable Diffusion~\cite{rombach2022high} is used as the camera encoder, with no fine-tuning, due to good reconstruction performance of the RGB camera input.

During fine-tuning of the latent diffusion model, the camera autoencoder, range autoencoder and all other layers from the PbE~\cite{yang2023paint} framework remain frozen, while only the bounding box encoder, bounding box adaptation layer, and cross-modal attention layers are trained. This method uses an input dimension of \( D = 512 \) and a latent dimension of \( D_h = 64 \), training for 30 epochs (approximately 90k steps), with a constant learning rate of \(8 \times 10^{-5}\) and a batch size of 2 multimodal samples. The top five models with the lowest validation loss are retained. The final model is selected based on the best Fréchet Inception Distance (FID)~\cite{heusel2017gans} achieved on a test set of 200 pre-selected images, where objects are reinserted into scenes using the previously-described filters. Fine-tuning of the latent diffusion model takes approximately 20 hours on 8x 24GB NVIDIA A10G or 2x 80GB NVIDIA A100 GPUs.

\paragraph{Sampling empty boxes for augmentation} For augmentation purposes, empty bounding boxes are sampled to train the model to reconstruct missing details. A dedicated database of 10,000 such boxes is created. For a given scene, an object from a different scene is selected, ensuring that teleporting the bounding box into the current scene does not result in 3D overlap or a total 2D IoU overlap exceeding 50\% with other objects. During training, 30\% of the samples are drawn from this database. All-zero reference images and boxes with zero coordinates are used for these samples, enabling the model to learn how to fill in background details, as shown in \autoref{fig:suppl:erase_training_examples}.

\begin{figure*}[t]
    \centering
    \begin{minipage}{0.48\textwidth}
        \centering
        \includegraphics[width=\textwidth]{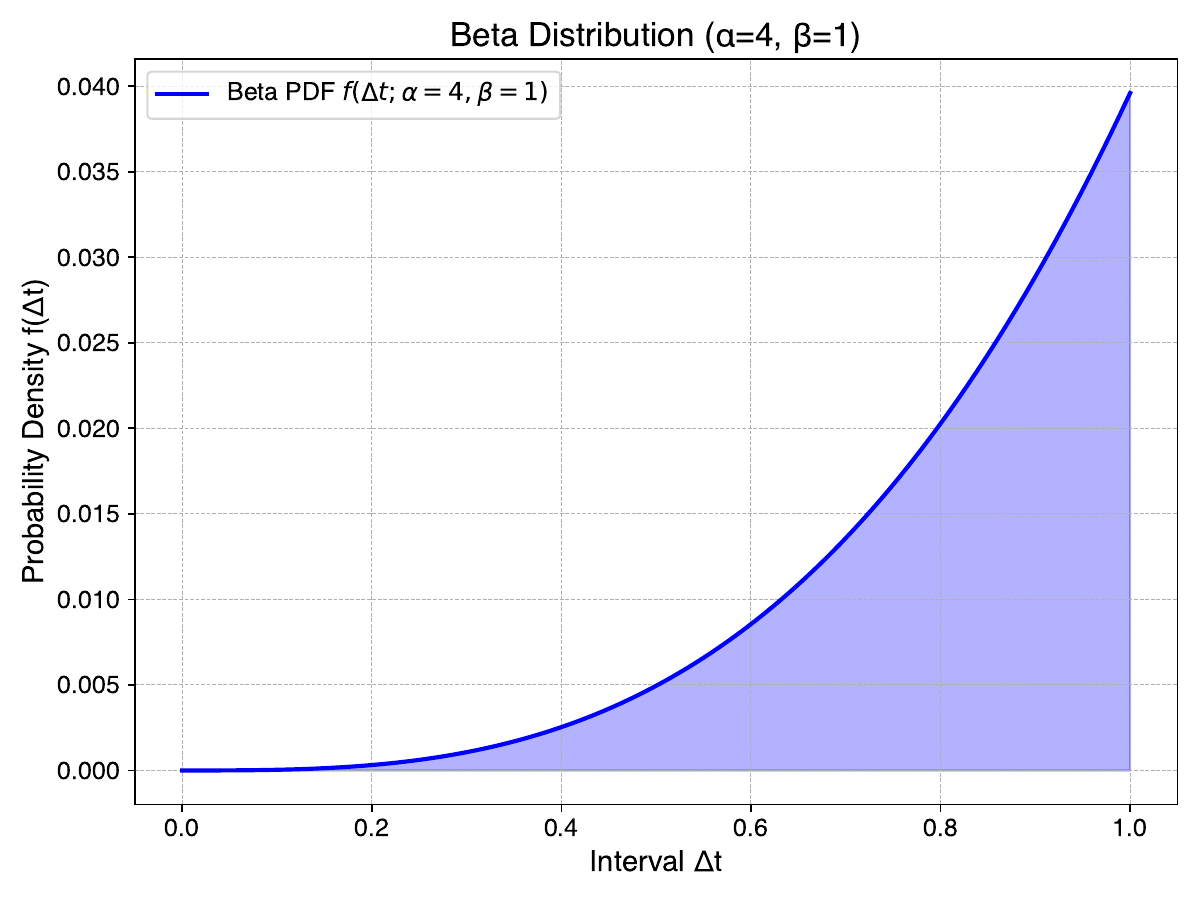}
        \caption[Beta distribution used to sample reference patches given temporal information.]{The probability density function of the Beta distribution with parameters \(\alpha=4\) and \(\beta=1\), used to sample reference patches of an object based on the normalised timestamp difference \(\Delta t\) between tracked instances. Patches from further time points are sampled with higher frequency.}
        \label{fig:supply:beta_distribution}
    \end{minipage}
    \hfill
    \begin{minipage}{0.48\textwidth}
        \centering
        \setlength{\tabcolsep}{1pt} 
        \footnotesize
        \begin{tabular}{ccc}
            {Training input} & {Empty projection} & {Training output} \\
            \includegraphics[width=0.3\columnwidth]{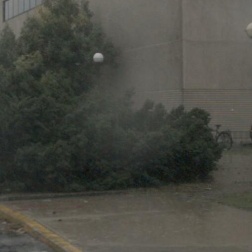} &
            \includegraphics[width=0.3\columnwidth]{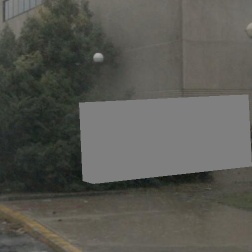} &
            \includegraphics[width=0.3\columnwidth]{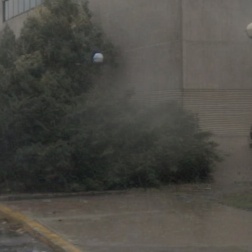} \\
            
            \includegraphics[width=0.3\columnwidth]{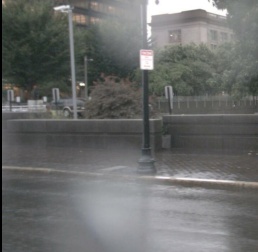} &
            \includegraphics[width=0.3\columnwidth]{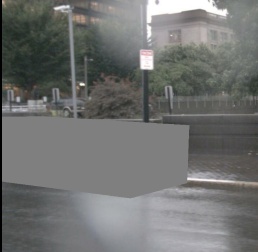} &
            \includegraphics[width=0.3\columnwidth]{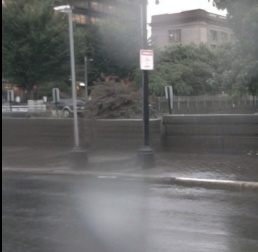} \\
            
            \includegraphics[width=0.3\columnwidth]{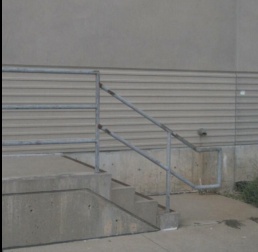} &
            \includegraphics[width=0.3\columnwidth]{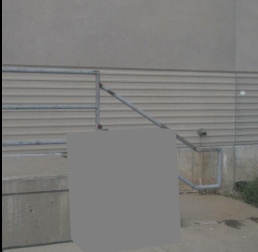} &
            \includegraphics[width=0.3\columnwidth]{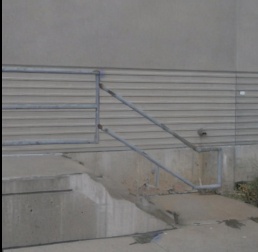} \\
            
            \includegraphics[width=0.3\columnwidth]{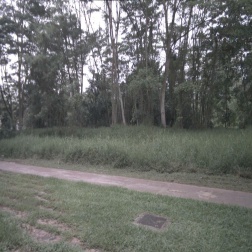} &
            \includegraphics[width=0.3\columnwidth]{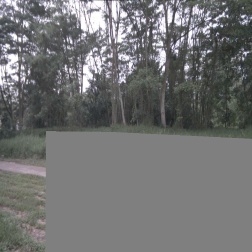} &
            \includegraphics[width=0.3\columnwidth]{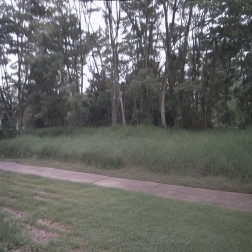} \\
        \end{tabular}
        \caption[Data augmentation with no reference.]{Empty boxes are sampled during training for data augmentation, with the reference conditioning set to a black image and the bounding box coordinates set to zero.}
        \label{fig:suppl:erase_training_examples}
    \end{minipage}
\end{figure*}

\section{Experiments and results}
\label{sec:experiments}

\subsection{Object insertion and replacement}
\label{sec:experiments:implementation}

\paragraph{Setup}
In order to avoid situations where inpainted objects are placed at locations incompatible with the scene (e.g. a car on the pavement), the position of existing objects is used and either object reinsertion or replacement is performed, which differ by the choice of the inpainting reference. By doing so, the model's ability to generate realistic objects conditioned on a 3D bounding box while being semantically consistent with the scene is tested. A total of 200 original objects are sampled from the nuScenes validation set as in~\autoref{sec:method:training details}, balanced across the ``car'' and ``pedestrian'' classes.

\paragraph{Reinsertion}
Two types of references are defined: \textit{same reference}, where the source and destination images and bounding boxes are identical, meaning the object is reinserted in the exact same scene and position; and \textit{tracked reference}, where the object is reinserted given its reference from a different timestamp, using the same sampling strategy described in \autoref{sec:method:training details}. This setting tests whether the object’s appearance can be preserved by the model, and whether novel view synthesis can be realistically performed (for \textit{tracked reference}).

\paragraph{Replacement}
Two different domains are defined based on the weather conditions ($\text{rainy}(I_s), \text{rainy}(I_d) \in \{0, 1\}$) and time of day ($\text{night}(I_s), \text{night}(I_d) \in \{0, 1\}$), and the following reference types are considered: \textit{in-domain reference}, where the source and destination bounding boxes correspond to different objects that are of the same class and same domain $(\text{rainy}(I_d) = \text{rainy}(I'_d) ~\&~ \text{night}(I_s) = \text{night}(I'_d))$, and \textit{cross-domain reference}, where the bounding boxes correspond to different objects of the same class, yet are drawn from at least a different domain $(\text{rainy}(I_d) \neq \text{rainy}(I_d) \text{ or } \text{night}(I_s) \neq \text{night}(I_d))$. Replacements are selected within the same class only to ensure that object placement and dimensions are meaningful and coherent.

\paragraph{Qualitative results}
Results are presented on \autoref{fig:results} both for replacement (rows 1--4) and insertion (row 5).
It can be seen that inpainted objects correspond tightly to their conditioning 3D bounding boxes while having a high degree of realism, both for camera (RGB) and lidar (depth and intensity), and show a strong coherence (lightning, weather conditions, occlusions, etc.) with the rest of the scene.
The last row showcases object deletion, which can be achieved by using an empty reference image (note that empty references are used during training, as described in~\autoref{sec:method:training details}).
Even though references in the replacement setting are from a different domain (time of day/weather), the model is able to inpaint such objects realistically. See \autoref{fig:inpainting-hard-suppl} for more examples, including failure cases.
Finally, the flexibility of the proposed bounding box conditioning is illustrated, and it is shown to generate multiple views with a high degree of consistency, as demonstrated in~\autoref{fig:suppl:rotation_results} and \autoref{fig:suppl:controllability_main_full}.

\begin{figure*}[ht!]
\centering
\setlength{\tabcolsep}{2pt}
\footnotesize
\begin{tabular}{ccccccc}
& \multicolumn{3}{c}{\textbf{Original}} & \multicolumn{3}{c}{\textbf{Edited}}\\
\cmidrule(r){2-4} \cmidrule(r){5-7}
Reference & Camera & Depth & Intensity & Camera & Depth & Intensity \\
\cmidrule(r){1-1} \cmidrule(r){2-7}\\[-10pt]
\includegraphics[width=0.135\linewidth]{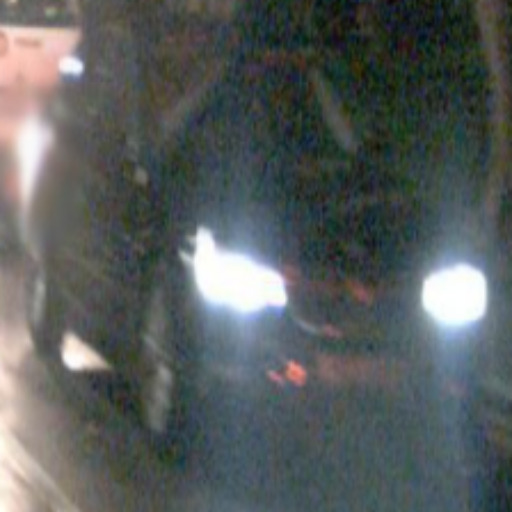} &
\includegraphics[width=0.135\linewidth]{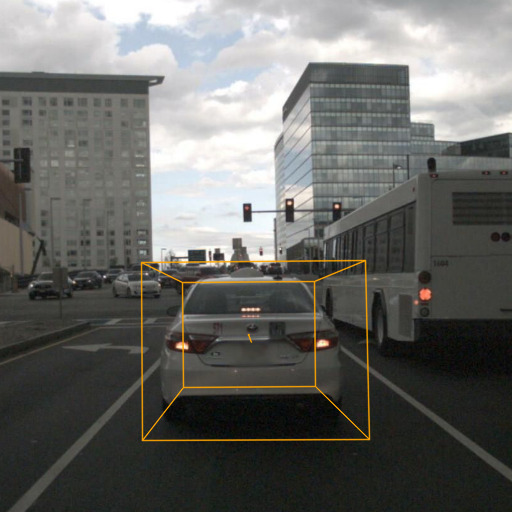} &
\includegraphics[width=0.135\linewidth]{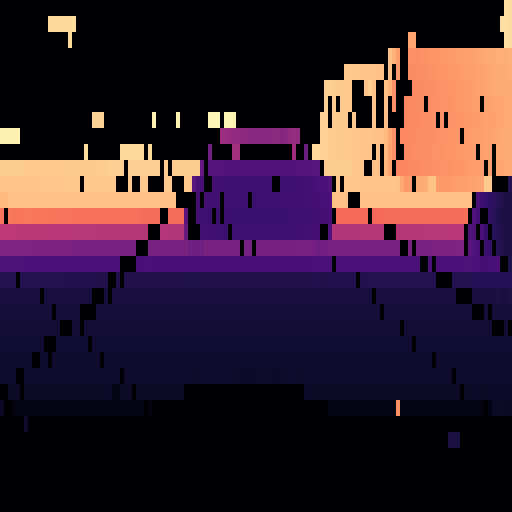} &
\includegraphics[width=0.135\linewidth]{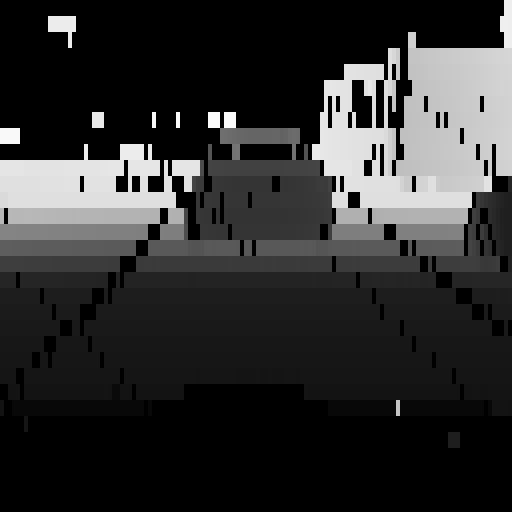} &
\includegraphics[width=0.135\linewidth]{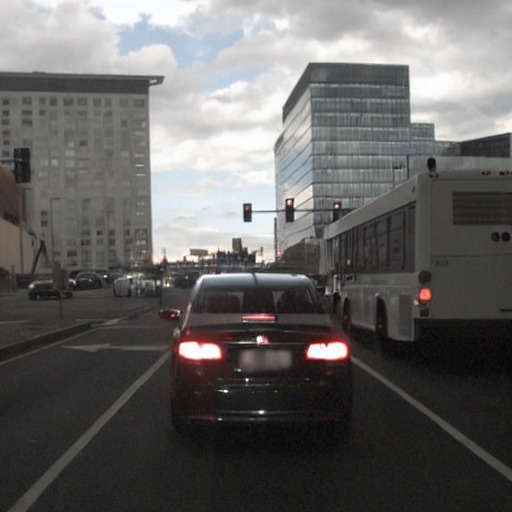} &
\includegraphics[width=0.135\linewidth]{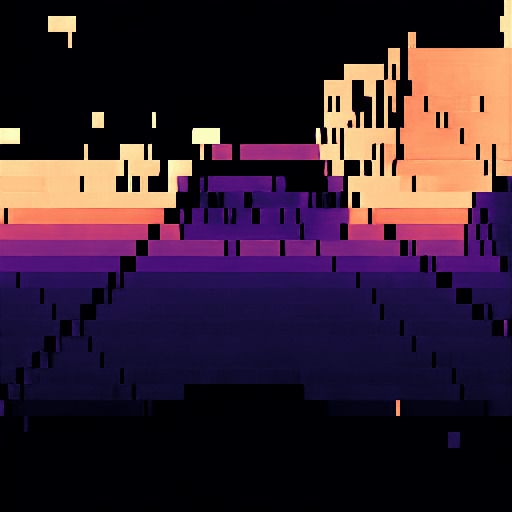} &
\includegraphics[width=0.135\linewidth]{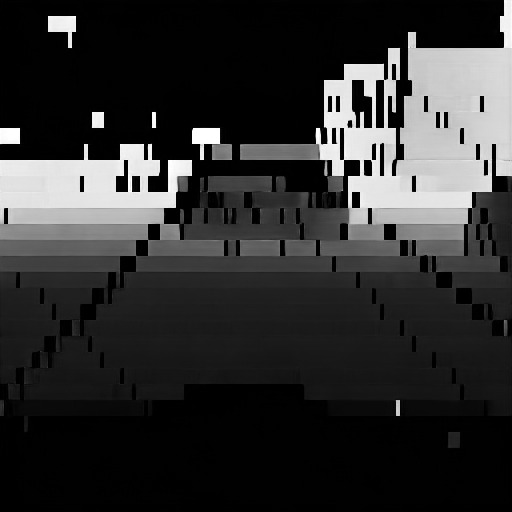} \\
\includegraphics[width=0.135\linewidth]{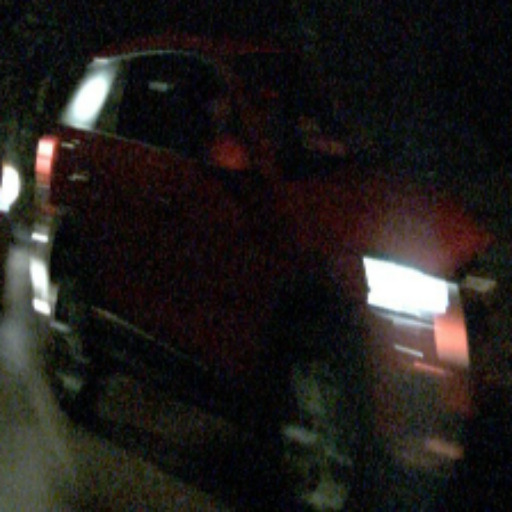} &
\includegraphics[width=0.135\linewidth]{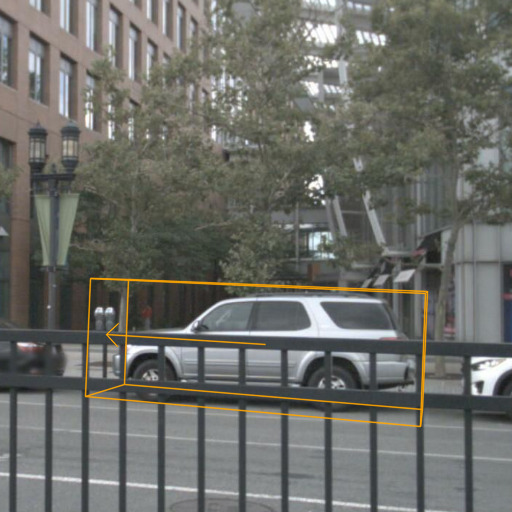} &
\includegraphics[width=0.135\linewidth]{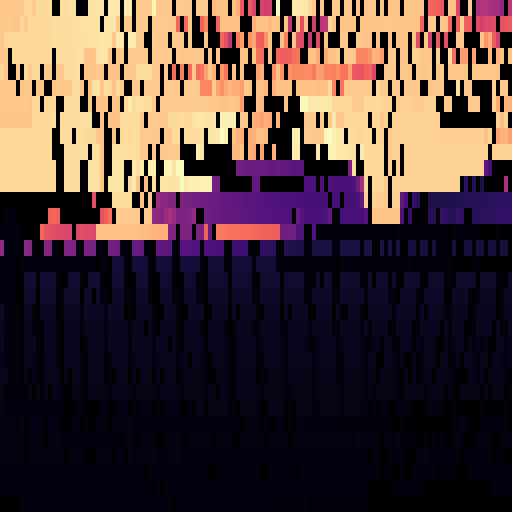} &
\includegraphics[width=0.135\linewidth]{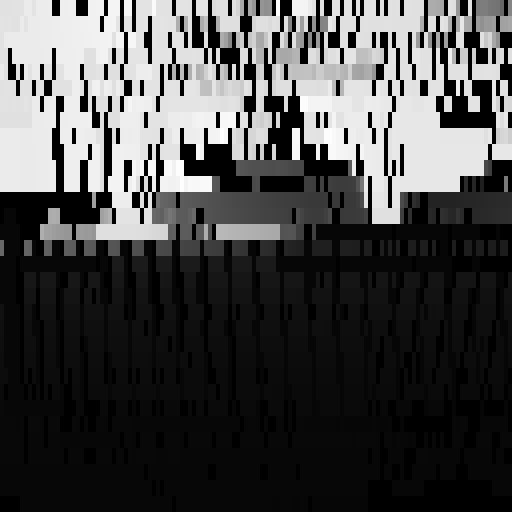} &
\includegraphics[width=0.135\linewidth]{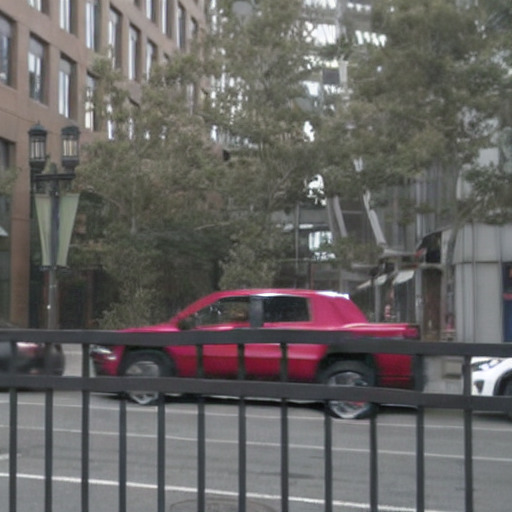} &
\includegraphics[width=0.135\linewidth]{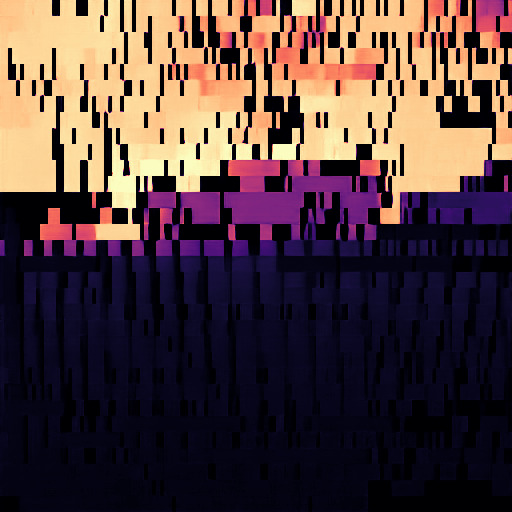} &
\includegraphics[width=0.135\linewidth]{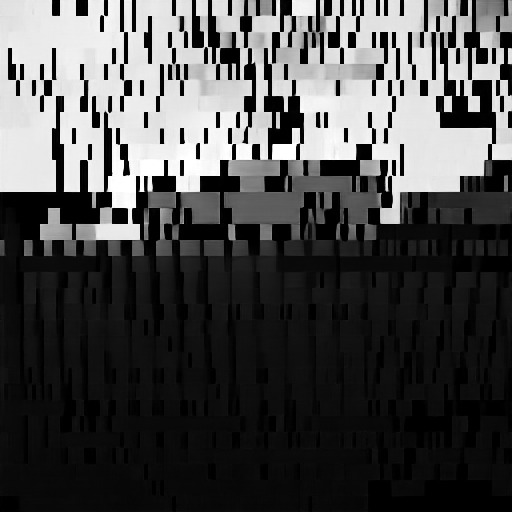} \\
\includegraphics[width=0.135\linewidth]{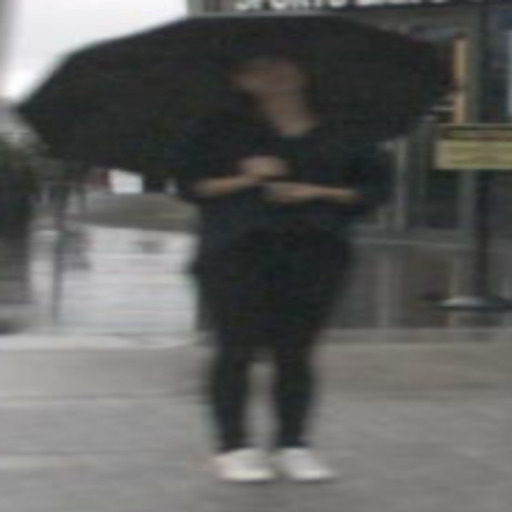} &
\includegraphics[width=0.135\linewidth]{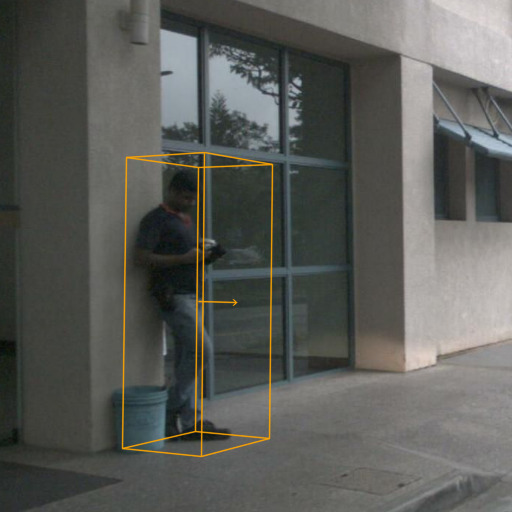} &
\includegraphics[width=0.135\linewidth]{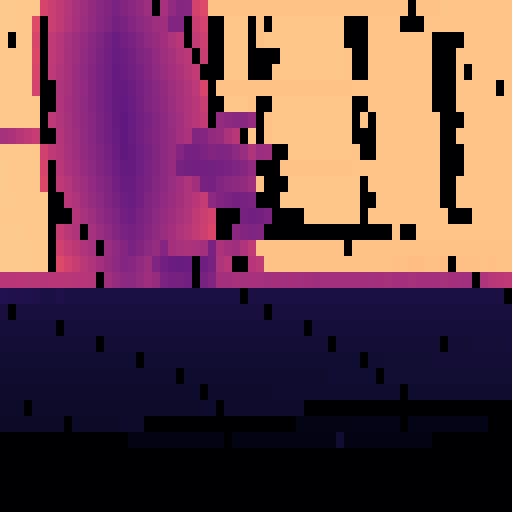} &
\includegraphics[width=0.135\linewidth]{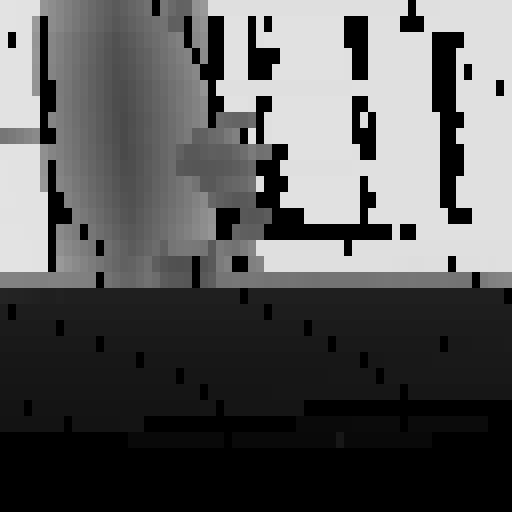} &
\includegraphics[width=0.135\linewidth]{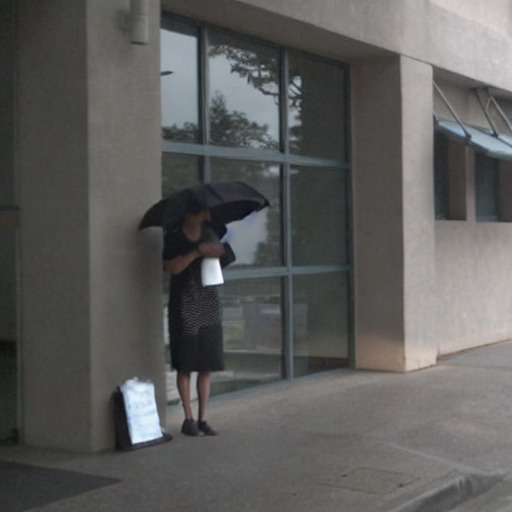} &
\includegraphics[width=0.135\linewidth]{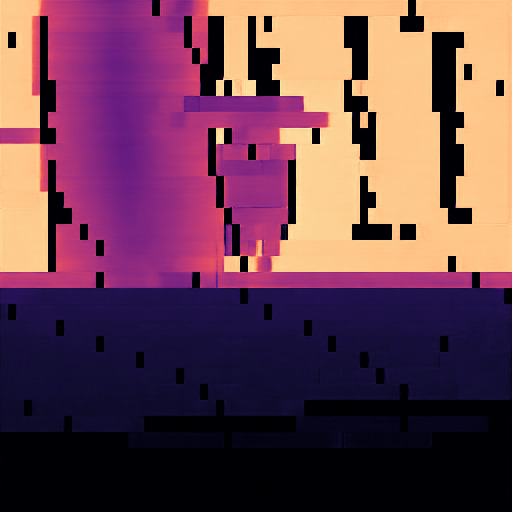} &
\includegraphics[width=0.135\linewidth]{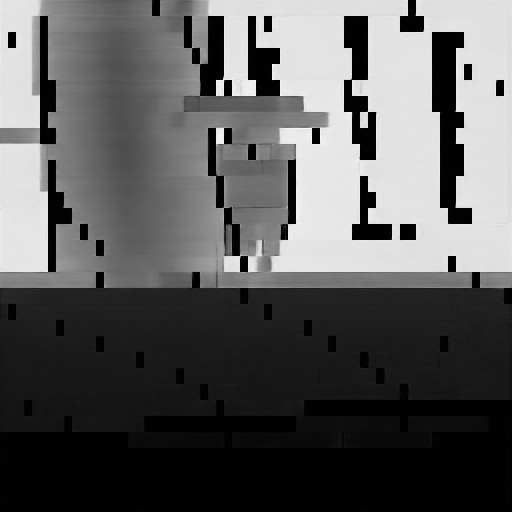} \\
\includegraphics[width=0.135\linewidth]{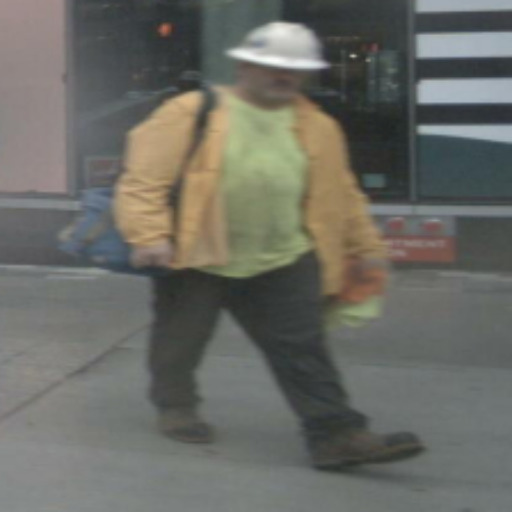} &
\includegraphics[width=0.135\linewidth]{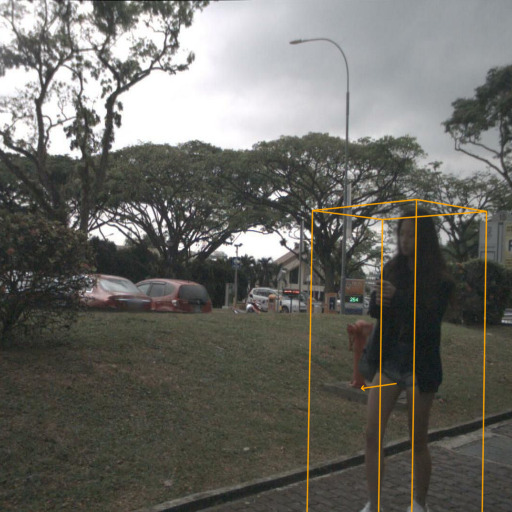} &
\includegraphics[width=0.135\linewidth]{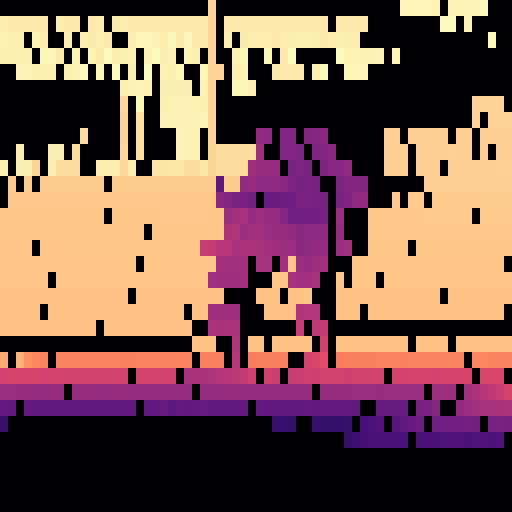} &
\includegraphics[width=0.135\linewidth]{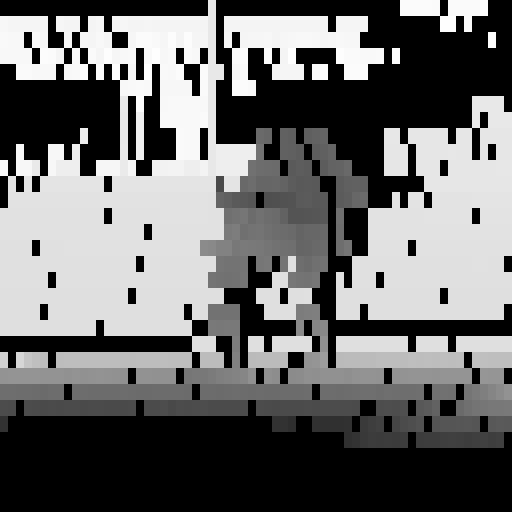} &
\includegraphics[width=0.135\linewidth]{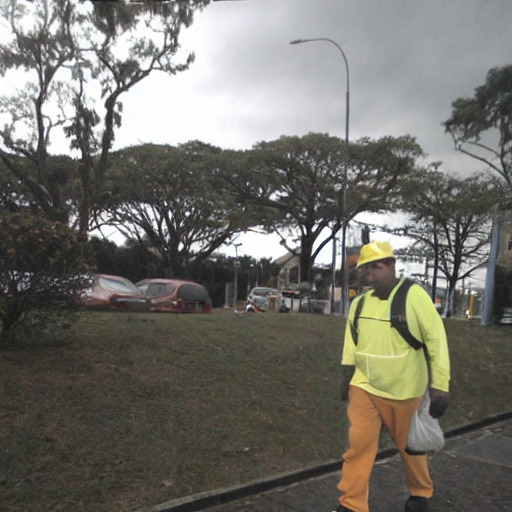} &
\includegraphics[width=0.135\linewidth]{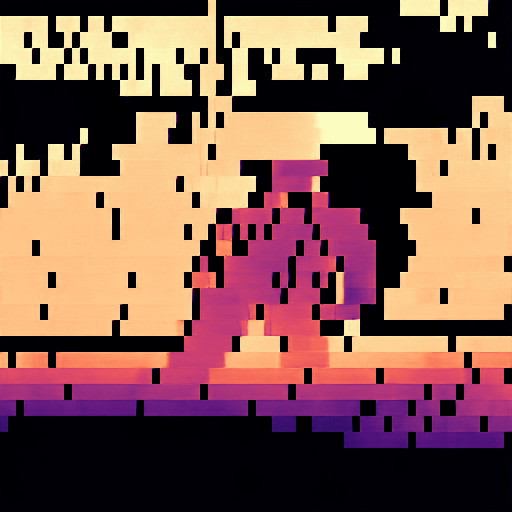} &
\includegraphics[width=0.135\linewidth]{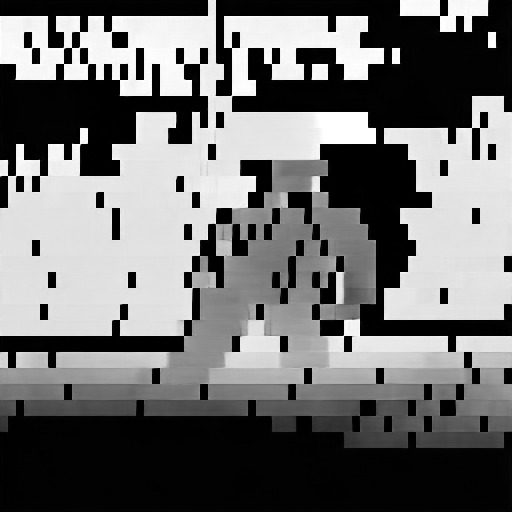} \\
\includegraphics[width=0.135\linewidth]{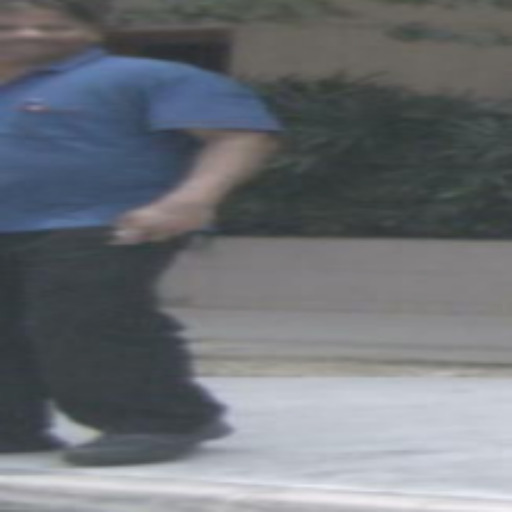} &
\includegraphics[width=0.135\linewidth]{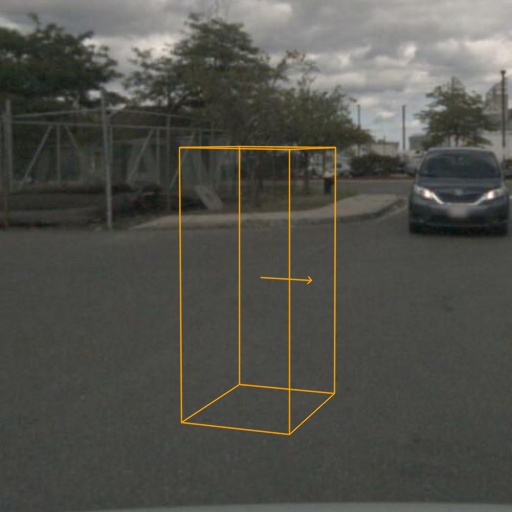} &
\includegraphics[width=0.135\linewidth]{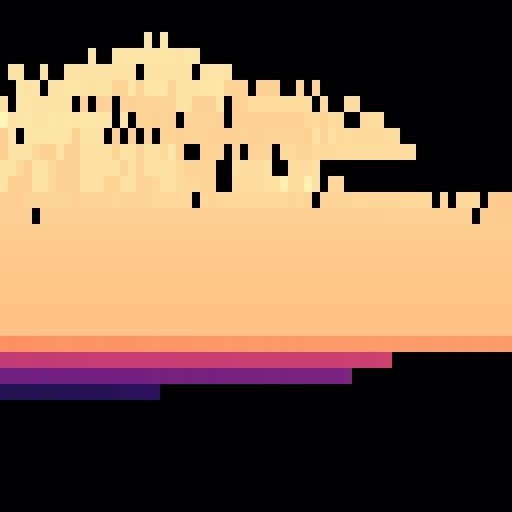} &
\includegraphics[width=0.135\linewidth]{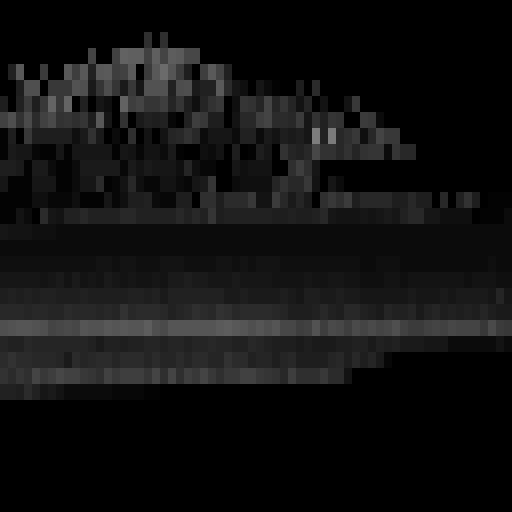} &
\includegraphics[width=0.135\linewidth]{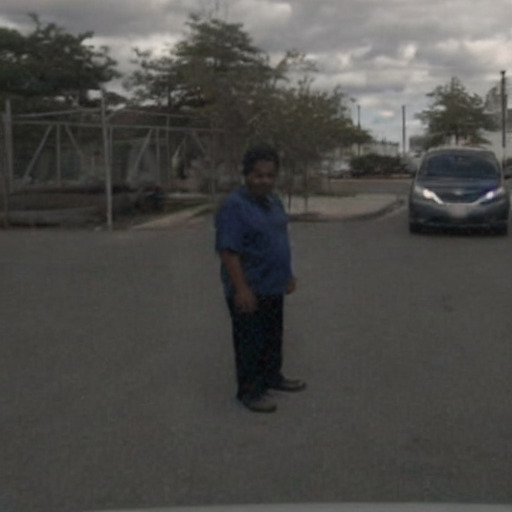} &
\includegraphics[width=0.135\linewidth]{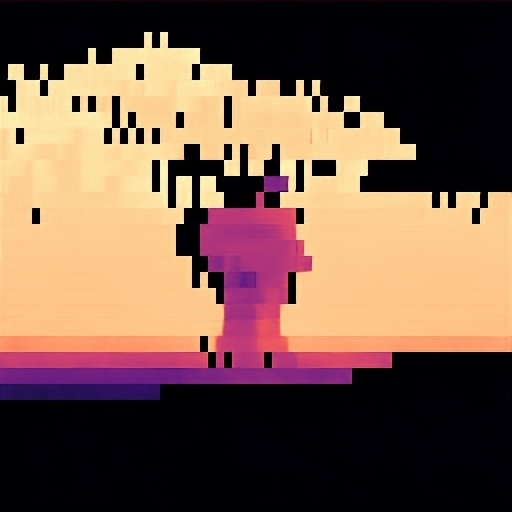} &
\includegraphics[width=0.135\linewidth]{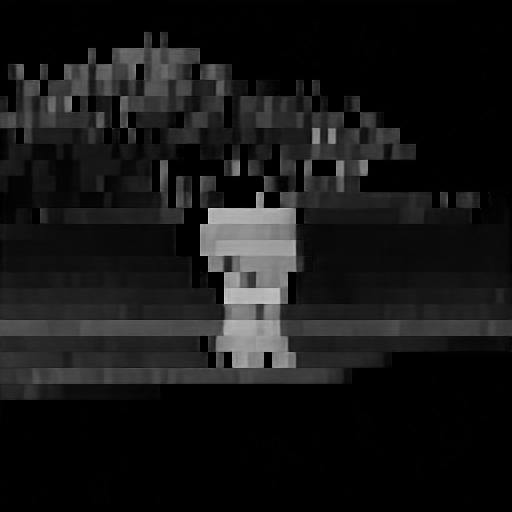} \\
\includegraphics[width=0.135\linewidth]{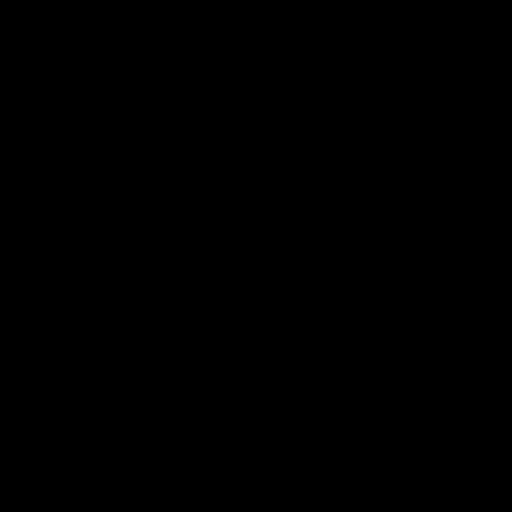} &
\includegraphics[width=0.135\linewidth]{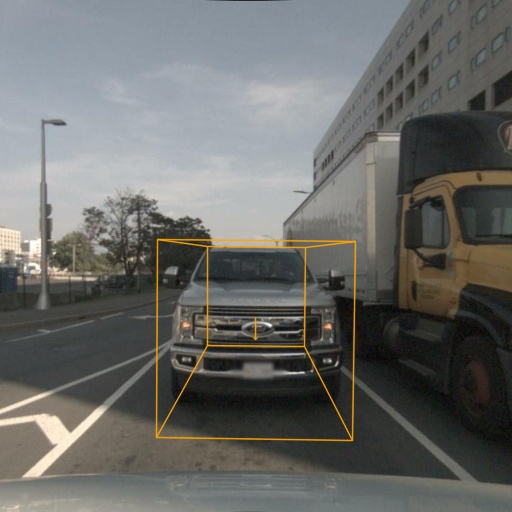} &
\includegraphics[width=0.135\linewidth]{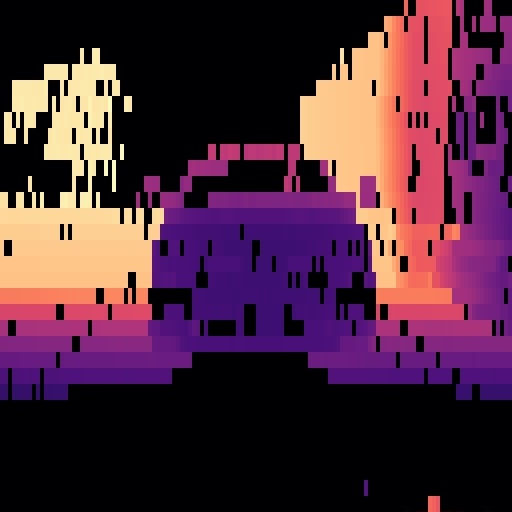} &
\includegraphics[width=0.135\linewidth]{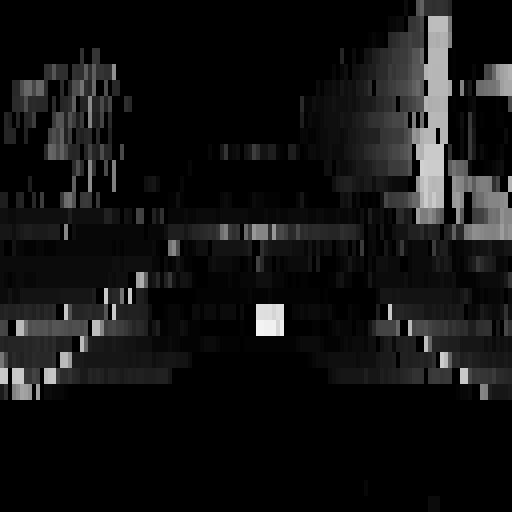} &
\includegraphics[width=0.135\linewidth]{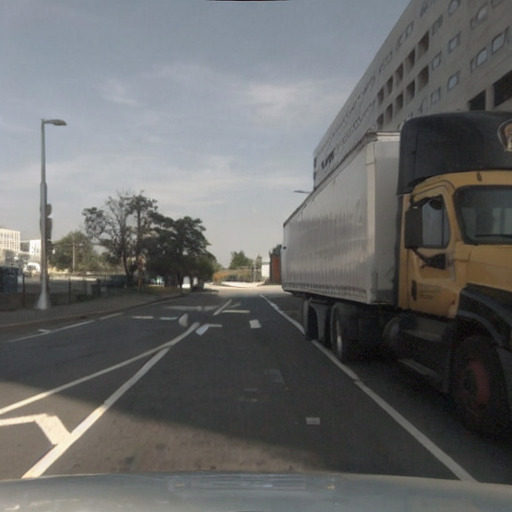} &
\includegraphics[width=0.135\linewidth]{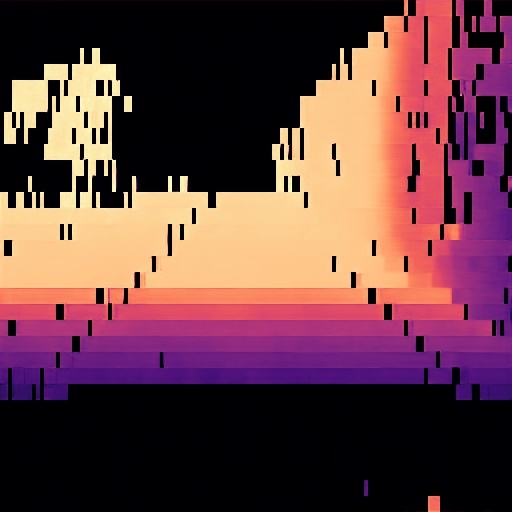} &
\includegraphics[width=0.135\linewidth]{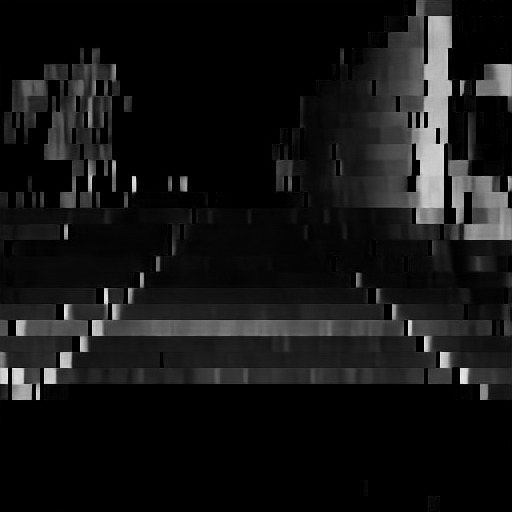} \\
\end{tabular}
\caption[Examples of using MObI for object replacement, insertion and removal.]{Examples of object inpainting using MObI in the following settings: replacement (rows 1--4), insertion (row 5), and deletion (row 6, using a black reference). The proposed method can inpaint objects corresponding to a 3D bounding box with a high degree of realism while preserving coherence with the rest of the scene.
Note that even though some references are from a different domain (time of day, weather condition), the model is able to preserve the coherence of the resulting insertion.
}
\label{fig:results}
\end{figure*}

\begin{figure*}[ht]
    \centering
    \footnotesize
    \renewcommand{\arraystretch}{0.3} 
    \begin{tabular}{c@{\hspace{1pt}}c@{\hspace{8pt}}c@{\hspace{1pt}}c@{\hspace{1pt}}c@{\hspace{1pt}}c@{\hspace{1pt}}c@{\hspace{1pt}}c@{\hspace{4pt}}l}
        {\textbf{Ref. image}} &
        {\textbf{Original scene}} &
        \multicolumn{6}{c}{\textbf{Edited scenes}} &
        \\[4pt]
        
        \includegraphics[width=0.12\linewidth]{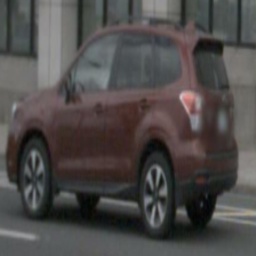} &
        \includegraphics[width=0.12\linewidth]{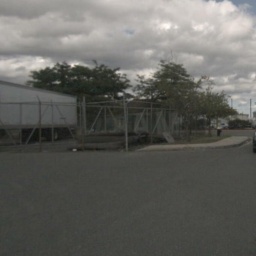} &
        \includegraphics[width=0.12\linewidth]{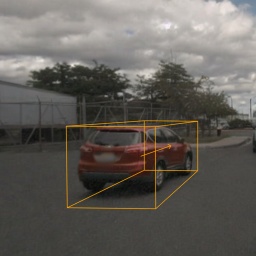} &
        \includegraphics[width=0.12\linewidth]{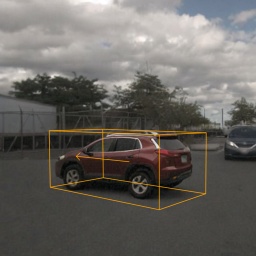} &
        \includegraphics[width=0.12\linewidth]{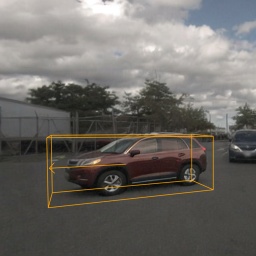} &
        \includegraphics[width=0.12\linewidth]{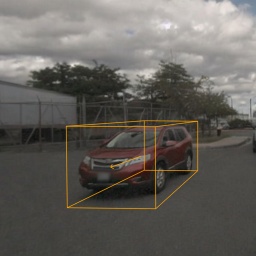} &
        \includegraphics[width=0.12\linewidth]{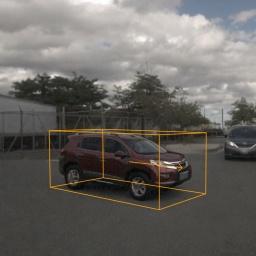} &
        \includegraphics[width=0.12\linewidth]{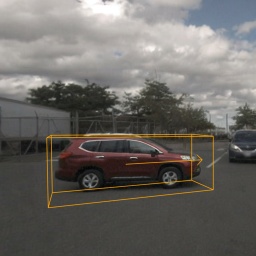} &
        \rotatebox{90}{\quad~ Camera} \\[1pt]
        
        &
        \includegraphics[width=0.12\linewidth]{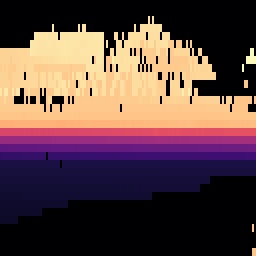} &
        \includegraphics[width=0.12\linewidth]{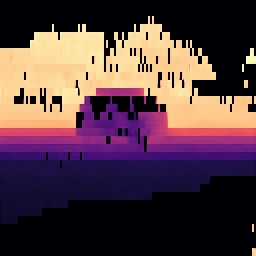} &
        \includegraphics[width=0.12\linewidth]{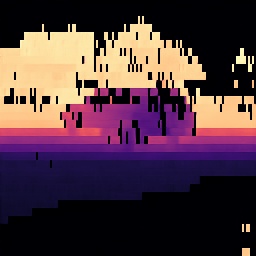} &
        \includegraphics[width=0.12\linewidth]{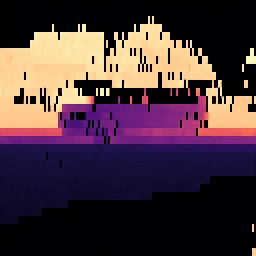} &
        \includegraphics[width=0.12\linewidth]{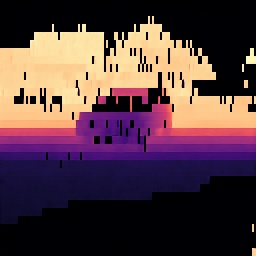} &
        \includegraphics[width=0.12\linewidth]{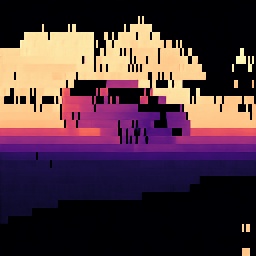} &
        \includegraphics[width=0.12\linewidth]{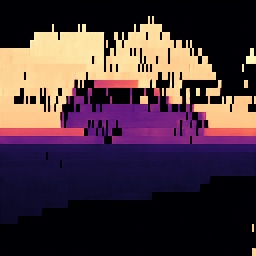} &
        \rotatebox{90}{~~ Lidar depth} \\[1pt]
        
        &
        \includegraphics[width=0.12\linewidth]{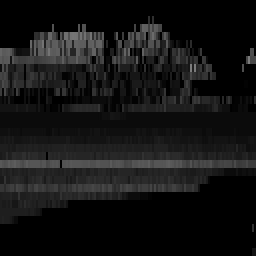} &
        \includegraphics[width=0.12\linewidth]{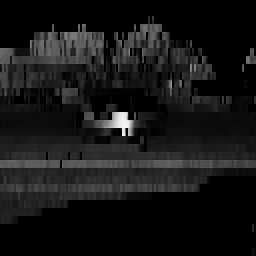} &
        \includegraphics[width=0.12\linewidth]{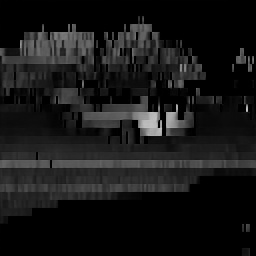} &
        \includegraphics[width=0.12\linewidth]{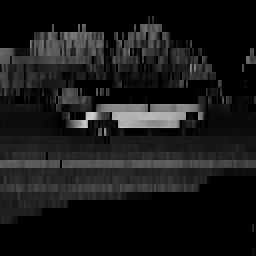} &
        \includegraphics[width=0.12\linewidth]{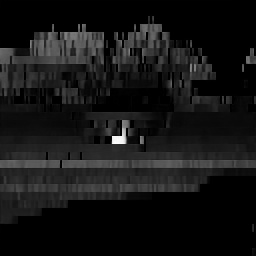} &
        \includegraphics[width=0.12\linewidth]{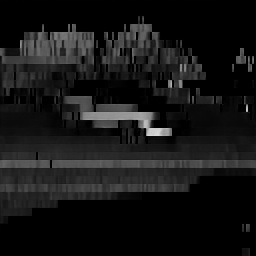} &
        \includegraphics[width=0.12\linewidth]{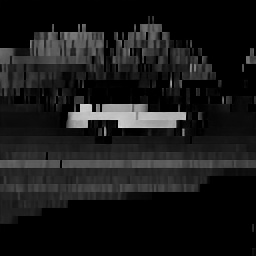} &
        \rotatebox{90}{Lidar intensity}\vspace{8pt}
    \end{tabular}
    
    \begin{tabular}{c@{\hspace{1pt}}c@{\hspace{8pt}}c@{\hspace{1pt}}c@{\hspace{1pt}}c@{\hspace{1pt}}c@{\hspace{1pt}}c@{\hspace{1pt}}c@{\hspace{4pt}}l}
        
        \includegraphics[width=0.12\linewidth]{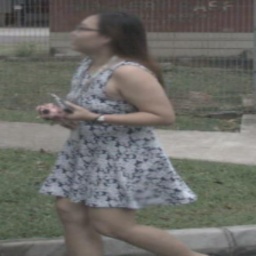} &
        \includegraphics[width=0.12\linewidth]{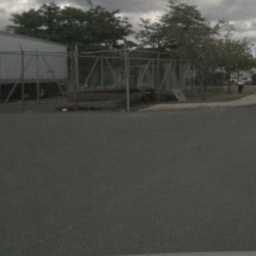} &
        \includegraphics[width=0.12\linewidth]{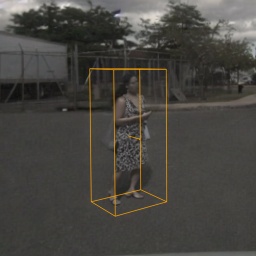} &
        \includegraphics[width=0.12\linewidth]{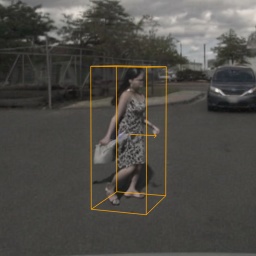} &
        \includegraphics[width=0.12\linewidth]{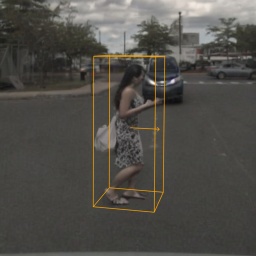} &
        \includegraphics[width=0.12\linewidth]{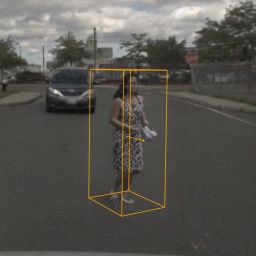} &
        \includegraphics[width=0.12\linewidth]{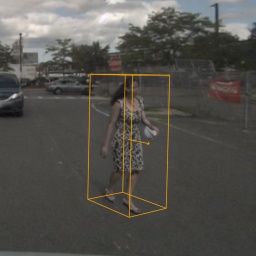} &
        \includegraphics[width=0.12\linewidth]{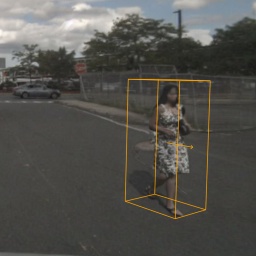} &
        \rotatebox{90}{\quad~ Camera} \\[1pt]
        
        &
        \includegraphics[width=0.12\linewidth]{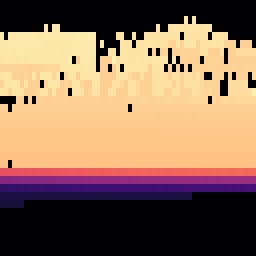} &
        \includegraphics[width=0.12\linewidth]{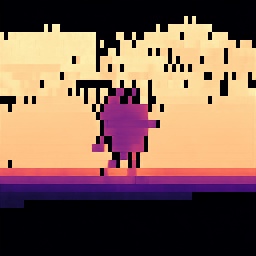} &
        \includegraphics[width=0.12\linewidth]{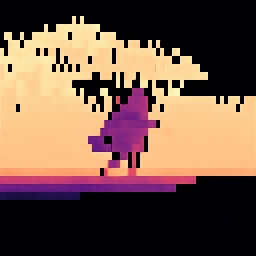} &
        \includegraphics[width=0.12\linewidth]{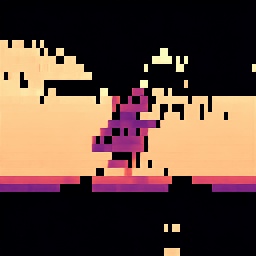} &
        \includegraphics[width=0.12\linewidth]{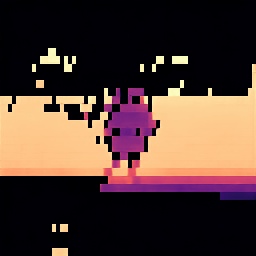} &
        \includegraphics[width=0.12\linewidth]{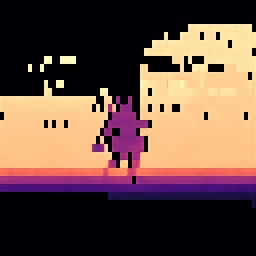} &
        \includegraphics[width=0.12\linewidth]{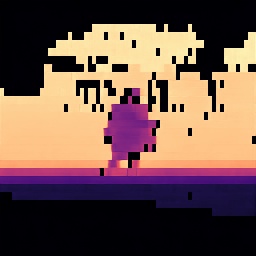} &
        \rotatebox{90}{~~ Lidar depth} \\[1pt]
        
        &
        \includegraphics[width=0.12\linewidth]{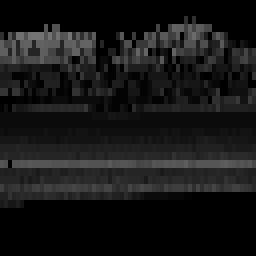} &
        \includegraphics[width=0.12\linewidth]{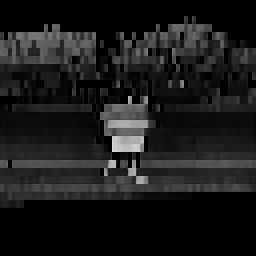} &
        \includegraphics[width=0.12\linewidth]{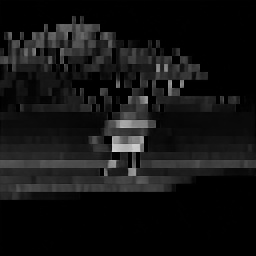} &
        \includegraphics[width=0.12\linewidth]{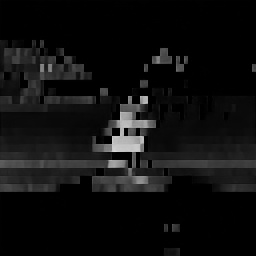} &
        \includegraphics[width=0.12\linewidth]{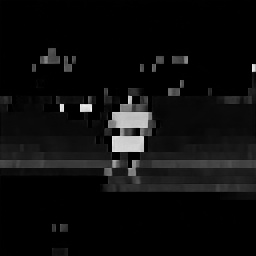} &
        \includegraphics[width=0.12\linewidth]{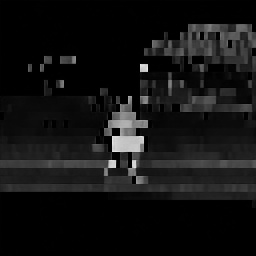} &
        \includegraphics[width=0.12\linewidth]{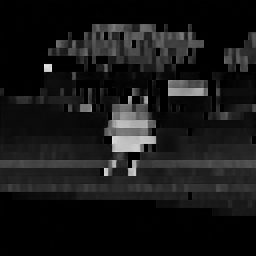} &
        \rotatebox{90}{Lidar intensity} \vspace{8pt}
    \end{tabular}
    \caption[Insertion with a rotating box showcasing the controllability of MObI.]{Examples showcasing the controllability of the proposed method. From left to right: reference image $ \mathbf{x}_{\text{ref}} $ extracted from a seperate source scene, original destination scene (original RGB image $ \mathbf{x}^{\text{(C)}} $, lidar range depth $ \mathbf{x}_0^{\text{(R)}} $ and intensity $ \mathbf{x}_1^{\text{(R)}} $), and edited scenes.}
    \label{fig:suppl:rotation_results}
  \end{figure*}

\begin{figure}[t!]
    \begin{center}
        \includegraphics[width=\linewidth, trim={0 24pt 0 66pt}, clip]{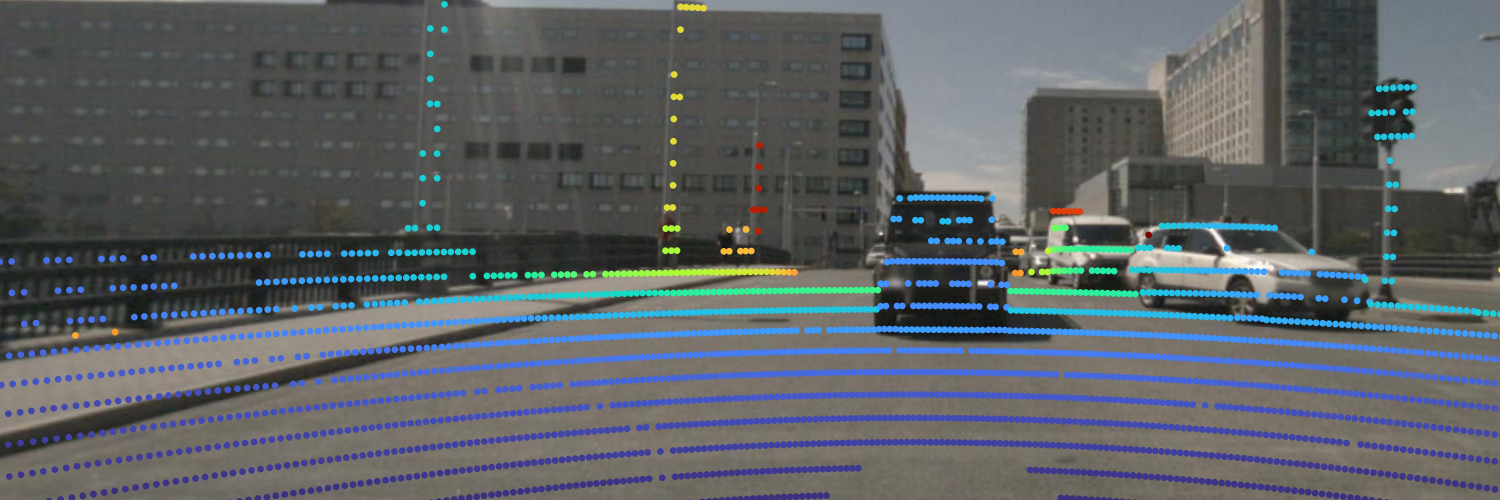}
        \includegraphics[width=\linewidth, trim={0 24pt 0 68pt}, clip]{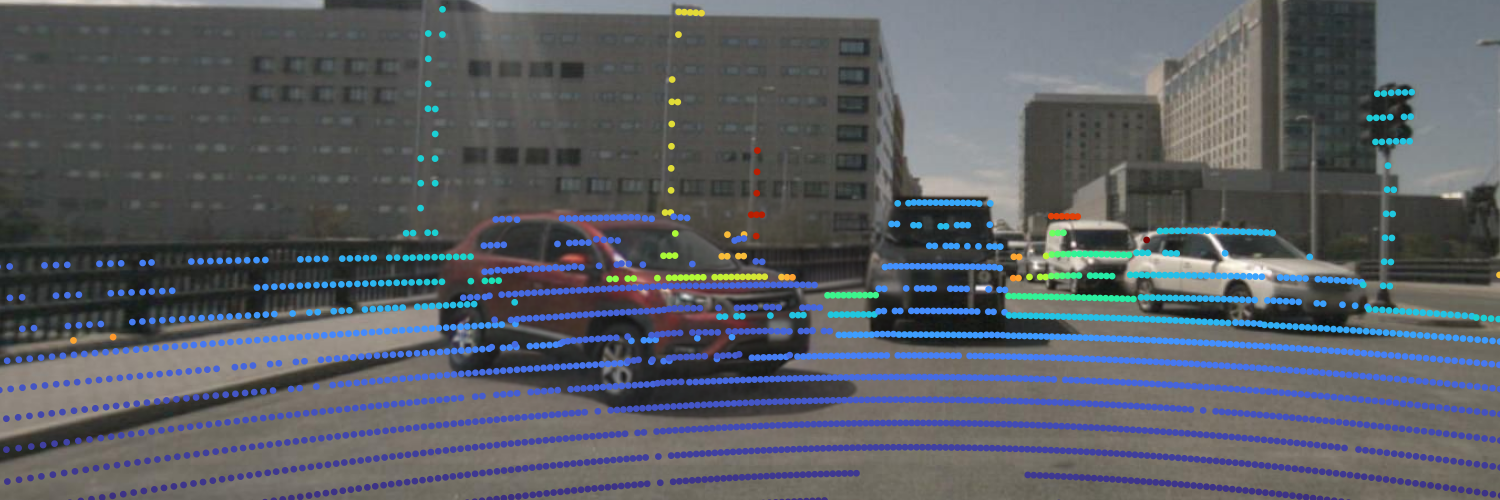}
    \end{center}
    \caption[Example of camera-lidar spatial compositing.]{Spatial compositing of camera-lidar object inpainting in a scene with complex lighting. Note that some background points are not overridden due to lidar reflections on the hood of the inserted car (bottom).}
    \label{fig:compositing}
\end{figure}

\begin{table}[h]
    \centering
    \footnotesize
    \begin{tabular}{l
                S[table-format=1.3] 
                S[table-format=1.3] 
                S[table-format=1.3] 
                S[table-format=1.3]}
        & \multicolumn{2}{c}{\textbf{Median depth error}} & \multicolumn{2}{c}{\textbf{MSE intensity}} \\
        \cmidrule(r){2-3} \cmidrule(r){4-5}
        \textbf{Lidar encoder} & {object} & {mask} & {object} & {mask} \\
        \midrule
        {pretrained image VAE~\cite{kingma2013auto}} & 0.451 & 0.320 & 7.854 & 7.372 \\
        {+ average pooling} & 0.306 & 0.263 & 3.496 & 3.236 \\
        {+ object-aware norm.} & 0.04 & 0.315 & 3.792 & 2.941 \\
        {+ fine-tune lidar adapter} & \textbf{0.037} & \textbf{0.180}  & \textbf{2.397} & \textbf{2.009} \\
        \bottomrule
        \\
    \end{tabular}
    \caption[Results with proposed techniques, improving object-centric lidar reconstruction.]{Adaptation methods of the pre-trained image VAE~\cite{kingma2013auto} from StableDiffusion~\cite{rombach2022high} showing improved lidar reconstruction for depth and intensity. Depth is reported in meters and intensity is on a scale of $ [0, 255] $.}
    \label{tab:lidar reconstruction}
\end{table}

\begin{figure*}[ht]
    \centering
    \footnotesize
    \renewcommand{\arraystretch}{0.3} 
    \begin{tabular}{c@{\hspace{1pt}}c@{\hspace{8pt}}c@{\hspace{1pt}}c@{\hspace{1pt}}c@{\hspace{1pt}}c@{\hspace{1pt}}c@{\hspace{1pt}}c@{\hspace{4pt}}l}
        {\textbf{Ref. image}} &
        {\textbf{Original scene}} &
        \multicolumn{6}{c}{\textbf{Edited scenes}} &
        \\[4pt]
        
        \includegraphics[width=0.12\linewidth]{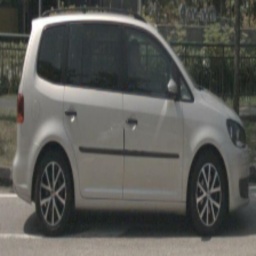} &
        \includegraphics[width=0.12\linewidth]{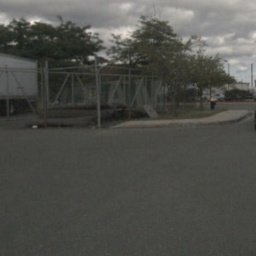} &
        \includegraphics[width=0.12\linewidth]{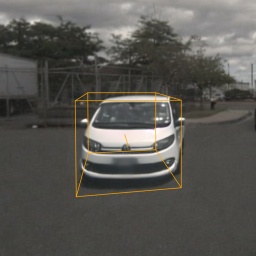} &
        \includegraphics[width=0.12\linewidth]{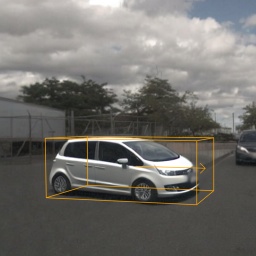} &
        \includegraphics[width=0.12\linewidth]{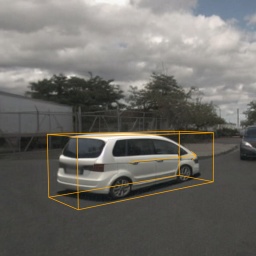} &
        \includegraphics[width=0.12\linewidth]{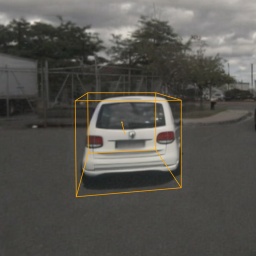} &
        \includegraphics[width=0.12\linewidth]{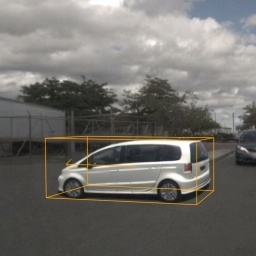} &
        \includegraphics[width=0.12\linewidth]{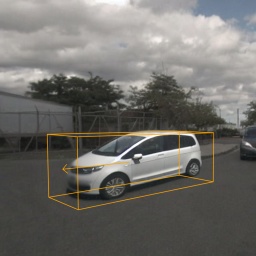} &
        \rotatebox{90}{\quad~ Camera} \\[1pt]
        
        &
        \includegraphics[width=0.12\linewidth]{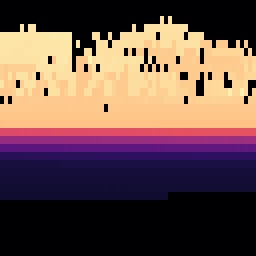} &
        \includegraphics[width=0.12\linewidth]{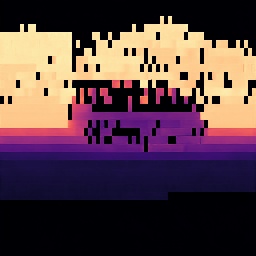} &
        \includegraphics[width=0.12\linewidth]{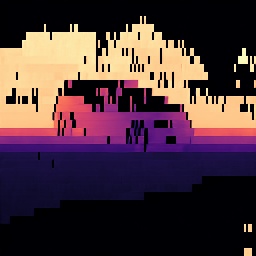} &
        \includegraphics[width=0.12\linewidth]{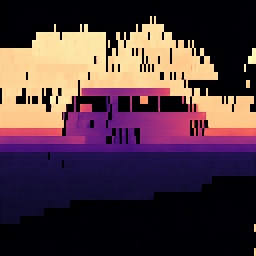} &
        \includegraphics[width=0.12\linewidth]{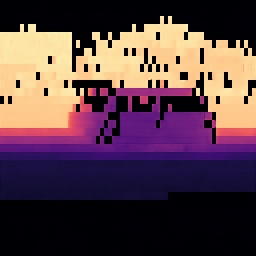} &
        \includegraphics[width=0.12\linewidth]{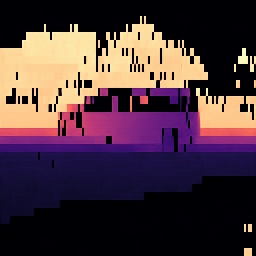} &
        \includegraphics[width=0.12\linewidth]{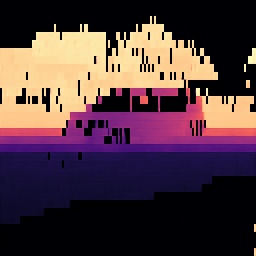} &
        \rotatebox{90}{~~ Lidar depth} \\[1pt]
        
        &
        \includegraphics[width=0.12\linewidth]{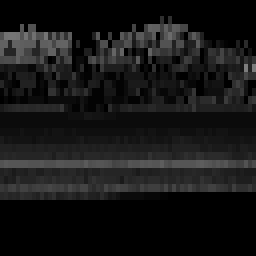} &
        \includegraphics[width=0.12\linewidth]{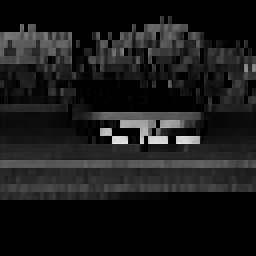} &
        \includegraphics[width=0.12\linewidth]{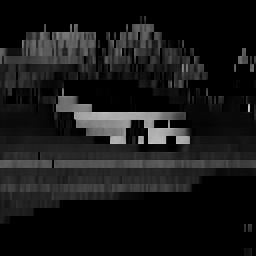} &
        \includegraphics[width=0.12\linewidth]{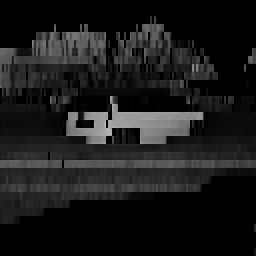} &
        \includegraphics[width=0.12\linewidth]{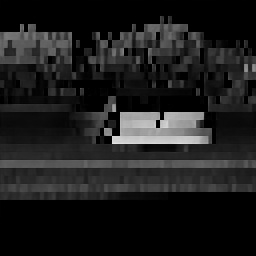} &
        \includegraphics[width=0.12\linewidth]{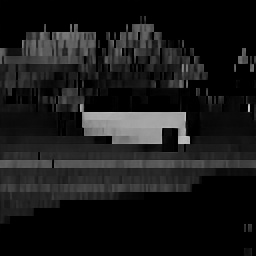} &
        \includegraphics[width=0.12\linewidth]{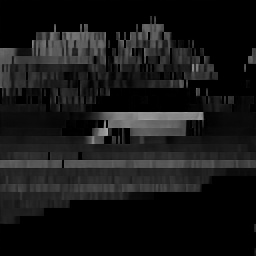} &
        \rotatebox{90}{Lidar intensity}\vspace{8pt}
    \end{tabular}
    
    \begin{tabular}{c@{\hspace{1pt}}c@{\hspace{8pt}}c@{\hspace{1pt}}c@{\hspace{1pt}}c@{\hspace{1pt}}c@{\hspace{1pt}}c@{\hspace{1pt}}c@{\hspace{4pt}}l}
        
        \includegraphics[width=0.12\linewidth]{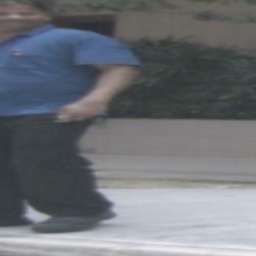} &
        \includegraphics[width=0.12\linewidth]{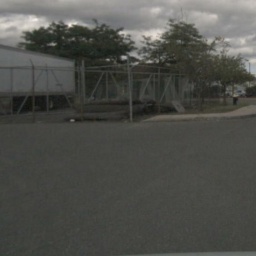} &
        \includegraphics[width=0.12\linewidth]{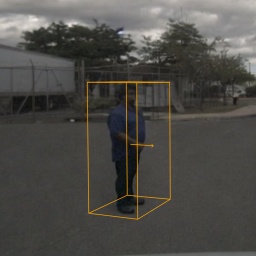} &
        \includegraphics[width=0.12\linewidth]{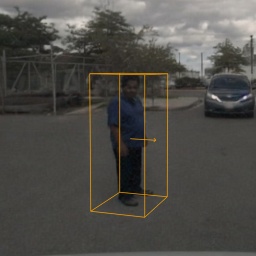} &
        \includegraphics[width=0.12\linewidth]{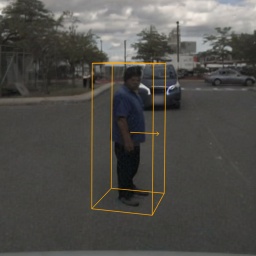} &
        \includegraphics[width=0.12\linewidth]{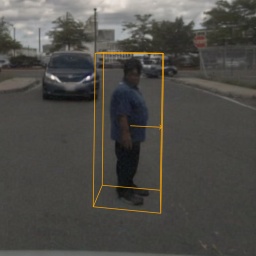} &
        \includegraphics[width=0.12\linewidth]{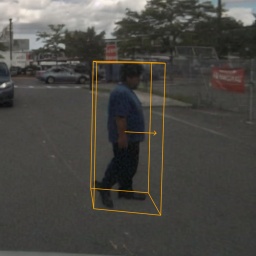} &
        \includegraphics[width=0.12\linewidth]{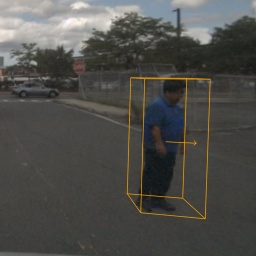} &
        \rotatebox{90}{\quad~ Camera} \\[1pt]
        
        &
        \includegraphics[width=0.12\linewidth]{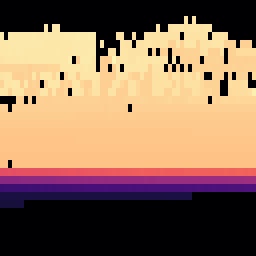} &
        \includegraphics[width=0.12\linewidth]{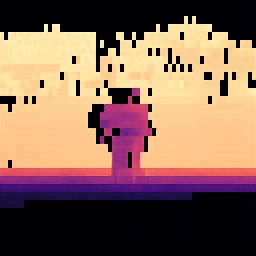} &
        \includegraphics[width=0.12\linewidth]{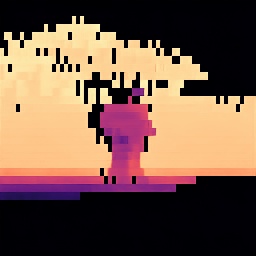} &
        \includegraphics[width=0.12\linewidth]{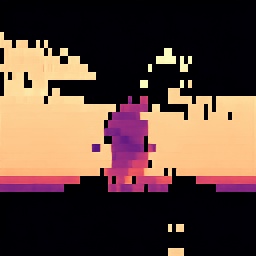} &
        \includegraphics[width=0.12\linewidth]{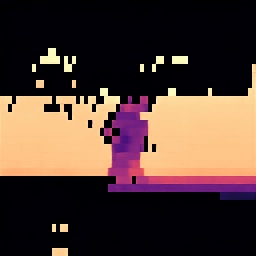} &
        \includegraphics[width=0.12\linewidth]{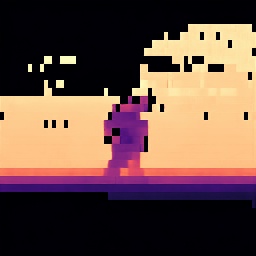} &
        \includegraphics[width=0.12\linewidth]{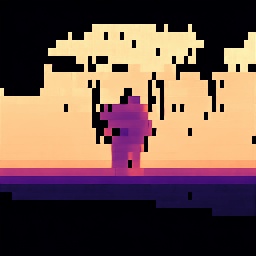} &
        \rotatebox{90}{~~ Lidar depth} \\[1pt]
        
        &
        \includegraphics[width=0.12\linewidth]{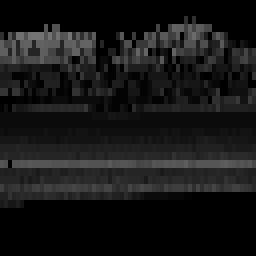} &
        \includegraphics[width=0.12\linewidth]{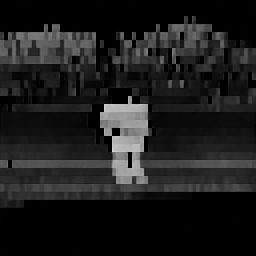} &
        \includegraphics[width=0.12\linewidth]{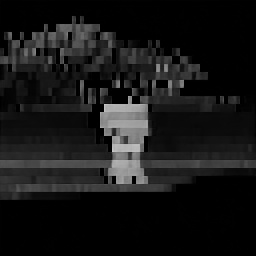} &
        \includegraphics[width=0.12\linewidth]{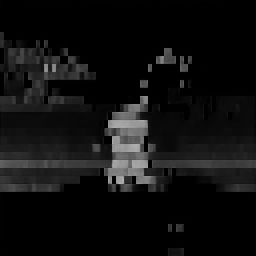} &
        \includegraphics[width=0.12\linewidth]{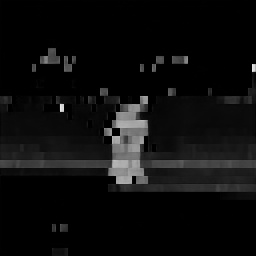} &
        \includegraphics[width=0.12\linewidth]{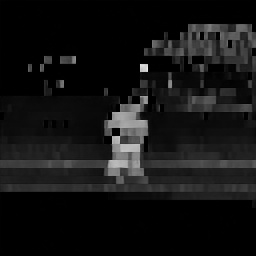} &
        \includegraphics[width=0.12\linewidth]{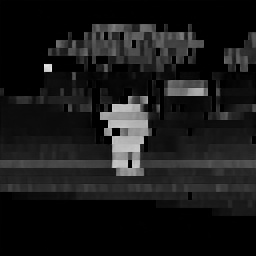} &
        \rotatebox{90}{Lidar intensity} \vspace{8pt}
    \end{tabular}
    \begin{tabular}{c@{\hspace{1pt}}c@{\hspace{8pt}}c@{\hspace{1pt}}c@{\hspace{1pt}}c@{\hspace{1pt}}c@{\hspace{1pt}}c@{\hspace{1pt}}c@{\hspace{4pt}}l}
        
        \includegraphics[width=0.12\linewidth]{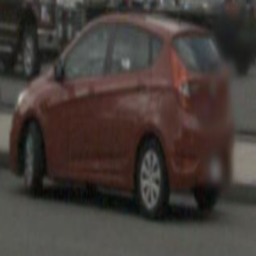} &
        \includegraphics[width=0.12\linewidth]{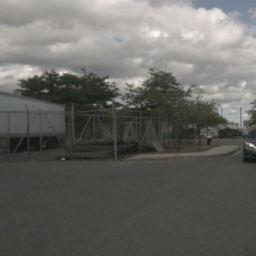} &
        \includegraphics[width=0.12\linewidth]{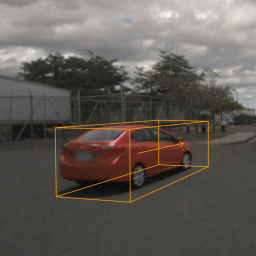} &
        \includegraphics[width=0.12\linewidth]{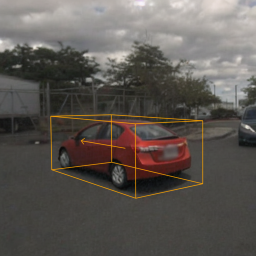} &
        \includegraphics[width=0.12\linewidth]{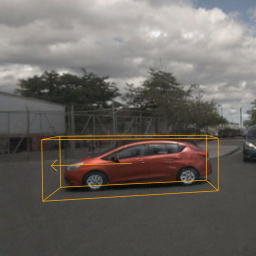} &
        \includegraphics[width=0.12\linewidth]{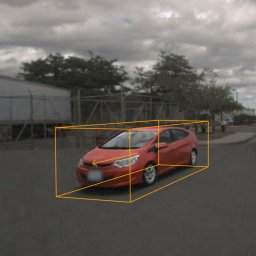} &
        \includegraphics[width=0.12\linewidth]{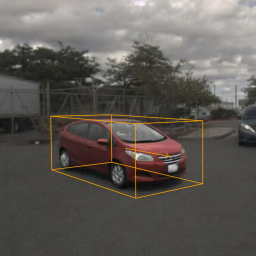} &
        \includegraphics[width=0.12\linewidth]{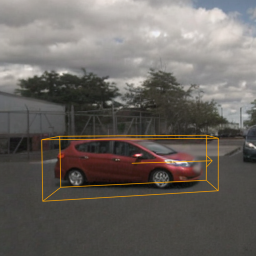} &
        \rotatebox{90}{\quad~ Camera} \\[1pt]
        
        &
        \includegraphics[width=0.12\linewidth]{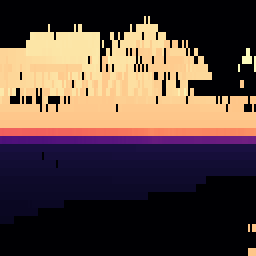} &
        \includegraphics[width=0.12\linewidth]{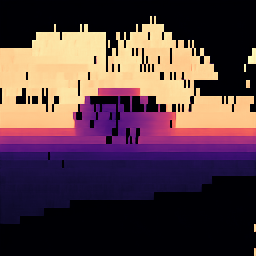} &
        \includegraphics[width=0.12\linewidth]{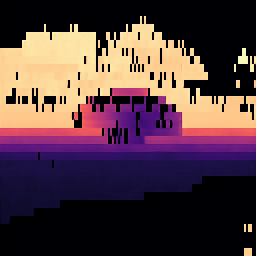} &
        \includegraphics[width=0.12\linewidth]{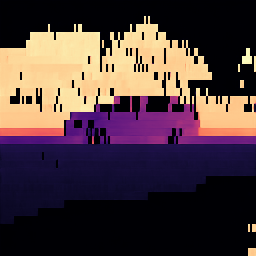} &
        \includegraphics[width=0.12\linewidth]{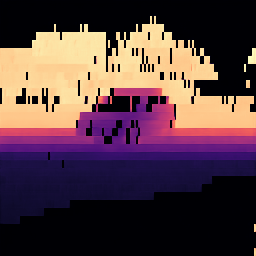} &
        \includegraphics[width=0.12\linewidth]{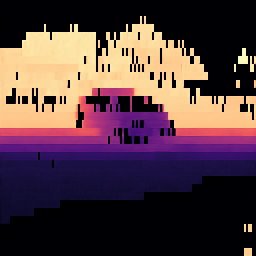} &
        \includegraphics[width=0.12\linewidth]{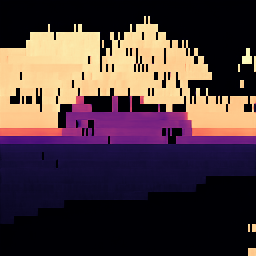} &
        \rotatebox{90}{~~ Lidar depth} \\[1pt]
        
        &
        \includegraphics[width=0.12\linewidth]{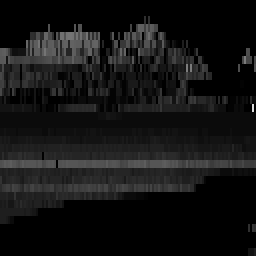} &
        \includegraphics[width=0.12\linewidth]{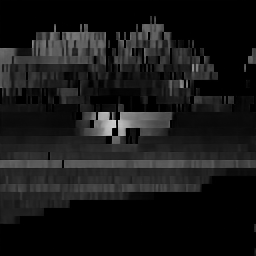} &
        \includegraphics[width=0.12\linewidth]{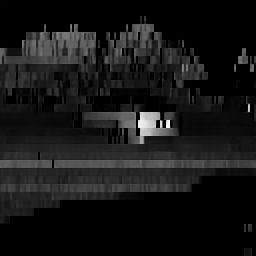} &
        \includegraphics[width=0.12\linewidth]{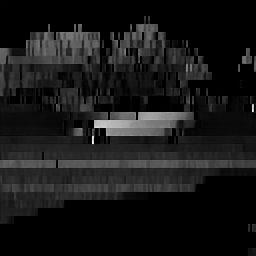} &
        \includegraphics[width=0.12\linewidth]{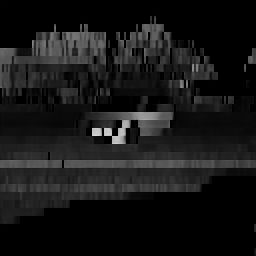} &
        \includegraphics[width=0.12\linewidth]{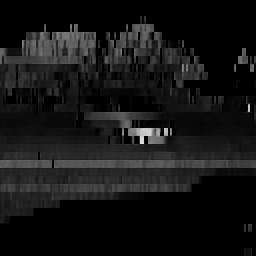} &
        \includegraphics[width=0.12\linewidth]{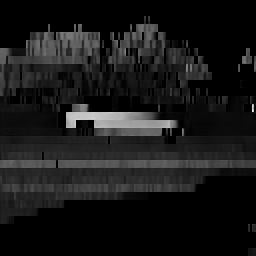} &
        \rotatebox{90}{Lidar intensity}
    \end{tabular}
    \caption[Additional examples of insertion with a rotating box.]{Additional examples showcasing the controllability of the proposed method. From left to right: reference image $ \mathbf{x}_{\text{ref}} $ extracted from a seperate source scene, original destination scene (original RGB image $ \mathbf{x}^{\text{(C)}} $, Lidar range depth $ \mathbf{x}_0^{\text{(R)}} $ and intensity $ \mathbf{x}_1^{\text{(R)}} $), and edited scenes.}
    \label{fig:suppl:controllability_main_full}
  \end{figure*}

\subsection{Realism of the inpainting} 
\label{sec:experiments:realism}

\paragraph{Camera realism metrics}
The realism of the camera inpainting is evaluated using the following metrics: Fréchet Inception Distance (FID)~\cite{heusel2017gans}, Learned Perceptual Image Patch Similarity (LPIPS)~\cite{zhang2018unreasonable}, and CLIP-I~\cite{ruiz2023dreambooth}. The CLIP-I score is computed by evaluating the cosine similarity between the CLIP embeddings of the inpainted region and the reference object. This score reflects how well the inpainted object preserves semantic and high-level visual characteristics, as captured by the CLIP image encoder~\cite{radford2021learning}. A higher CLIP-I score indicates better alignment with the reference object's identity and structure.

FID quantifies the realism of inpainted object patches by comparing their feature distribution to that of real patches. Specifically, features are extracted from the inpainted and real patches using a pretrained Inception network~\cite{szegedy2015going}, and each set of features is assumed to follow a multivariate Gaussian distribution. The Fréchet distance between these two distributions is then computed. Lower FID scores indicate that the distribution of inpainted patches is closer to the distribution of real ones, thus implying higher realism.

LPIPS measures perceptual similarity between inpainted and ground truth patches by comparing their feature maps across several layers of a deep neural network. Unlike pixel-wise metrics, LPIPS captures differences at multiple levels of abstraction, making it more aligned with human perception. A lower LPIPS score corresponds to higher perceptual similarity between the inpainted and real patches.

For FID and LPIPS, the evaluation is carried out on extended patches around the object, extracted from the final composited images, and compared against corresponding patches from the original images. For CLIP-I, the evaluation considers only the region within the 2D bounding box of the inpainted object and its corresponding reference image.

\paragraph{Lidar realism metrics}
To the best of our knowledge, metrics explicitly designed for lidar editing, particularly those capable of capturing fine perceptual differences, are not available. Existing metrics based on the Fréchet distance~\cite{lidardiffusion, lidargen, nakashima2024lidar} have been applied to full lidar point clouds, but they lack the granularity required to detect object-level differences, which are essential for tasks such as actor insertion and detailed editing.

To address this limitation, the differences in depth and intensity between the original and inpainted range images are assessed by applying LPIPS~\cite{zhang2018unreasonable} to rasterised patches. This results in the following adapted perceptual distances: \( \text{D-LPIPS}( \mathbf{x}^{\text{(R)}}_0, \tilde{\mathbf{x}}^{\text{(R)}}_0) \) for depth and \( \text{I-LPIPS}( \mathbf{x}^{\text{(R)}}_1, \tilde{\mathbf{x}}^{\text{(R)}}_1) \) for intensity. 

The output of the diffusion model (after the range decoder), normalised between 0 and 1, is used for this evaluation. Both the depth and intensity maps are tiled three times to form RGB images suitable for LPIPS. Individual scores for depth and intensity are then reported by averaging the corresponding perceptual distances across all patch pairs.

\paragraph{Results}
All realism metrics for camera-lidar object inpainting are reported in~\autoref{tab:full-realism} for the reinsertion and replacement settings. Compared to camera-only inpainting methods, MObI ($D=512$) achieves better results than PbE~\cite{yang2023paint} across almost all benchmarks. Note that this method achieves competitive results in terms of FID, producing samples which are close in distribution to the target ones, yet LPIPS is much worse. This perceptual misalignment, which is more severe than even MObI with $D=256$ with no bounding box conditioning, might indicate that the use of joint generation of camera and lidar improves semantic consistency within the scene.
A comparison was also made with a simple copy\&paste baseline, which was shown to produce unrealistic composited images when replacing objects, despite its occasional use in training object detectors~\cite{georgakis2017synthesizing, dwibedi2017cut, wang2021pointaugmenting, zhang2020exploring}. It should be noted that object reinsertion results for copy\&paste, as well as CLIP-I scores, were not computed, as such comparisons would not be fair. A breakdown of camera realism metrics for each evaluation setting is provided in~\autoref{tab:supp:camera-realism-comparison}.

Ablations were conducted for the 3D bounding box and the gated cross-attention adapter for $D=256$. When the adapter was removed, the box token was concatenated with the reference token in the PbE~\cite{yang2023paint} cross-attention layer, followed by direct fine-tuning. Due to the absence of established baselines for lidar object inpainting realism, comparative results were provided across all experiments and ablations, with the intention of establishing a foundation for future work. 

When MObI (256) with both bounding box conditioning and the adapter was compared to its counterpart without bounding box conditioning, significant improvements in perceptual alignment were observed. Using the gated cross-attention adapter resulted in more realistic samples in the camera space; however, no improvement was observed for lidar, suggesting differences in the training dynamics of the two modalities.

Finally, it is observed that realism scales strongly with resolution, indicating that models operating at higher resolutions could potentially achieve greater realism.

\begin{table*}
    \centering
    \scriptsize
    \begin{tabular}{cccccccccccccc}
        \multicolumn{3}{c}{} & \multicolumn{5}{c}{\textbf{Reinsertion}} & \multicolumn{5}{c}{\textbf{Replacement}} \\
        \cmidrule(r){4-8}  \cmidrule(r){9-13}
        \multicolumn{3}{c}{} & \multicolumn{3}{c}{\textbf{Camera Realism}} & \multicolumn{2}{c}{\textbf{Lidar Realism}} & \multicolumn{3}{c}{\textbf{Camera Realism}} & \multicolumn{2}{c}{\textbf{Lidar Realism}} \\
        \cmidrule(r){4-6} \cmidrule(r){7-8} \cmidrule(r){9-11} \cmidrule(r){12-13}
        \textbf{Model} & 3D Box & Adapter & FID\textdownarrow & LPIPS\textdownarrow & CLIP-I\textuparrow & D-LPIPS\textdownarrow & I-LPIPS\textdownarrow & FID\textdownarrow & LPIPS\textdownarrow & CLIP-I\textuparrow & D-LPIPS\textdownarrow & I-LPIPS\textdownarrow \\
        \midrule
        copy\&paste & \multicolumn{2}{c}{n/a} & \multicolumn{3}{c}{n/a} & \multicolumn{2}{c}{n/a} & 15.29 & 0.205 & n/a & \multicolumn{2}{c}{n/a} \\
        PbE~\cite{yang2023paint} & \multicolumn{2}{c}{n/a} & \underline{7.46} & 0.133 & \underline{83.91} & \multicolumn{2}{c}{n/a} & 10.08 & 0.149 & \textbf{77.25} & \multicolumn{2}{c}{n/a}\\
        \midrule
        & \ding{55} & \checkmark & 8.18 & 0.123 & 82.56 & 0.195 & 0.231 & 10.31 & 0.140 & \underline{77.22} & 0.198 & \underline{0.236} \\
        MObI (256)& \checkmark & \ding{55} & 8.31 & 0.120 & 82.88 & \underline{0.188} & 0.231 & 10.43 & 0.134 & 76.03 & \underline{0.191} & 0.237 \\
        & \checkmark & \checkmark & {7.74} & \underline{0.119} & 83.03 & 0.192 & \underline{0.230} & \underline{9.87} & \underline{0.133} & {76.75} & 0.195 & \underline{0.236} \\
        \midrule
        MObI (512) & \checkmark & \checkmark & \textbf{6.60} & \textbf{0.115} & \textbf{84.22} & \textbf{0.129} & \textbf{0.148} & \textbf{9.00} & \textbf{0.129} & {76.75} & \textbf{0.132} & \textbf{0.153}  \\
        \bottomrule
    \end{tabular}
    \caption[Quantitative results for reinsertion and replacement tasks given realism metrics.]{Camera and lidar realism metrics for the reinsertion and replacement tasks are reported, with values averaged over the \textit{tracked} and \textit{same reference} settings for reinsertion, and the \textit{in-domain} and \textit{cross-domain reference} settings for replacement. Comparisons are made with camera-only methods, and separate ablations on the use of the 3D bounding box and the gated cross-attention adapter are provided. The best result is denoted in \textbf{bold}, and the second-best is indicated with \underline{underline}.}
    \label{tab:full-realism}
\end{table*}

\begin{table*}
    \centering
    \scriptsize
    \begin{tabular}{lcccccccccccc}
        & \multicolumn{6}{c}{\textbf{Reinsertion}} & \multicolumn{6}{c}{\textbf{Replacement}}\\
        \cmidrule(r){2-7} \cmidrule(r){8-13}
        & \multicolumn{3}{c}{\textbf{same ref}} & \multicolumn{3}{c}{\textbf{tracked ref}} & \multicolumn{3}{c}{\textbf{in-domain ref}} & \multicolumn{3}{c}{\textbf{cross-domain ref}} \\
        \cmidrule(r){2-4} \cmidrule(r){5-7} \cmidrule(r){8-10} \cmidrule(r){11-13}
        \textbf{Method} & FID\textdownarrow & LPIPS\textdownarrow & CLIP-I\textuparrow & FID\textdownarrow & LPIPS\textdownarrow & CLIP-I\textuparrow & FID\textdownarrow & LPIPS\textdownarrow & CLIP-I\textuparrow & FID\textdownarrow & LPIPS\textdownarrow & CLIP-I\textuparrow \\
        \midrule
        copy\&paste & \multicolumn{6}{c}{n/a} & 13.50 & 0.196 & n/a & 17.08 & 0.213 & n/a \\
        PbE~\cite{yang2023paint} & 7.34 & 0.131 & 84.50 & 7.58 & 0.135 & 83.31 & 9.62 & 0.148 & 77.44 & 10.54 & 0.150 & \textbf{77.06} \\
        MObI & \textbf{6.50} & \textbf{0.114} & \textbf{84.94} & \textbf{6.70} & \textbf{0.115} & \textbf{83.50} & \textbf{8.95} & \textbf{0.127} & \textbf{77.50} & \textbf{9.05} & \textbf{0.130} & 76.00 \\
        \bottomrule
        \\
    \end{tabular}
    \caption[Breakdown of camera realism metrics for four evaluation settings.]{Breakdown of camera realism metrics for each evaluation setting, when compared with image inpainting methods, at $D=512$.}
    \label{tab:supp:camera-realism-comparison}
\end{table*}

\subsection{Object detection on reinserted objects}
\label{sec:domain-gap}

\paragraph{Setup}
The inpainted objects must correspond tightly to the 3D bounding box used during generation in order to be useful for various downstream tasks.
The quality of the 3D-box conditioning is analysed using an off-the-shelf object detector, and the detections are compared to the boxes used for conditioning.

The nuScenes validation split is employed, and objects to be reinserted are selected based on the same filters as in \autoref{sec:method:training details}. In cases where multiple such objects exist per frame, one is selected at random, resulting in a total of 372 objects. The \emph{tracked reference} procedure described in \autoref{sec:experiments:implementation} is followed, and each selected object is replaced using MObI, conditioned on a reference of the same object taken at a randomly chosen timestamp that is distant from the inpainting timestamp.

The evaluation is restricted to those scenes that have been edited. The multimodal BEVFusion~\cite{liu2023bevfusion} object detector, equipped with a SwinT~\cite{liu2021swin} backbone and trained on nuScenes, is used for detection. Lidar points are not accumulated over successive sweeps during the evaluation.

\begin{figure*}[ht]
        \begin{minipage}[b]{.6\linewidth}
        \scriptsize
        \raisebox{1.9cm}{
        \begin{tabular}{lcccccccc}
            & \multicolumn{2}{c}{\textbf{Scene-level}} & \multicolumn{6}{c}{\textbf{Restricted to reinserted objects}}\\
            \cmidrule(r){2-3} \cmidrule(r){4-9}
            & \multicolumn{2}{c}{\textbf{mAP}} & \multicolumn{2}{c}{\textbf{ATE}} & \multicolumn{2}{c}{\textbf{ASE}} & \multicolumn{2}{c}{\textbf{AOE}} \\
            \cmidrule(r){2-3} \cmidrule(r){4-5} \cmidrule(r){6-7} \cmidrule(r){8-9}
            & car & ped. & car & ped. & car & ped. & car & ped. \\
            \midrule
            Original     & 0.89 & 0.87 & 0.15 & 0.10 & 0.14 & 0.28 & 0.02 & 0.46 \\
            Reinsertions & 0.88 & 0.86 & 0.30 & 0.14 & 0.15 & 0.30 & 0.16 & 0.75 \\
            \bottomrule
        \end{tabular}
        }
        \end{minipage}
        \hfill
        \begin{minipage}[b]{.39\linewidth}
            \includegraphics[width=\linewidth]{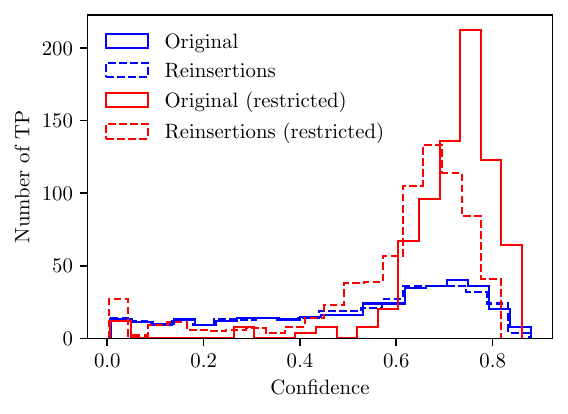}
            \vspace{-0.7cm}
        \end{minipage}
        \caption[Detection performance of existing 3D object detector on reinserted objects.]{Detection performance of an off-the-shelf BEVFusion~\cite{liu2023bevfusion} object detector on objects reinserted using the proposed method. Left: mAP is computed at the scene-level, and TP errors (translation, scale, and orientation) are computed on the reinserted objects only.
        Left: the distribution of the scores of the true-positives is shown to shift modestly towards lower scores for edited objects.
        }
        \label{fig:detection-results}
\end{figure*}

\begin{figure*}[ht]
    \centering
    \begin{minipage}{\textwidth}
    \centering
    \begin{tabular}{ccc}
    \text{Ground Truth} & \text{Vanilla detection} & \text{Detection on reinserted samples} \\
    \includegraphics[width=0.33\linewidth]{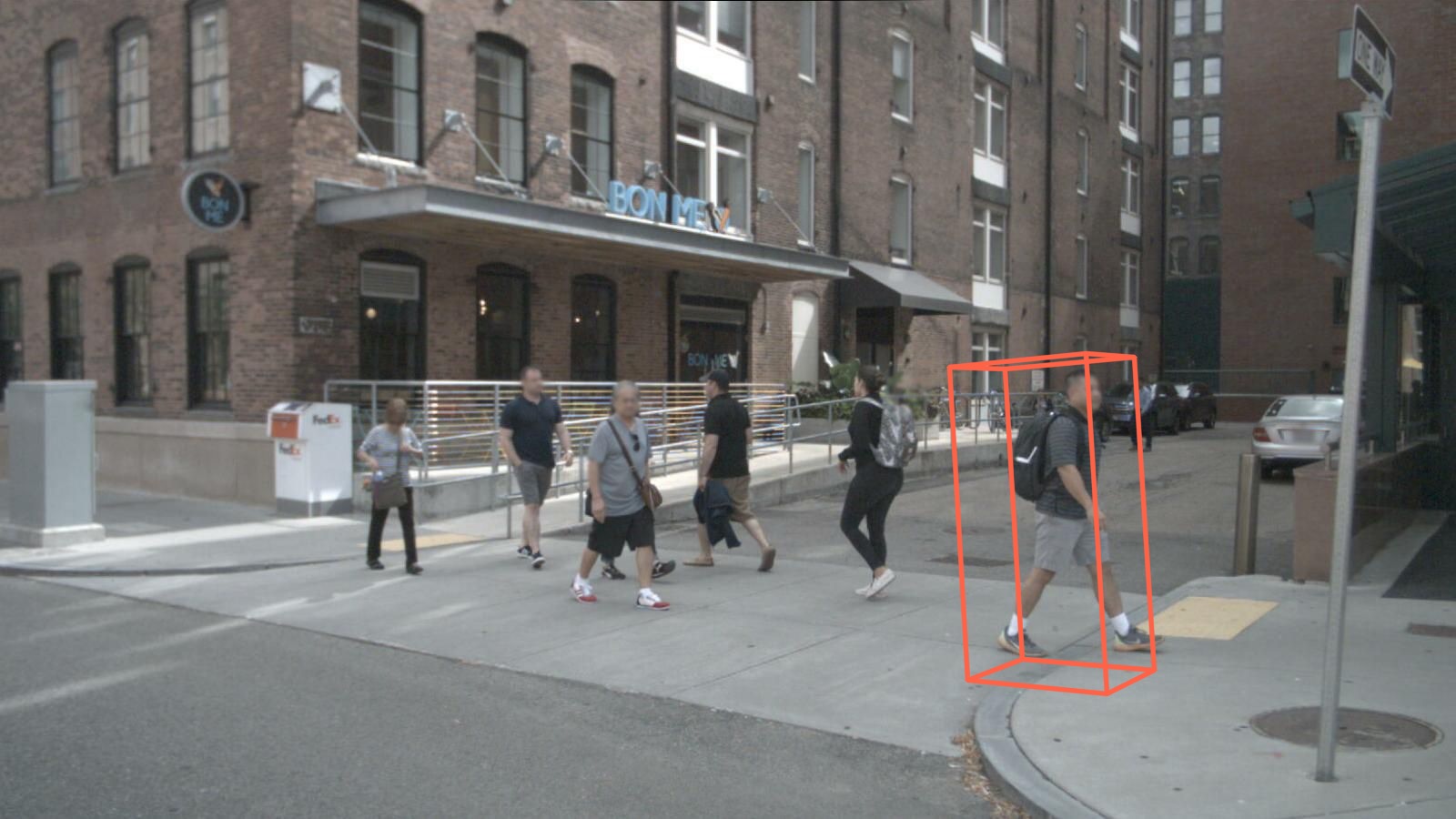} & 
    \includegraphics[width=0.33\linewidth]{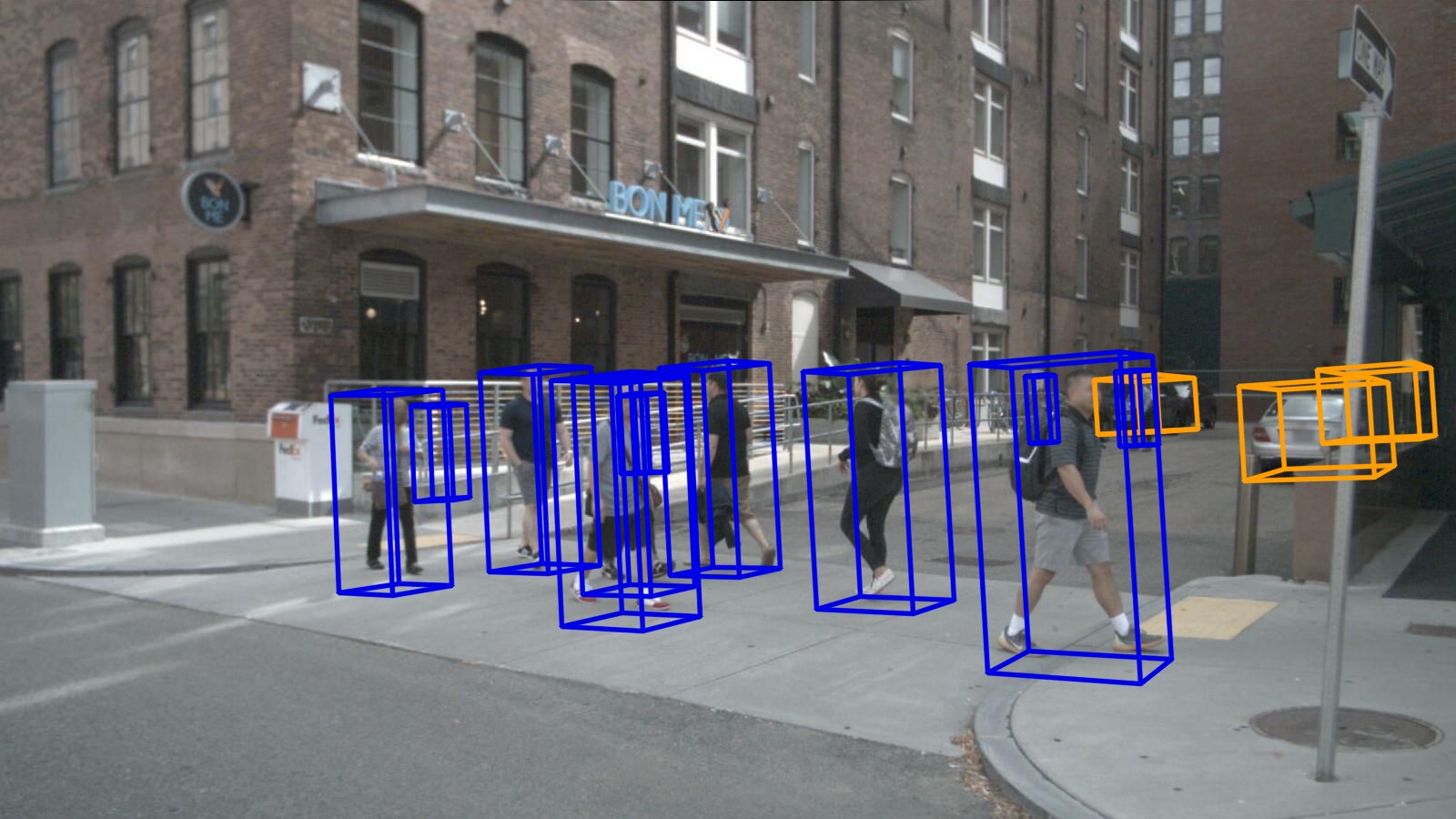} & 
    \includegraphics[width=0.33\linewidth]{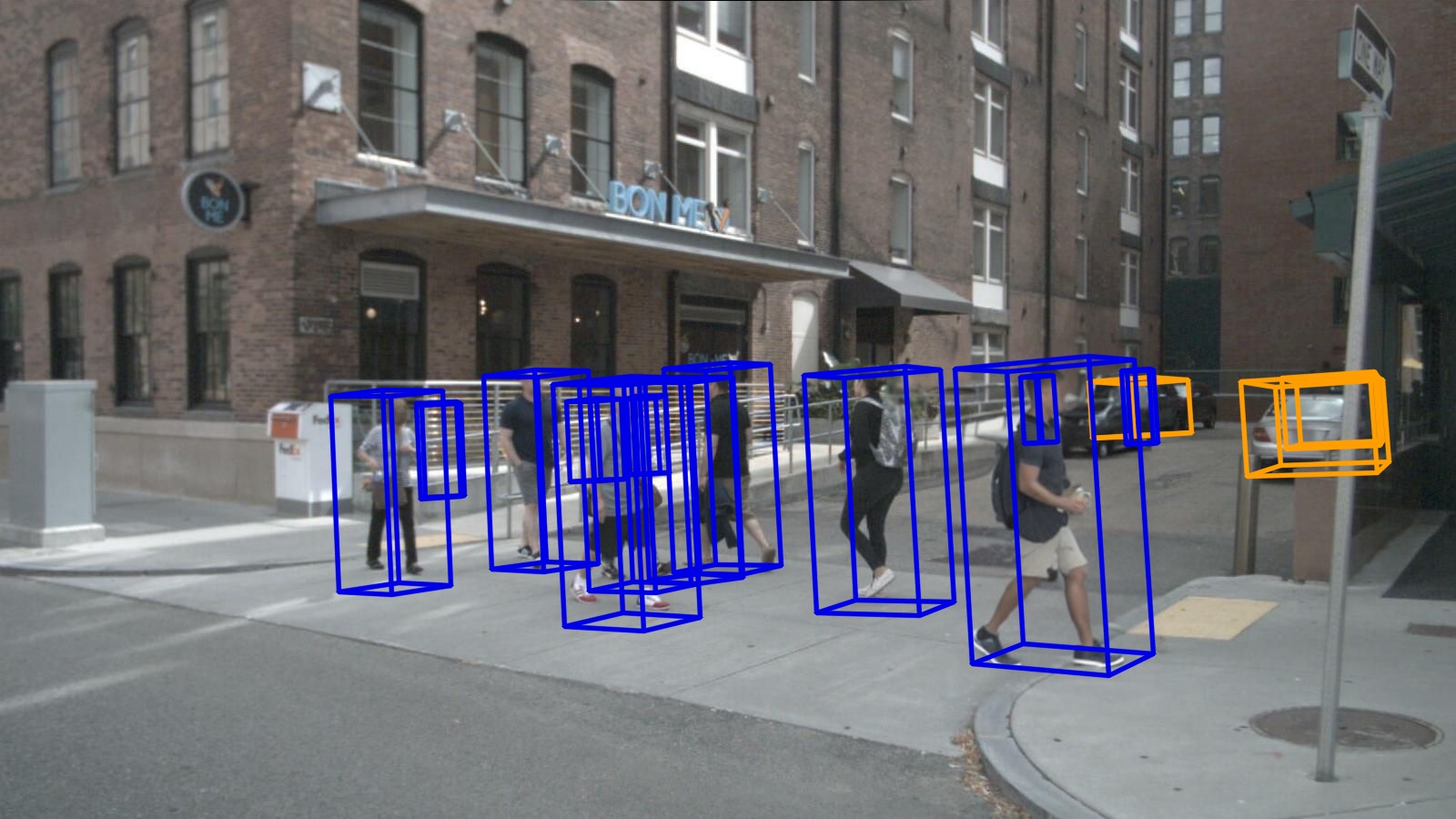} \\
    \includegraphics[width=0.33\linewidth]{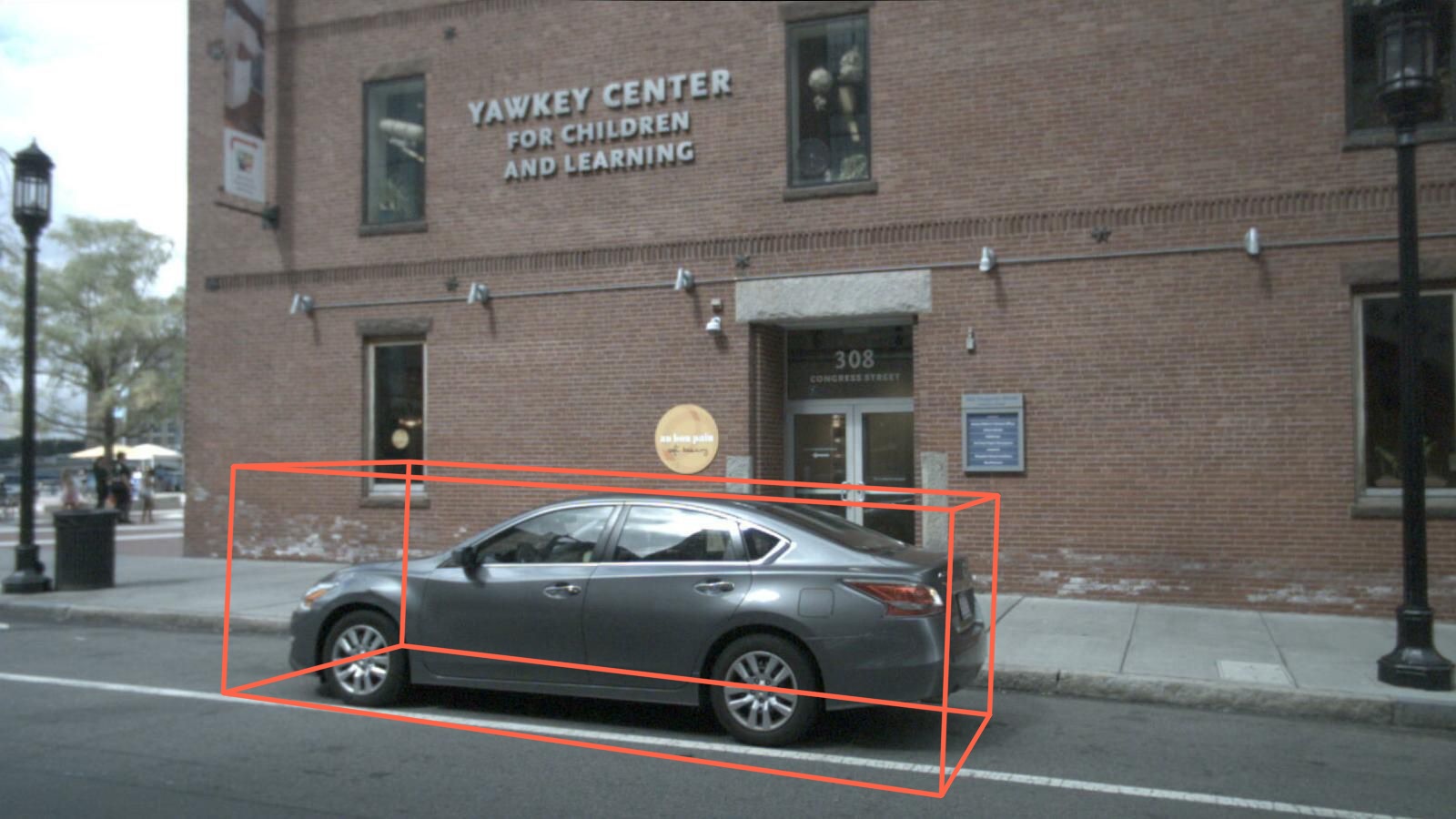} & 
    \includegraphics[width=0.33\linewidth]{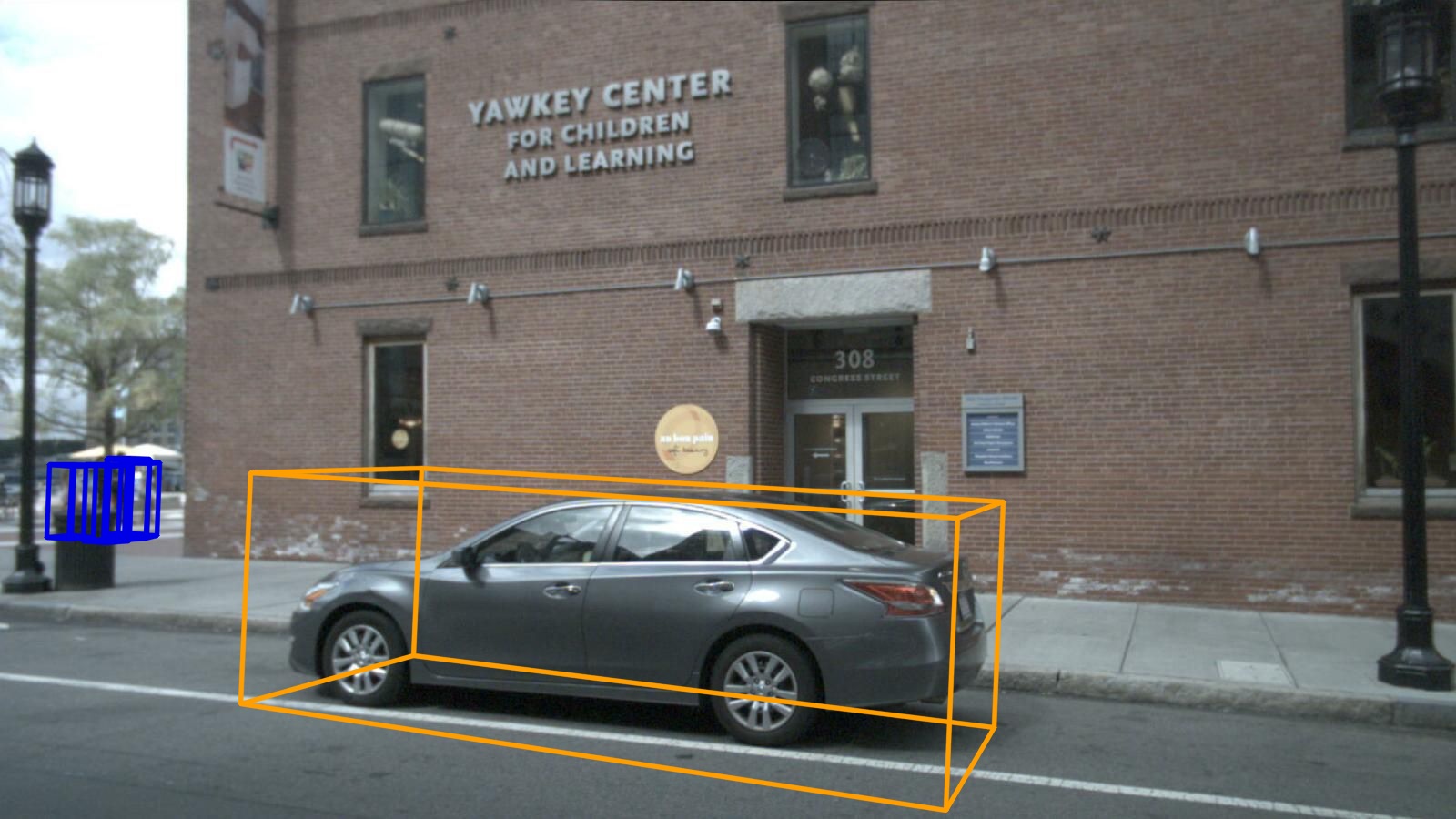} & 
    \includegraphics[width=0.33\linewidth]{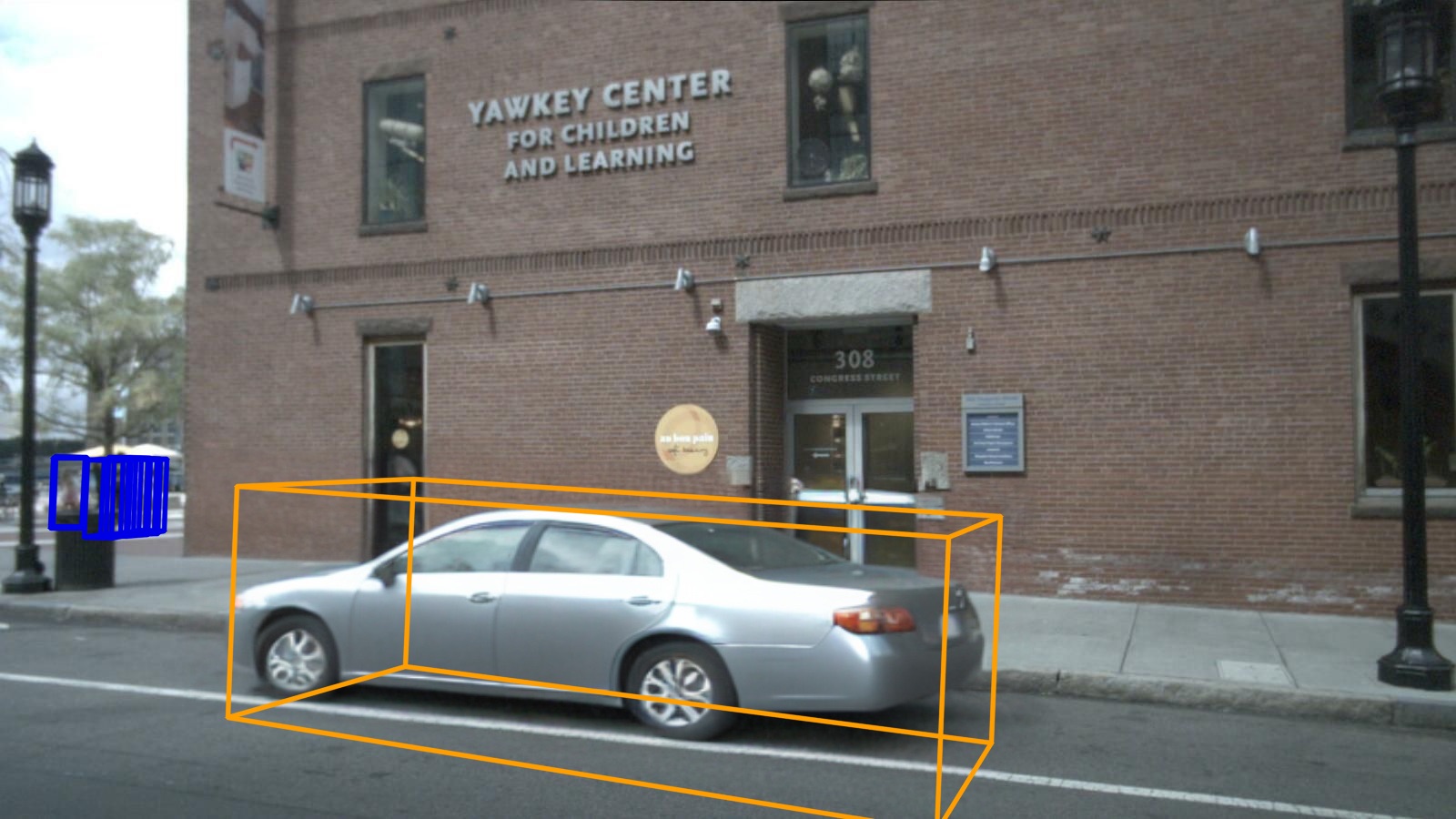} \\
    \includegraphics[width=0.33\linewidth]{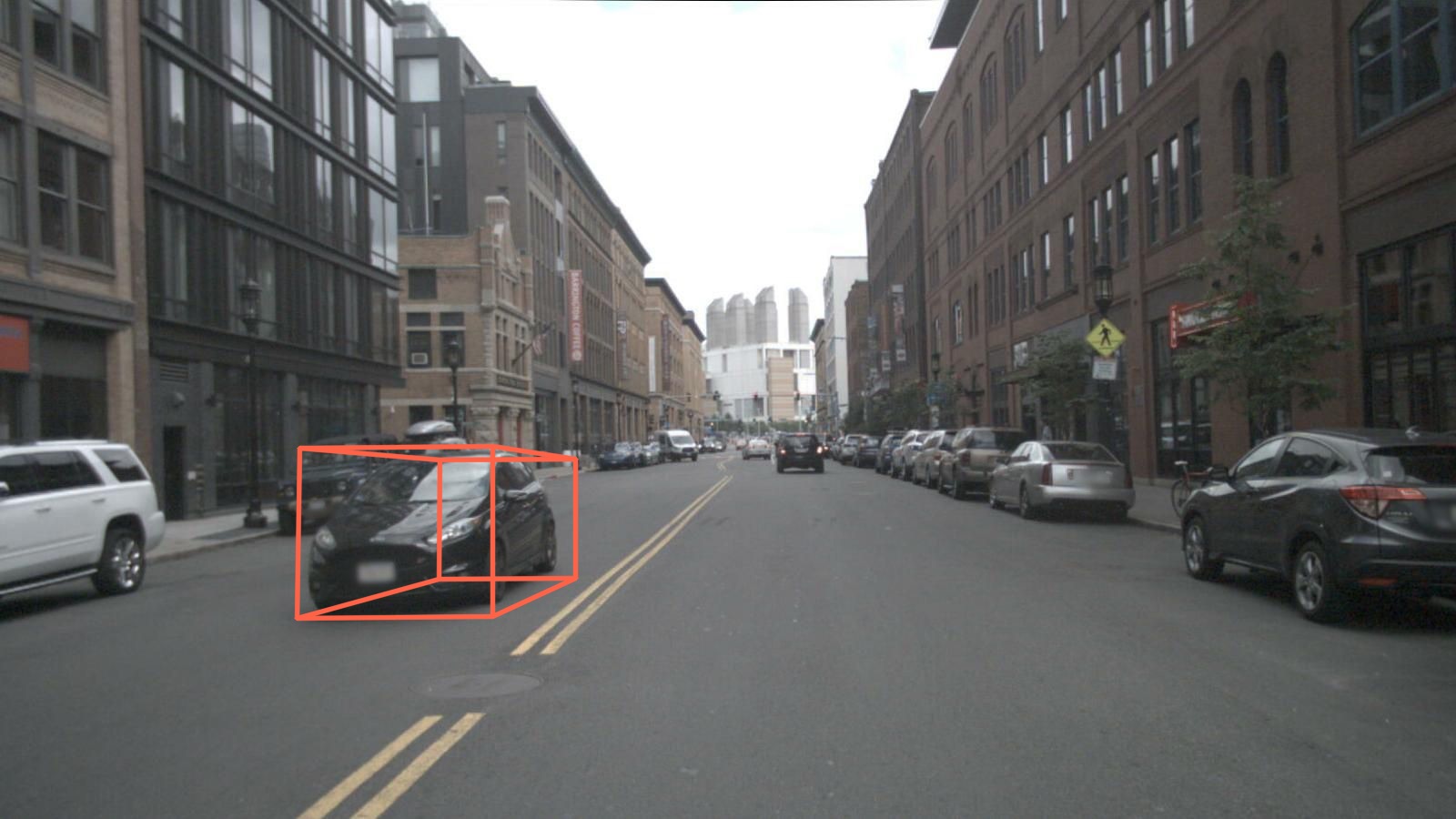} & 
    \includegraphics[width=0.33\linewidth]{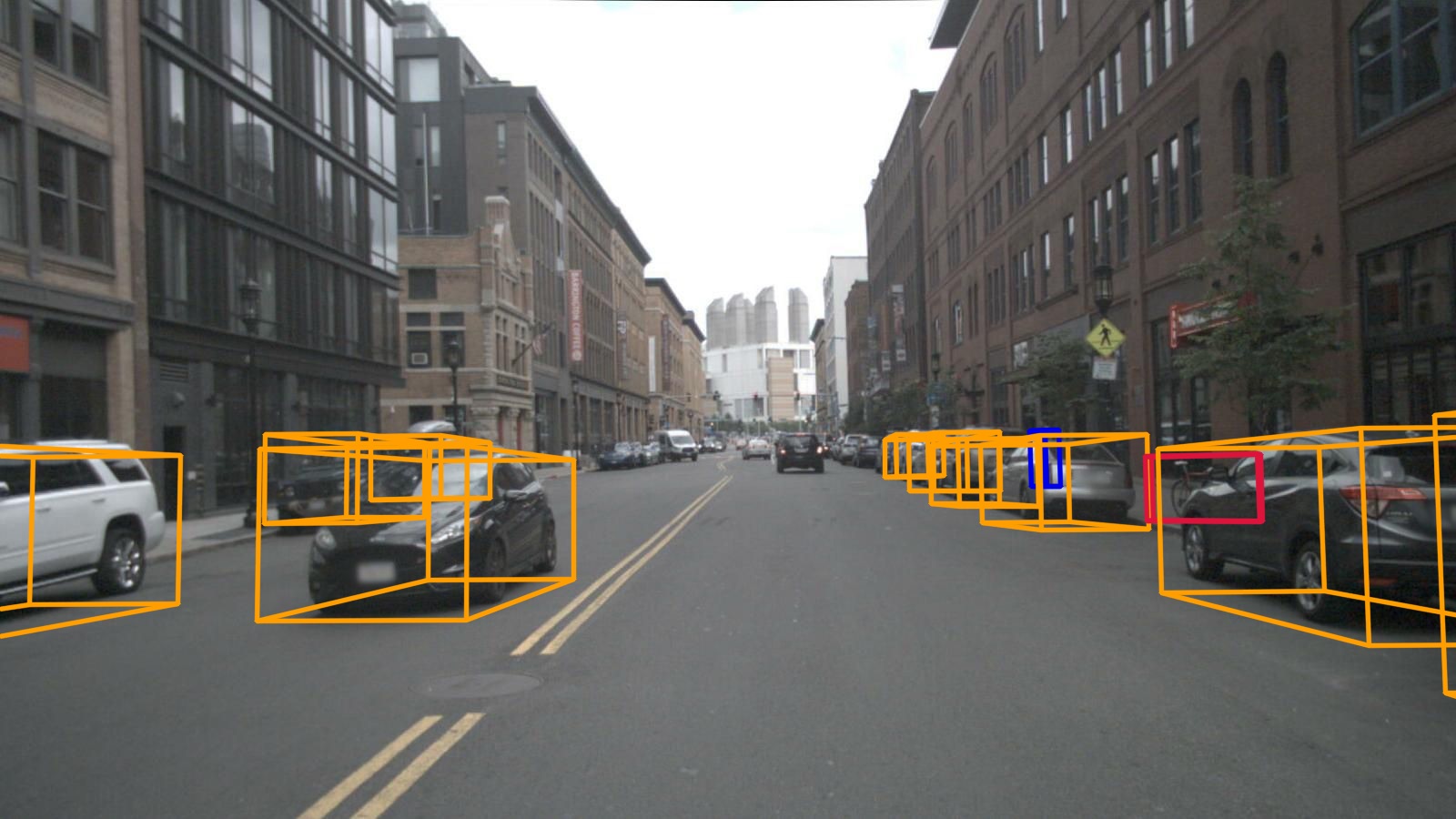} & 
    \includegraphics[width=0.33\linewidth]{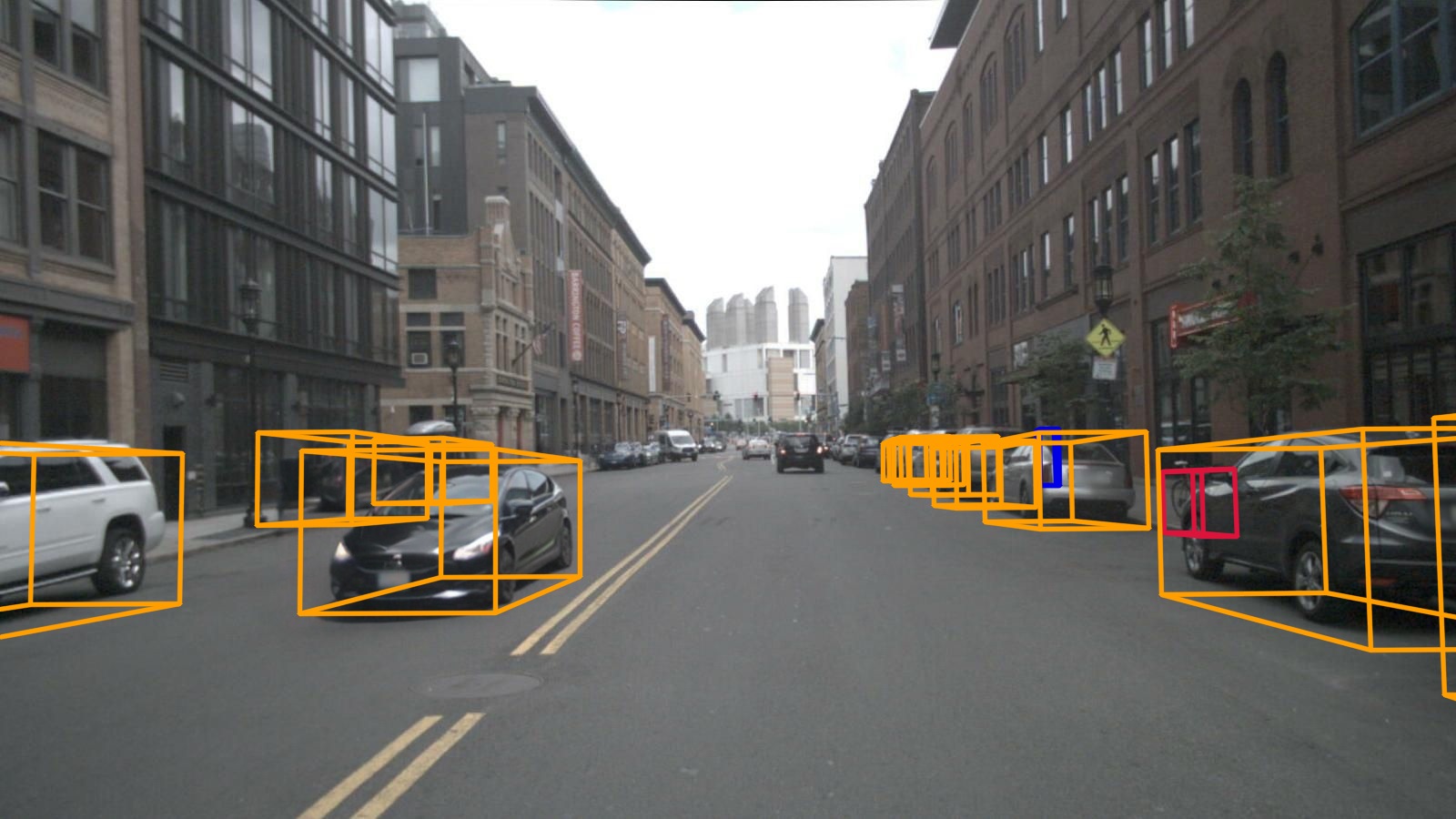} \\
    \end{tabular}
    \end{minipage}
    \caption[Object detection examples with original, compared to reinserted objects.]{Comparison of detection results between the original scene and the same scene with the object shown in red replaced. BEVFusion~\cite{liu2023bevfusion} achieves good detection performance on the object reinserted using the proposed method, while leaving the boxes of the other objects undisturbed.
    Interestingly, even though the aspect of the car behind the reinserted object in the third column is changed slightly, it does not seem to affect detection much.
    It is hypothesised that this is due to the fact that, while the camera view is sensitive to occlusions, the range view is much less so, since only the points within the box used for conditioning are reinserted, see \autoref{sec:method:spatial compositing}.
    All detections are filtered using a score threshold of 0.08.
    }
    \label{fig:detection-comparison}
\end{figure*}

\paragraph{Metrics}
Mean Average Precision (mAP) and error metrics on the re-inserted objects are computed.
Scene-level metrics such as mAP can not be easily restricted to edited objects\footnote{This is because such metrics usually require false-positives which are detections that have not been matched to \emph{any} ground truth-object within a scene, but not to a specific subset of ground-truth objects.} and will not be very sensitive to detection errors on these objects.
This is why mAP is complemented with true-positive error metrics restricted to the re-inserted objects, which are computed following the usual matching procedure from the nuScenes devkit~\cite{caesar2020nuscenes} but considering only the ground-truth/detection pairs that correspond to inpainted objects.

\paragraph{Results}
Object detection results are presented on \autoref{fig:detection-results} (left), and it can be seem that reinsertion comes at a small cost in object detection performance but that errors remain small (e.g. 0.161 AOE corresponds to a $9^{\circ}$ average error) while scene-level mAP is very similar.
The distribution of the scores of the true-positives from~\autoref{fig:detection-results} (right) shows that the scores suffer a modest decrease when the detector is applied to the reinserted samples.
Overall, this highlights that while a small domain gap exists, the proposed bounding box conditioning is able to produce samples that are both realistic and accurate, and that off-the-shelf detectors can successfully detect such objects even though they have not been trained on any synthetic data generated by the proposed method.
A sample of detections is displayed in~\autoref{fig:detection-comparison} where the reinserted object is detected accurately and the bounding boxes of the untouched objects remain almost identical.


\chapter{AnydoorMed: Reference-Guided Anomaly Inpainting for Medical Counterfactuals}
\label{chapter:anydoor}

\section{Introduction}

High-fidelity data is essential for developing and validating reliable computer-aided diagnostics (CAD) systems in the medical domain. However, real-world clinical datasets are challenging to collect and frequently exhibit severe class imbalance, particularly concerning rare pathologies such as malignant breast lesions. Synthetic data offers a promising avenue to mitigate these limitations by augmenting existing datasets with diverse and realistic counterfactual examples. For such data to be clinically useful, it must adhere to strict anatomical constraints, preserve fine-grained tissue structures, and allow controlled generation of abnormalities within plausible spatial contexts.

Recent diffusion-based approaches have achieved considerable progress in inpainting and object insertion through conditioning on text or segmentation masks~\cite{oh2024controllable, kumar2025prism, durrer2024denoising}, thereby enabling the synthesis of plausible anomalies in radiological scans. Nevertheless, text prompts and coarse masks often fail to capture the subtle visual and structural variations of medical anomalies, thus limiting the controllability of the generated content. In contrast, reference-guided inpainting in natural images~\cite{chen2023anydoor, ruiz2024magicinsertstyleawaredraganddrop} has demonstrated promising results in preserving object structure and texture, although this approach remains largely unexplored in the medical imaging domain.

In response to this gap, this work introduces \textbf{AnydoorMed} as a reference-guided inpainting framework designed specifically for mammography. Given a source image containing an anomaly and a target location within a--possibly healthy--scan, AnydoorMed can synthesise a new lesion that retains the visual and structural characteristics of the reference while blending it semantically with the surrounding tissue in the target context. Diffusion-based generation is employed with patch-level conditioning, enabling anatomically plausible insertion of anomalies whilst maintaining high controllability and structural fidelity. This allows for producing realistic counterfactuals that may support the training and evaluation of diagnostic models under diverse scenarios.

\noindent The contributions of this work are:
\begin{itemize}
    \item A diffusion-based reference-guided inpainting method for mammography enables realistic anomaly synthesis without relying on textual guidance.
    \item A framework for conditional generating plausible counterfactuals by transferring anomalies across patients and contexts.
    \item Empirical validation showing high detail preservation and semantic blending.
\end{itemize}

\section{Related work}

This section first explores the progression of image compositing methods within computer vision, followed by a review of recent techniques aimed explicitly at medical image generation and inpainting.

\paragraph{Image compositing}

Early efforts in synthetic data generation often relied on copy-and-paste techniques, in which objects were directly inserted into destination scenes with minimal blending~\cite{georgakis2017synthesizing, ghiasi2021simple, wang2021pointaugmenting}. While these methods demonstrated improvements in detection and segmentation tasks, especially for underrepresented classes, they suffered from limited controllability and unrealistic blending artefacts, particularly in the image domain.

Significant progress has since been made in image compositing, where the objective is to seamlessly insert and blend objects into destination scenes in a visually coherent manner. Early approaches, such as ST-GAN~\cite{lin2018st}, addressed the problem of unrealistic foreground blending by employing Generative Adversarial Networks (GANs)~\cite{goodfellow2014generative} in combination with spatial transformer networks. In this framework, warping corrections were recursively predicted and applied to achieve more natural object integration via learned geometric transformations.

Further advances were realised with ObjectStitch~\cite{song2023objectstitch}, in which diffusion-based inpainting was applied within edit masks to enable smooth and localised patch-level blending. Paint-by-Example~\cite{yang2023paint}, in which a latent diffusion model was conditioned on both the scene context and an edit mask, further led to improvements in semantically meaningful and spatially aligned object insertion. This framework encoded reference information using CLIP~\cite{radford2021learning}, allowing for alignment between visual and semantic features without requiring paired data. Building upon this, AnyDoor~\cite{chen2023anydoor} introduced a more expressive and modular design by incorporating DINOv2~\cite{oquab2023dinov2} for visual reference encoding. In addition to segmentation masks extracted using the Segment Anything Model (SAM)~\cite{kirillov2023segment}, AnyDoor employed a dual-path encoder architecture to extract global context and high-frequency visual features. A dedicated detail encoder captured fine-grained spatial information from the destination image, facilitating sharper and more localised blending. This multi-scale conditioning pipeline significantly improved the model’s ability to adapt inserted content to local image structure while preserving semantic alignment with the reference.

Complementary strategies have been explored by Magic Insert~\cite{ruiz2024magicinsertstyleawaredraganddrop}, where drag-and-drop style transfer enables consistent object insertion across stylistically divergent domains, and by~\cite{kulal2023puttingpeopleplaceaffordanceaware}, in which affordance-aware pose adjustments are introduced to ensure the physical plausibility of inserted elements. Additionally, ObjectDrop~\cite{winter2024objectdropbootstrappingcounterfactualsphotorealistic} demonstrated that training on synthetically generated counterfactuals could improve photorealistic object placement and compositing.

While these methods represent substantial improvements in achieving context-aware and semantically aligned image compositing, they have predominantly focused on natural image domains. Medical imagery, by contrast, presents distinct challenges including limited data availability, strict anatomical constraints, and higher demands for clinical interpretability. These limitations have yet to be comprehensively addressed by the approaches above, motivating the development of more domain-specific solutions.

\paragraph{Medical image generation and inpainting}

Recent advances in medical image generation have seen diffusion models employed to synthesise high-quality, anatomically realistic data for tasks such as data augmentation and class balancing, with approaches ranging from segmentation-guided control~\cite{konz2024anatomically} to text-conditioned synthesis~\cite{oh2024controllable,kumar2025prism}. Inpainting, a specialised form of generation, has been used to create counterfactual examples by replacing or editing specific regions. For instance, healthy tissue may be synthesised in place of lesions~\cite{durrer2024denoising}, or a pathology may be inpainted for scenario analysis~\cite{alaya2024mededit,bercea2024diffusion,perez2024radedit,fontanella2024diffusion}. Most current methods in the medical domain are conditioned on text descriptions or segmentation masks to guide the generation process; however, text prompts often lack the granularity required to capture detailed anatomical or pathological variations, thereby limiting conditional control for diffusion inpainting methods. It may be considered that providing a reference image as conditioning allows for much finer control, enabling precise and realistic counterfactual image generation.

\section{Method}
\begin{figure}
    \centering
    \includegraphics[width=0.98\linewidth]{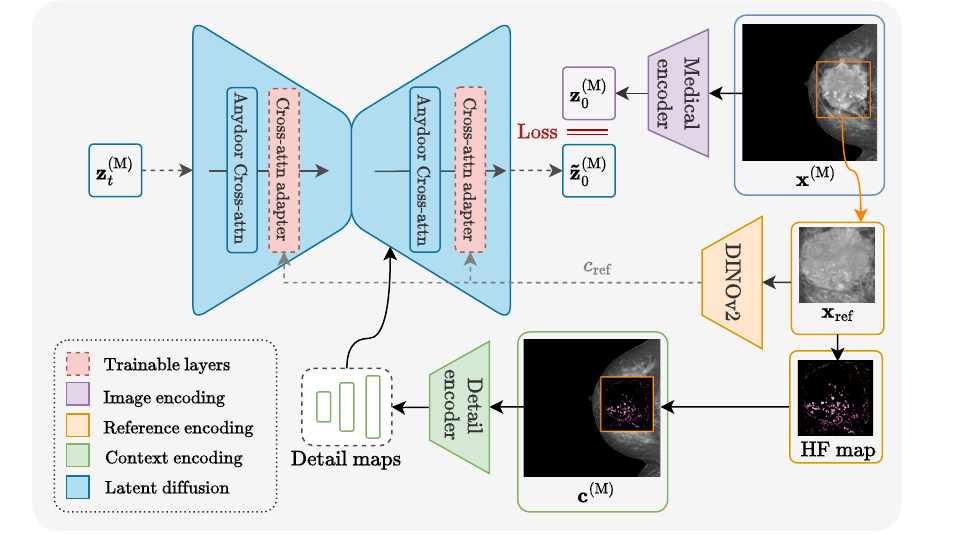}
    \caption[AnydoorMed architecture and training pipeline]{AnydoorMed architecture and training pipeline. The anomaly's High Frequency map (HF map) was coloured purple for visualisation purposes.}
    \label{fig:AnydoorMed}
\end{figure}

AnydoorMed extends the reference-based inpainting strategy of AnyDoor~\cite{chen2023anydoor} to the medical imaging domain, targeting the synthesis of anomalies in mammography scans. A latent diffusion model~\cite{rombach2022high, ho2020denoising, sohl2015deep} is trained to generate plausible insertions conditioned on both the visual context and a reference anomaly, as illustrated in \autoref{fig:AnydoorMed}.

The model \( \epsilon_\theta \) is trained to predict the noise added to the latent representation of the target image, denoted \( \mathbf{z}_0 \), at a given diffusion timestep \( t \). This representation is noised to obtain \( \mathbf{z}_t^{\text{(M)}} \), and the model is conditioned on both a reference anomaly encoding \( \mathbf{c}_{\text{ref}} \) and a contextual embedding \( \mathbf{c}^{\text{(M)}} \). The latter is computed via a dedicated detail encoder and incorporates a high-frequency feature map to enhance structural fidelity. The training objective is defined as:
\begin{align*}
  \mathcal{L} = \mathbb{E}_{\mathbf{z}^{\text{(M)}}_t, \mathbf{z}_0, t, \mathbf{c}, \epsilon \sim \mathcal{N}(0, 1)} 
  \left[ \left\| \epsilon - \epsilon_{\theta}(\mathbf{z}^{\text{(M)}}_t, \mathbf{c}_{\text{ref}}, \mathbf{c}^{\text{(M)}}, t) \right\|^2 \right].
\end{align*}

Here, \( \mathbf{c}^{\text{(M)}} \) comprises the latent representation of the image context, the spatial edit mask, and its corresponding high-frequency information, all of which are aligned and fused through the encoder. These embeddings are forwarded to the decoder of the U-Net-style network~\cite{ronneberger2015u}, guiding the denoising process. 

This design enables the model to synthesise contextually appropriate and structurally coherent anomalies by leveraging global scene features and local texture cues from the reference.

\subsection{Mammography processing}

\paragraph{DICOM conversion}
The mammography scans from the VinDR-Mammo dataset~\cite{nguyen2023vindr} were preprocessed by converting the original DICOM files (Digital Imaging and Communications in Medicine) into standardised PNG images, which were then deemed suitable for subsequent analysis. Each scan was uniquely identified by a \texttt{study\_id} and \texttt{image\_id}, which were retrieved from a CSV file containing breast-level annotations.

For each image, the corresponding DICOM file was loaded, and key metadata fields were extracted, including \texttt{WindowCenter}, \texttt{WindowWidth}, \texttt{RescaleSlope}, and \texttt{RescaleIntercept}. The raw pixel data were then extracted and rescaled in accordance with the DICOM standard, employing the linear transformation:
\begin{equation}
I = (\text{Raw Pixel Value}) \times \texttt{RescaleSlope} + \texttt{RescaleIntercept},
\end{equation}
where \( I \) represents the rescaled pixel intensity.

Subsequent to rescaling, windowing was applied in order to enhance visual contrast. The pixel intensities were centred and scaled based on the specified window centre and width. The resulting values were clipped to the displayable range and normalised to an 8-bit scale within the interval \([0, 255]\).

\paragraph{Anomaly processing}
In addition to image processing, anomalies associated with each scan were extracted from the corresponding annotations. The anomaly classes considered in this study include: Architectural Distortion, Asymmetry, Focal Asymmetry, Global Asymmetry, Mass, Nipple Retraction, Skin Retraction, Skin Thickening, Suspicious Calcification, and Suspicious Lymph Node.

Each mammographic finding is also assigned a BI-RADS score~\cite{bibrads2013}, categorised from 1 to 5, with 1 indicating the lowest and 5 indicating the highest level of suspicion for malignancy.

For each anomaly, a bounding box is provided to localise the anomaly within the image. The bounding box is defined by the coordinates \( \text{box} = (x_{\text{min}}, y_{\text{min}}, x_{\text{max}}, y_{\text{max}}) \). Examples of anomalies with their corresponding bounding boxes can be seen in \autoref{fig:vindr}.

\begin{figure}
    \centering
    \includegraphics[scale=0.18]{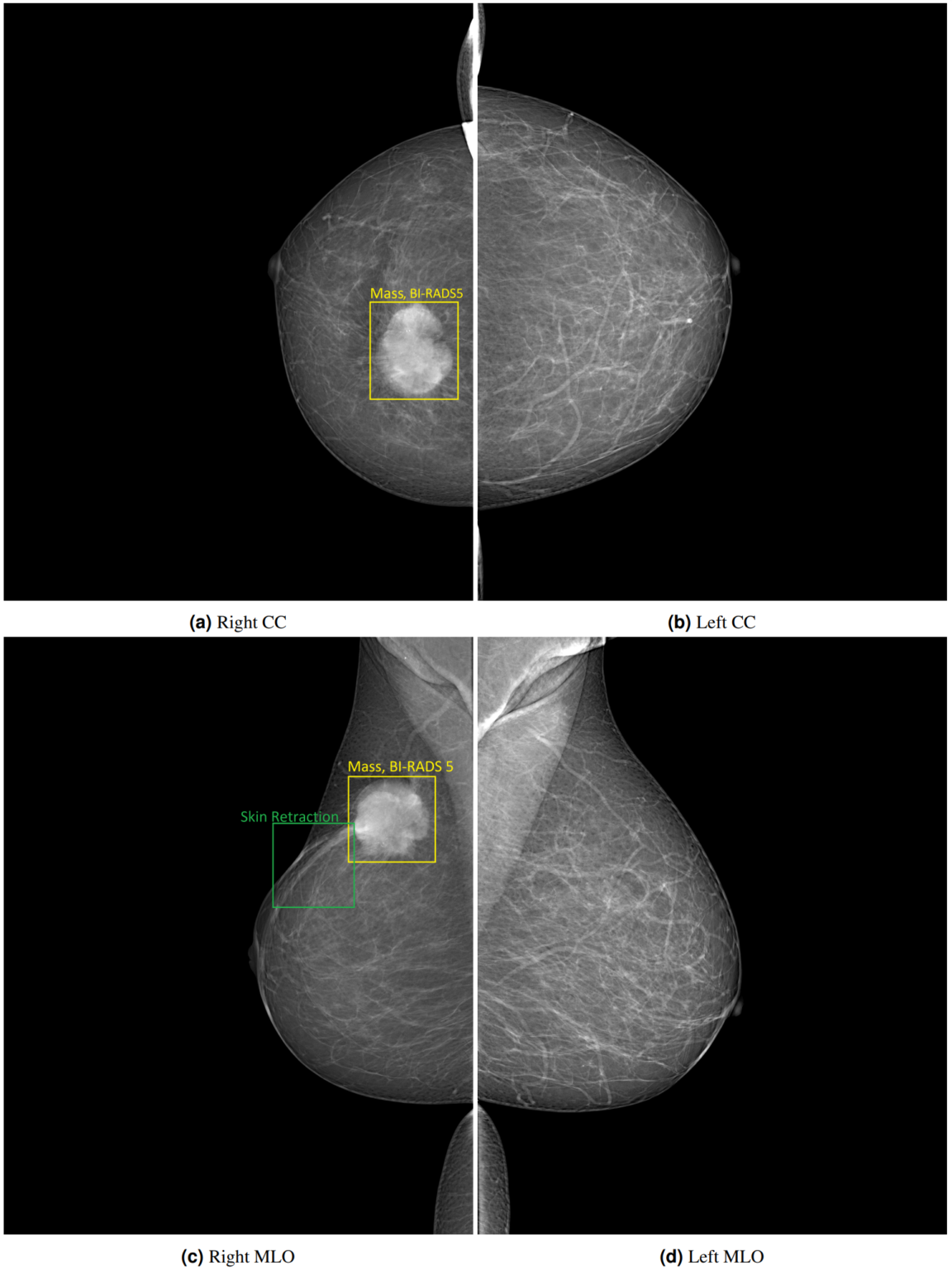}
    \caption[Samples from VinDR-Mammo dataset with bounding box annotations.]{Samples from VinDR-Mammo dataset~\cite{nguyen2023vindr} with bounding box annotations.}
    \label{fig:vindr}
\end{figure}

The distribution of anomaly types and BI-RADS scores is visualised in \autoref{fig:findings} and \autoref{fig:birads} to assess dataset characteristics, which reveals a significant class imbalance.

\begin{figure}
    \centering
    \includegraphics[width=\linewidth]{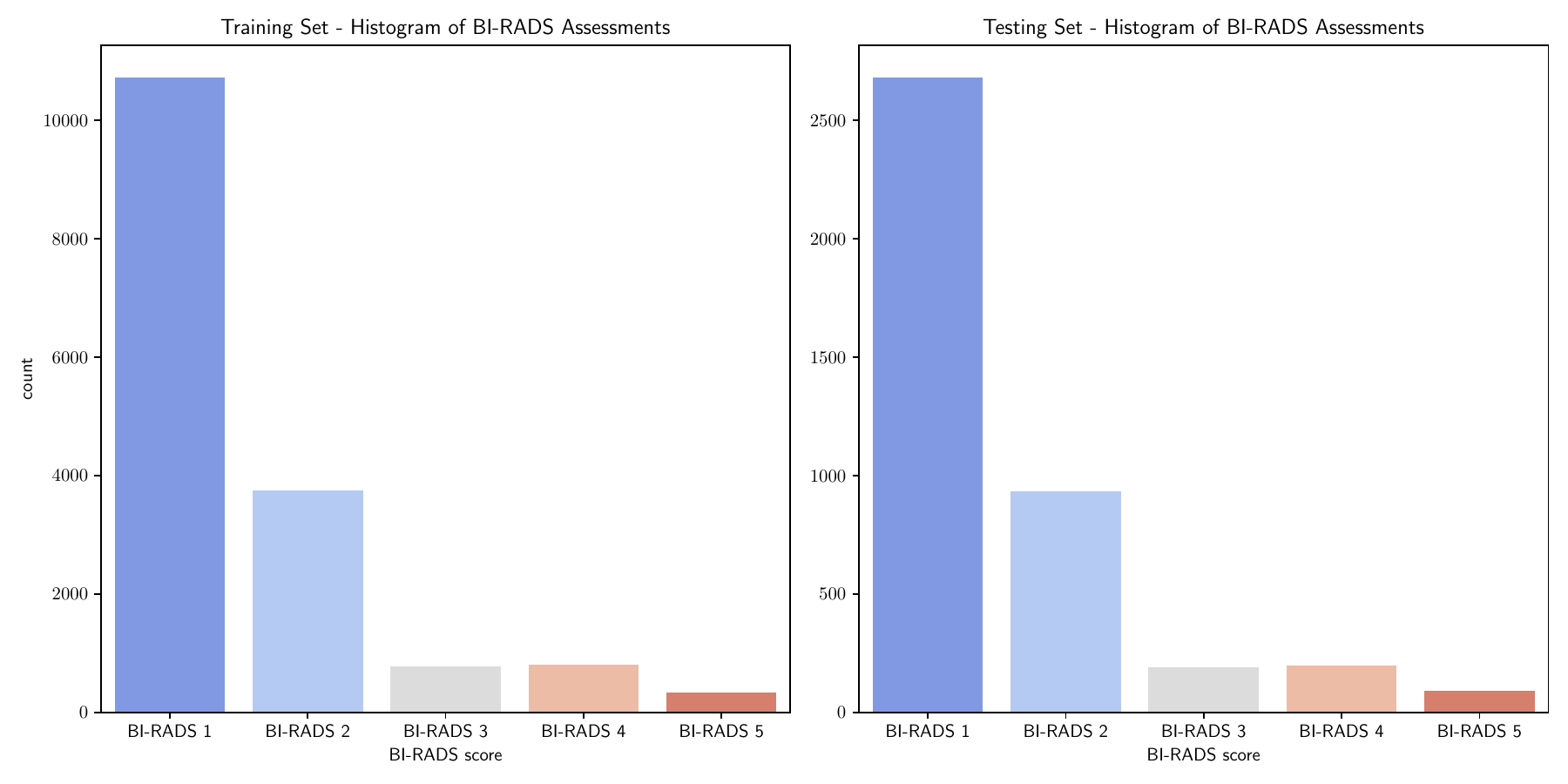}
    \caption{Distribution of BI-RADS malignancy scores, showcasing class imbalance.}
    \label{fig:birads}
\end{figure}

\begin{figure}
    \centering
    \includegraphics[width=\linewidth]{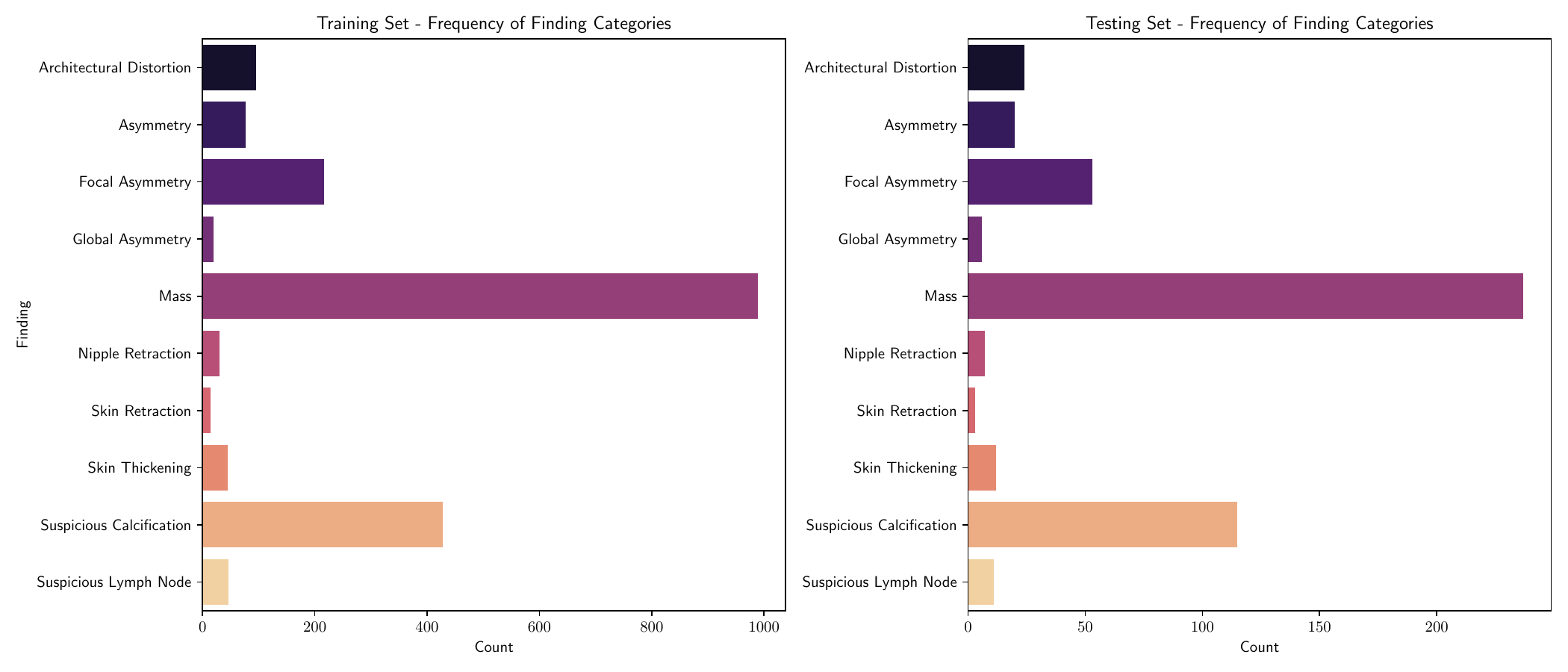}
    \caption{Distribution of anomalies based on their class, showcasing class imbalance.}
    \label{fig:findings}
\end{figure}

\paragraph{Medical image processing}
The resulting medical images from the DICOM conversion typically have dimensions of approximately $3000 \times 3000$ pixels, making them prohibitively large for generative modelling. To address this, we adopt the zoom-in strategy from Anydoor~\cite{chen2023anydoor} to crop around each anomaly. Each edge of the bounding box is scaled to be 3 to 4 times larger than the corresponding edge of the resulting square crop. If this scaling leads to overflow, padding is applied to ensure the desired crop size. The images are then normalised to a range of \([-1, 1]\). A mask is generated to in-fill the bounding box, and for the context image, the anomaly is erased by applying the mask with zero values.

\subsection{Medical image encoding}
Similar to MObI, we adapt the pre-trained image variational autoencoder (VAE)~\cite{kingma2013auto} from StableDiffusion~\cite{rombach2022high} to now encode mammography scans through a simple fine-tuning process.

The input and output convolutions of the pre-trained image encoder and decoder are replaced with two residual blocks~\cite{he2016deep}, respectively. This modification introduces one input and output channels. The VAE~\cite{kingma2013auto} is fine-tuned at a resolution of 512, adding the perceptual loss of~\cite{esser2021taming}.

This design choice ensures that most of the original encoder architecture remains unchanged, preserving the mapping to the latent space with minimal disruption. This approach maintains the efficiency of the pre-trained model while tailoring it to the specific characteristics of mammography scans.

\subsection{Reference encoding}
The anomaly is extracted using the corresponding bounding box, resulting in a cropped reference image. Basic augmentation techniques, such as horizontal flipping and random brightness–contrast adjustment, are applied to improve generalisability.

This work employs DINOv2~\cite{oquab2023dinov2}, a foundation model for visual representation learning, which was trained in a fully self-supervised manner using a teacher–student framework. Specifically, DINOv2 uses a Vision Transformer (ViT)~\cite{dosovitskiy2020image} as both the student and teacher network, where the student is trained to match the output of the teacher across multiple crops of the same image without the use of labels. The teacher network is updated using an exponential moving average of the student weights, encouraging stability and consistency in feature learning.

Unlike CLIP~\cite{radford2021learning}, which relies on paired text–image data and is primarily trained on natural images, DINOv2 learns solely from visual information. This makes it more suitable for applications in the medical imaging domain, where textual supervision is limited and visual information containing fine-grained anatomical details must be preserved by the feature extractor. The final output is a set of reference tokens, denoted as $\mathbf{c}_{\text{ref}}$, which encode the semantic features of the anomaly.

\subsection{Detail extractor}
\paragraph{High-frequency map}
Same as AnyDoor~\cite{chen2023anydoor}, the proposed method incorporates a high-frequency (HF) map to guide the generation process with fine-grained structural detail. This map is computed using the horizontal and vertical Sobel filters to enhance edge information relevant to medical anomalies such as microcalcifications. Formally, the high-frequency map \( I_{\text{hf}} \) is defined as
\[
I_{\text{hf}} = \left( I \otimes K_h + I \otimes K_v \right) \odot I \odot M_{\text{erode}},
\]
where \( I \) denotes the greyscale mammogram, \( K_h \) and \( K_v \) are the horizontal and vertical Sobel kernels, and \( M_{\text{erode}} \) is an eroded binary mask used to suppress boundary noise. The resulting HF map emphasises sharp transitions and structural boundaries. It is collaged into the corresponding region of the context image in latent space, enhancing the model’s ability to preserve anatomical detail during synthesis.

\paragraph{Detail encoding}
To further improve spatial fidelity, the pre-trained Detail Encoder from AnyDoor~\cite{chen2023anydoor} is employed. This module extracts multi-scale feature maps from the context image, capturing coarse and fine contextual information. Following the approach introduced in ControlNet~\cite{zhang2023controlnet}, these features are added to the decoder layers of the U-Net~\cite{ronneberger2015u} denoising network. This conditioning strategy allows the decoder to utilise fine-grained structural cues while maintaining the overall generative capacity of the pre-trained encoder.

\subsection{Conditional Generation}
We finetune a single latent diffusion model, leveraging the pre-trained weights of Anydoor~\cite{chen2023anydoor}.
Similar to the adaptation strategy of Flamingo~\cite{alayrac2022flamingo}, we interleave gated cross-attention layers. We use a zero-initialised gating as in ControlNet~\cite{zhang2023controlnet}. This is the same strategy as MObI, which is the second key component of the presented fine-tuning recipe

\paragraph{Adaptation}
To adapt the model to the new modality, gated cross-attention layers are interleaved, attending to the reference tokens \( \mathbf{c}_{\text{ref}} \). The query, key, and value representations are derived from the input mammography latent representation and the reference \( \mathbf{c}_{\text{ref}} \), with layer normalisation applied for cross-attention from the mammography representation to the reference. The cross-attention mechanism is computed using learnable transformations \( W_Q, W_K^{\text{(ref)}}, W_V^{\text{(ref)}} \), as follows:

\[
\text{Attn} = \text{softmax}\left(\frac{Q K^T}{\sqrt{d_{\text{head}}}}\right) V,
\]
where \( Q = W_Q \mathbf{z}^{\text{(M)}} \) represents the query tokens from the mammography latent representation, \( K^{\text{(ref)}} = W_K^{\text{(ref)}} \mathbf{c}_{\text{ref}} \) denotes the key tokens from the reference, and \( V^{\text{(ref)}} = W_V^{\text{(ref)}} \mathbf{c}_{\text{ref}} \) corresponds to the value tokens from the reference. The attention is subsequently used to update the features via a residual connection, applied through a zero-initialised gating module:

\[
\mathbf{h} \leftarrow \mathbf{h} + \text{Gate}(\text{Attn}).
\]

These adaptation layers facilitate the model's ability to incorporate information from the reference modality, thereby steering the network towards effectively capturing relevant features from both the mammography representation and the reference tokens.

\subsection{Inference and compositing}

\paragraph{Inference process}
The method begins with pure Gaussian noise, which is iteratively denoised over \( T = 50 \) steps using the DDIM sampler~\cite{song2020denoising}. This process is conditioned on the reference and detail maps, ultimately yielding the final latent representation \( \tilde{\mathbf{z}}_0^{(\text{M})} \). The resulting latent representation, which has dimensions \( 64 \times 64 \times 4 \), is decoded using the decoder of the medical VAE to generate the edited mammography images.

\paragraph{Medical Image Compositing}
The final edited scan is obtained by compositing the zoomed-in edited image. Inpainting is performed within the bounding box, and the extracted inpainted region is composited back into its corresponding location within the original scan. A Gaussian kernel could further be applied to improve the blending of the inpainted region. This approach is particularly effective as the latent diffusion models only modify a smaller region within the high-resolution scan. This strategy works because the model is trained to avoid altering areas outside the bounding box edit region.

\subsection{Training details}
\paragraph{Dataset}
The split used in Vindr-Mammo~\cite{nguyen2023vindr} is followed, with 4000 images allocated for training and 1000 for validation. Only positive samples are considered.

\paragraph{Fine-Tuning Procedure}
Training begins by adapting the newly added input and output adapters of the range autoencoder, while the rest of the image VAE~\cite{kingma2013auto} from Stable Diffusion~\cite{rombach2022high} remains frozen. This training phase spans 16 epochs (7k steps) at a batch size of 4, with a learning rate of \(4.5 \times 10^{-5}\), consistent with the original model training, and is optimised using Adam~\cite{kingma2014adam}. The checkpoint with the lowest reconstruction loss is selected.

During the fine-tuning of the latent diffusion model, the autoencoder and all layers of Anydoor~\cite{chen2023anydoor} are kept frozen, except for the gated cross-attention adapter, which is trained. An input dimension of \(D = 512\) and a latent dimension of \(D_h = 64\) are used, with training lasting for 30 epochs (approximately 4k steps). The model is trained with a constant learning rate of \(2 \times 10^{-5}\) and a batch size of 16, using the Adam~\cite{kingma2014adam} optimiser. The top three models with the lowest validation loss are retained. The final model is selected based on the best Fréchet Inception Distance (FID)~\cite{heusel2017gans} achieved on a test set comprising 426 samples from the validation set, where anomalies are reinserted into the scan.

Note that this separate model selection procedure is necessary due to the inefficiency of evaluating perceptual realism during the training process, as it requires 50 steps of denoising and decoding to obtain the final image.

\paragraph{Hyperparameter tuning}
Six learning rate values are ablated in the hyperparameter tuning process. The final model is selected based on the best Fréchet Inception Distance (FID)~\cite{heusel2017gans} achieved on the test set with 426 samples, where anomalies are reinserted into the scan.

\section{Experiments and results}

\subsection{Setup}
The model's performance was evaluated using three distinct tasks: Insertion, Replacement, and Reinsertion. These tasks were designed to test the model's ability to integrate anomalies into mammography scans under different conditions, with context and high-frequency maps used to guide the inpainting process. The experiments used the 426 positive scans with anomalies from the validation set. 

\paragraph{Insertion}
In the Insertion task, a reference anomaly was inserted into a healthy mammography scan. A healthy medical scan was selected for this, and a reference anomaly was chosen. Twenty candidate bounding boxes were randomly generated across the breast tissue. If the overlap between the reference anomaly and a candidate box was less than 90\%, the box was discarded. If none of the boxes met the 90\% overlap requirement, the highest overlap box was selected. This heuristic method ensured the reference anomaly was inserted into the most appropriate location within the scan, blending as naturally as possible with the surrounding tissue.

\paragraph{Replacement}
In the Replacement task, an existing anomaly in the scan was replaced by a reference anomaly of similar size. The replacement was guided by context and high-frequency maps, ensuring seamless integration of the new anomaly into the surrounding tissue while preserving the scan's anatomical structure.

\paragraph{Reinsertion}
In the Reinsertion task, a previously removed anomaly was reintroduced into the scan using the reference anomaly. The insertion was guided by context and high-frequency maps to ensure the anomaly blended naturally with the surrounding tissue, restoring the scan’s original structure while maintaining realism.

\begin{figure*}[htbp]
    \centering
    \begin{minipage}{0.87\textwidth}
    \begin{tabularx}{0.96\columnwidth}{@{}>{\centering\arraybackslash}X>{\centering\arraybackslash}X>{\centering\arraybackslash}X>{\centering\arraybackslash}X@{}}
        Context & Reference & Generation (insertion) & Original \\
    \end{tabularx}
    \includegraphics[width=\columnwidth]{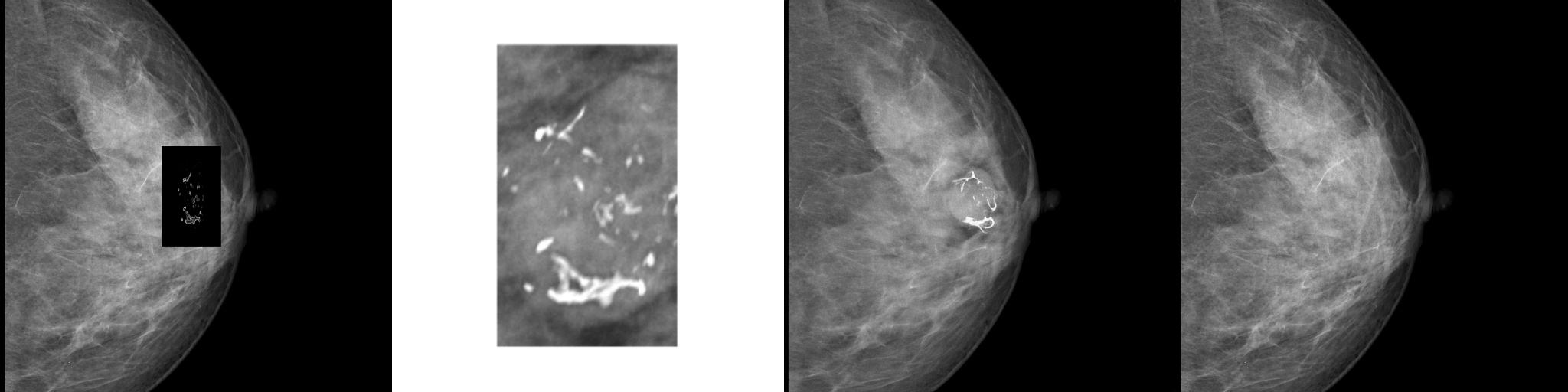} \\
    \includegraphics[width=\columnwidth]{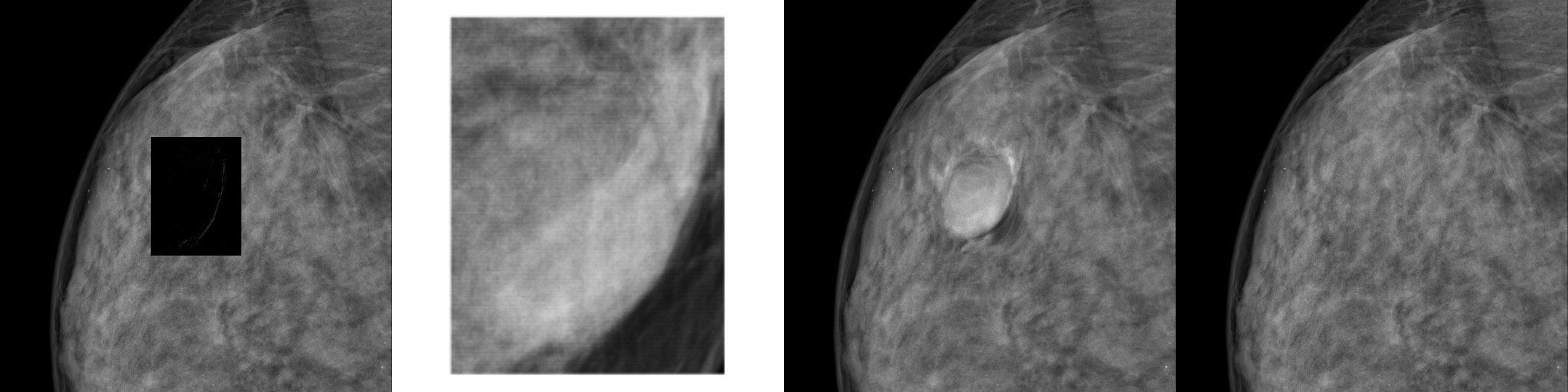} \\
    \includegraphics[width=\columnwidth]{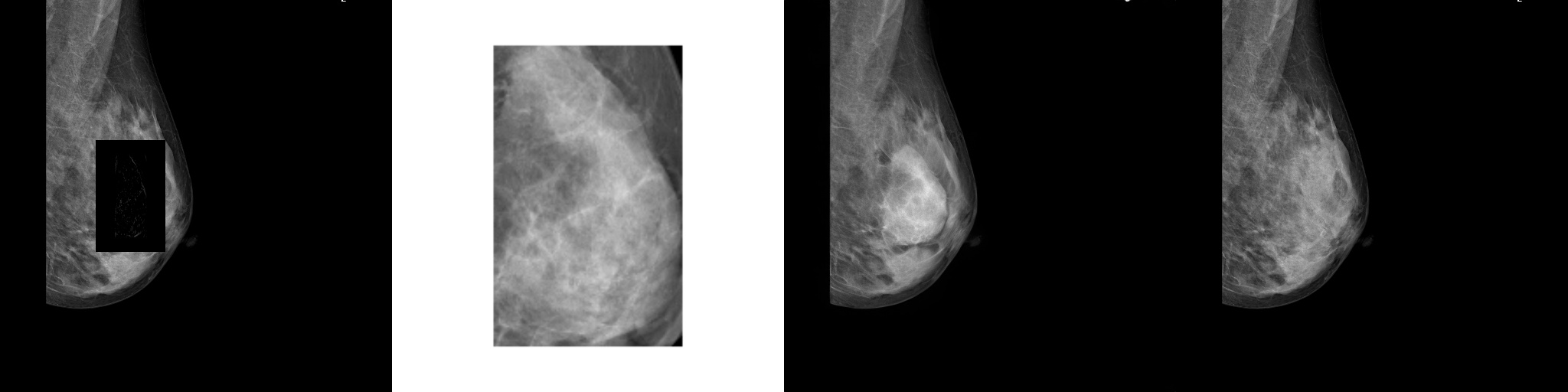} \\
    \includegraphics[width=\columnwidth]{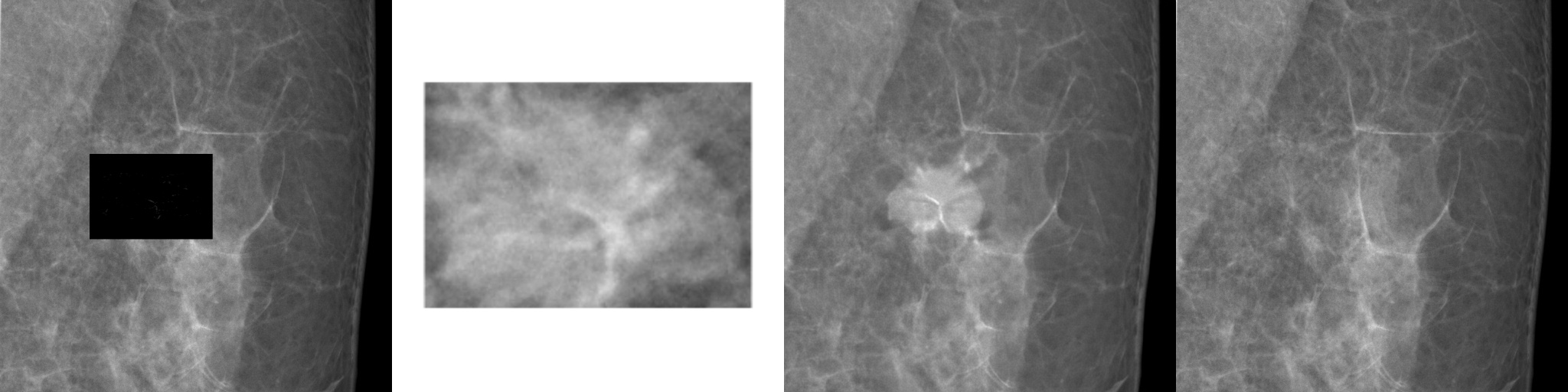} \\
    \includegraphics[width=\columnwidth]{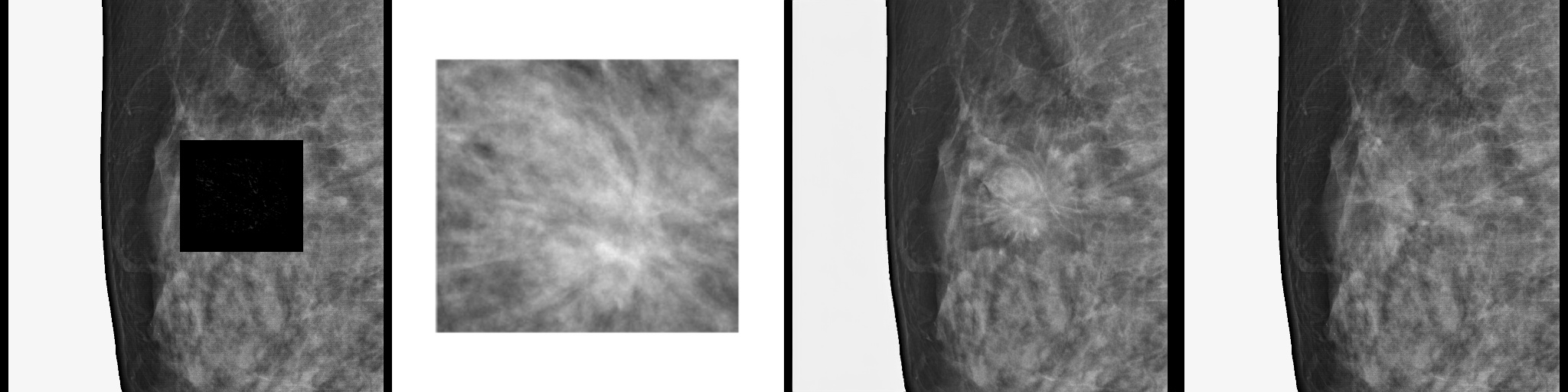} \\
    \includegraphics[width=\columnwidth]{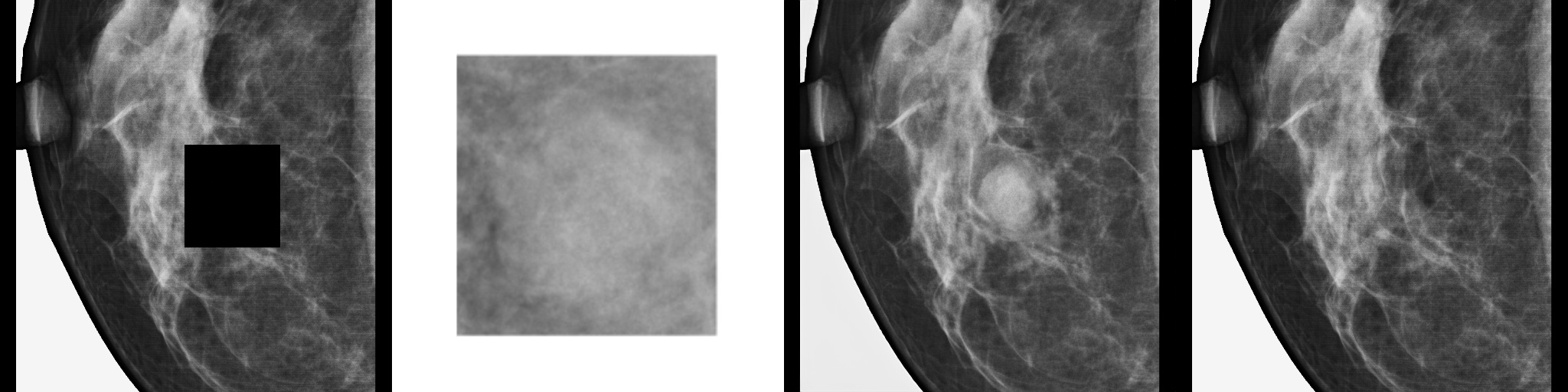} \\
    \end{minipage}
    \caption[Anomaly insertion qualitative results.]{Anomaly insertion results. \textbf{AnydoorMed} inserts the reference anomaly (second column), guided by the context and high-frequency map context (first column), into the healthy mammography scan (fourth column), producing the composited result (third column). The inpainted anomalies preserve some of the features present in the reference image, such as calcifications (first row) or spiculations (fifth row). For all examples, the anomaly was composited semantically in the destination scan, within the breast tissue.
    }
    \label{fig:insert_med}
\end{figure*}

\begin{table*}
    \centering
    \begin{minipage}{\textwidth}
        \centering
        \begin{tabular}{lcccc}
            & \multicolumn{4}{c}{\textbf{Reinsertion}} \\
            \cmidrule(r){2-5}
            \textbf{Method} & FID\textdownarrow & LPIPS\textdownarrow & CLIP-I\textuparrow & DINOv2$\uparrow$ \\
            \midrule
            copy\&paste & n/a & n/a & n/a & n/a \\
            AnyDoor~\cite{chen2023anydoor} & 6.80 & 0.19 & 89 & 34 \\
            AnydoorMed (ours) & \textbf{1.83}$\pm$0.16 & \textbf{0.06}$\pm$0.01 & \textbf{92.4}$\pm$0.4 & \textbf{45.6}$\pm$0.4 \\
            \bottomrule
        \end{tabular}
    \end{minipage}

    \vspace{0.5cm}
    
    \begin{minipage}{\textwidth}
        \centering
        \begin{tabular}{lcccc}
            & \multicolumn{4}{c}{\textbf{Replacement}} \\
            \cmidrule(r){2-5}
            \textbf{Method} & FID\textdownarrow & LPIPS\textdownarrow & CLIP-I\textuparrow & DINOv2\textuparrow \\
            \midrule
            copy\&paste & 4.39 & 0.08 & n/a & n/a \\
            AnyDoor~\cite{chen2023anydoor} & 7.42 & 0.20 & 88 & 32 \\
            AnydoorMed (ours) & \textbf{2.78}$\pm$0.21 & \textbf{0.07}$\pm$0.01 & \textbf{90.3}$\pm$0.3 & \textbf{39.3}$\pm$1 \\
            \bottomrule
        \end{tabular}
    \end{minipage}
    
    \vspace{0.5cm}
    
    \begin{minipage}{\textwidth}
        \centering
        \begin{tabular}{lcccc}
            & \multicolumn{4}{c}{\textbf{Insertion}} \\
            \cmidrule(r){2-5}
            \textbf{Method} & FID\textdownarrow & LPIPS\textdownarrow & CLIP-I\textuparrow & DINOv2\textuparrow \\
            \midrule
            copy\&paste & \textbf{4.64} & 0.10 & n/a & n/a \\
            AnyDoor~\cite{chen2023anydoor} & 7.93 & 0.21 & 89 & 31 \\
            AnydoorMed (ours) & 4.78$\pm$ 0.14 & \textbf{0.08}$\pm$ 0.01 & \textbf{89.9}$\pm$0.3 & \textbf{38.6}$\pm$0.3 \\
            \bottomrule
        \end{tabular}
    \end{minipage}
    \caption[Comparison of realism metrics across reinsertion, replacement, and insertion experiments.]{Comparison of realism metrics across reinsertion, replacement, and insertion experiments. Results for AnydoorMed are averaged across three models trained on distinct seeds. Standard deviation is also reported for the presented method.}
    \label{tab:full-camera-realism-dinov2}
\end{table*}

\begin{figure*}[htbp]
    \centering
    \begin{minipage}{0.87\textwidth}
    \begin{tabularx}{0.96\columnwidth}{@{}>{\centering\arraybackslash}X>{\centering\arraybackslash}X>{\centering\arraybackslash}X>{\centering\arraybackslash}X@{}}
    Context & Reference & Generation (reinsertion) & Original \\
    \end{tabularx}
    \includegraphics[width=\columnwidth]{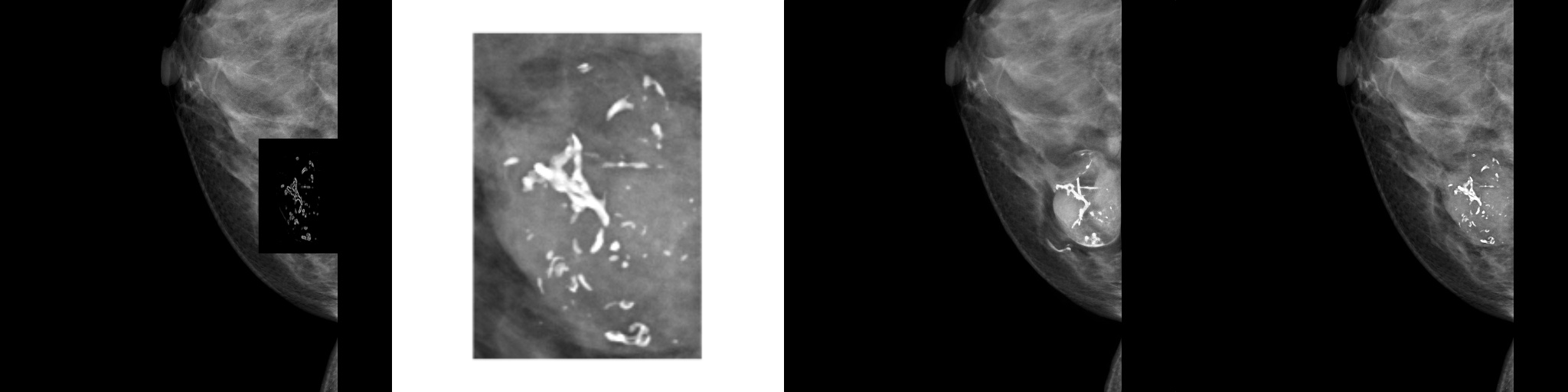} \\
    \includegraphics[width=\columnwidth]{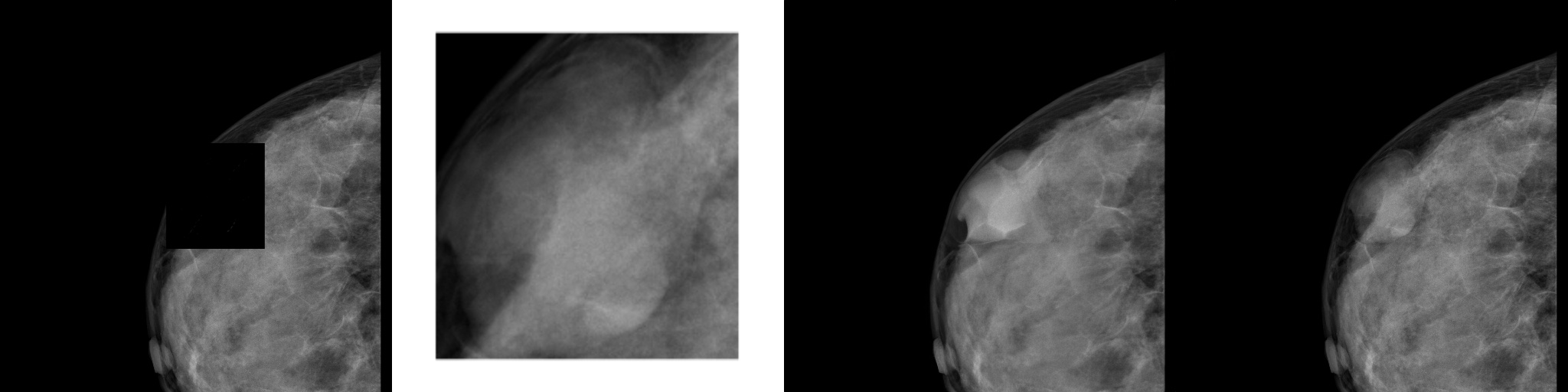} \\
    \includegraphics[width=\columnwidth]{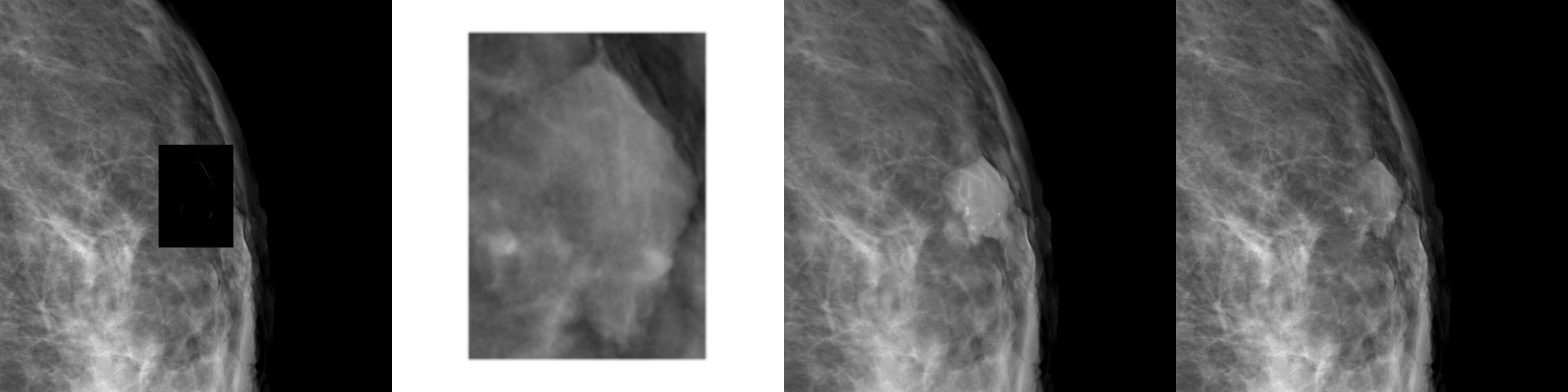} \\
    \includegraphics[width=\columnwidth]{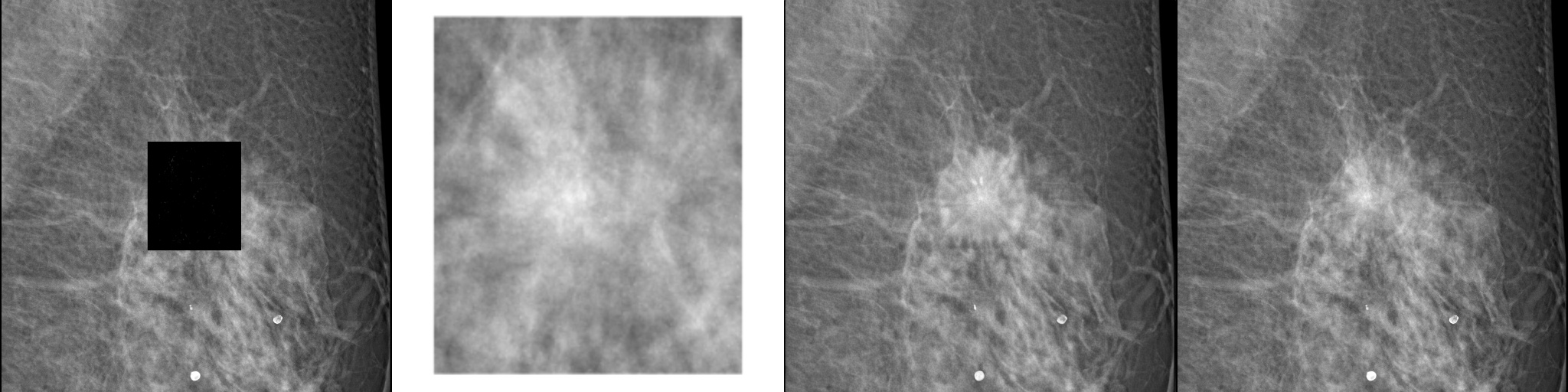} \\
    \end{minipage}
    \caption[Anomaly reinsertion qualitative results.]{Anomaly reinsertion results. \textbf{AnydoorMed} reinserts the anomaly (second column), guided by the context and high-frequency map context (first column), into the mammography scan (fourth column), producing the composited result (third column). This is done by removing the anomaly from the scan and using it as a reference. The inpainted anomalies preserve some of the features present in the reference image, such as calcifications (first row) or spiculations (last row). The original and reinserted anomalies are similar, yet not identical, which suggests the model is not performing plain copy\&paste.
    }
    \label{fig:reinsert_med}
\end{figure*}

\begin{figure*}[htbp]
    \centering
    \begin{minipage}{0.87\textwidth}
    \begin{tabularx}{0.96\columnwidth}{@{}>{\centering\arraybackslash}X>{\centering\arraybackslash}X>{\centering\arraybackslash}X>{\centering\arraybackslash}X@{}}
    Context & Reference & ~~Generation (replacement) & Original \\
    \end{tabularx}
    \includegraphics[width=\columnwidth]{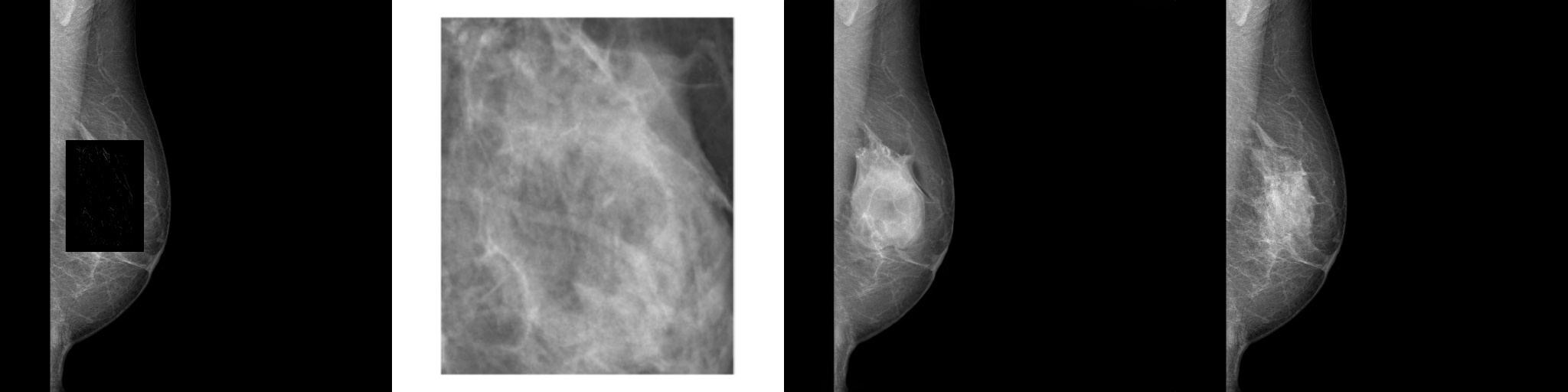} \\
    \includegraphics[width=\columnwidth]{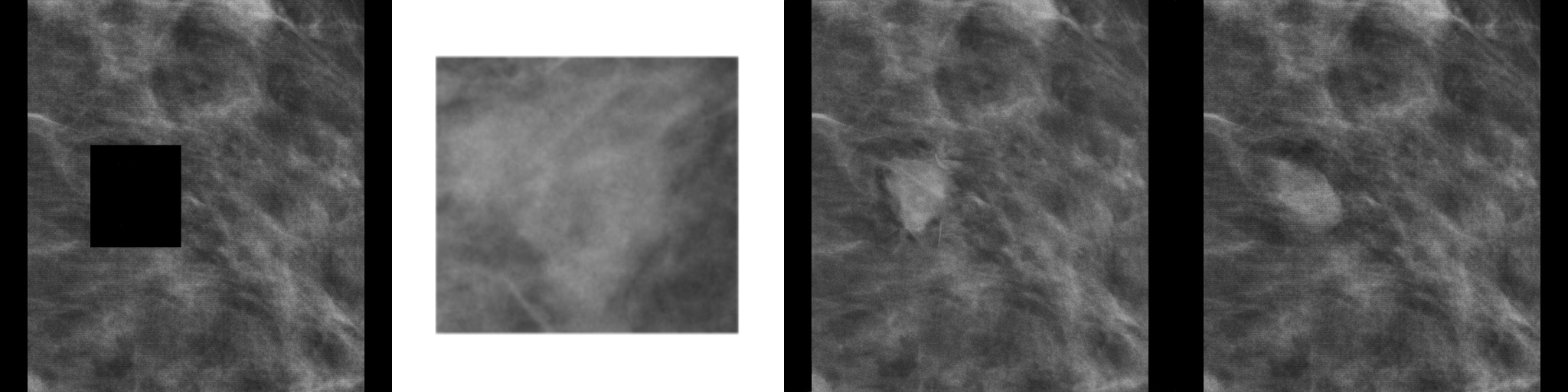} \\
    \includegraphics[width=\columnwidth]{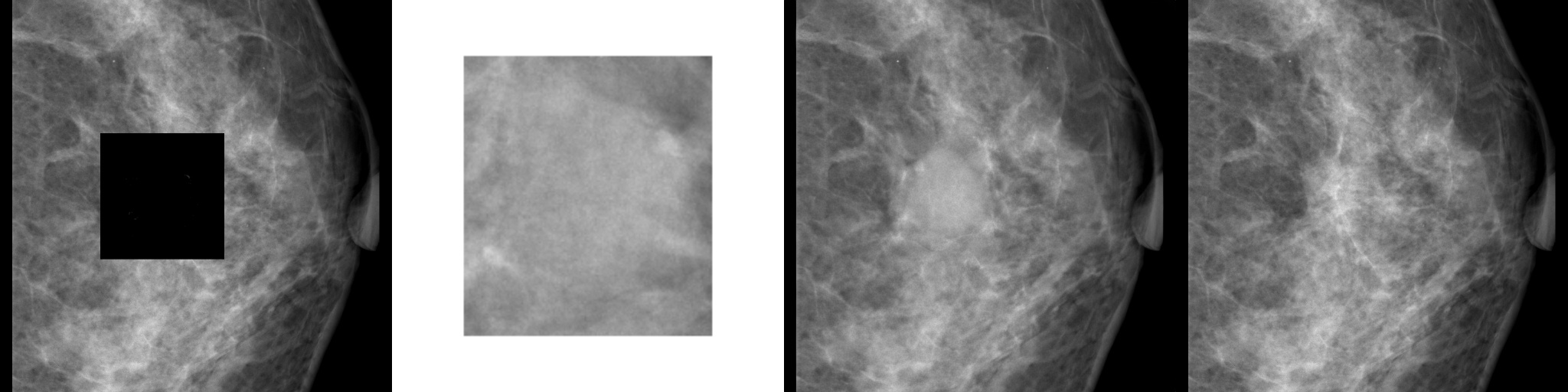} \\
    \includegraphics[width=\columnwidth]{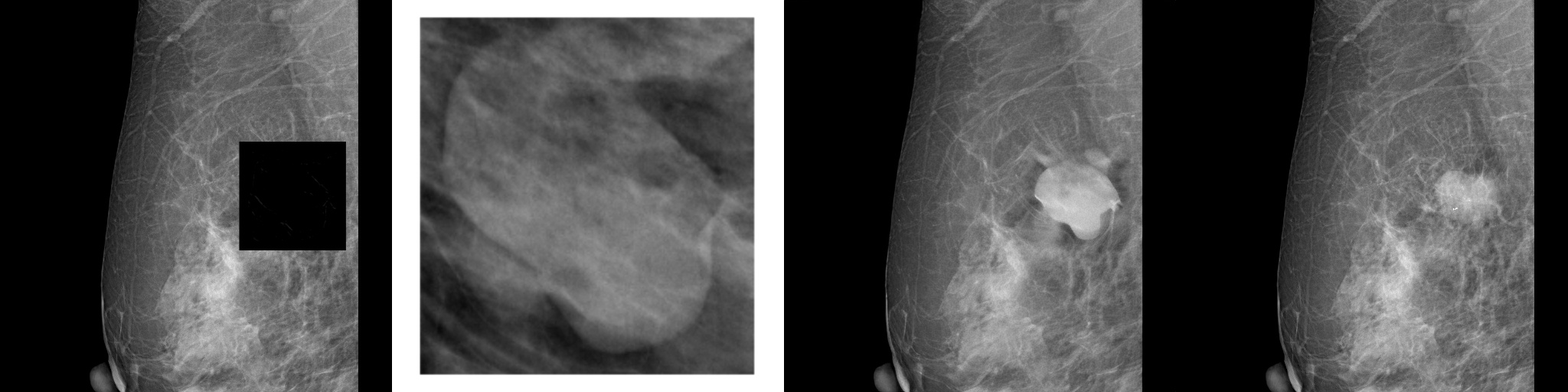} \\
    \end{minipage}
    \caption[Anomaly replacement qualitative results.]{Anomaly replacement results. \textbf{AnydoorMed} replaces the anomaly from the original scan (fourth column), with the reference anomaly (second column), guided by the context and high-frequency map context (first column), producing the composited result (third column). This is done by removing the anomaly from the scan and using a similarly-sized reference as a condition. The inpainted anomalies preserve some of the features present in the reference image, such as the ``excavation'' from the first row. All generated scans look highly realistic, with anomalies being semantically blended within the breast tissue.
    }
    \label{fig:replace_med}
\end{figure*}

\subsection{Qualitative results}
The qualitative results illustrate the model's ability to integrate anomalies into mammography scans while maintaining naturalness and anatomical consistency. In all tasks, whether inserting, reinserting, or replacing anomalies, the model effectively blended the anomalies into the breast tissue using context and high-frequency maps, ensuring a realistic outcome.

In the insertion task, multiple reference anomalies were inserted into healthy mammography scans, with each anomaly retaining key features such as calcifications and spiculations. These anomalies were inserted semantically within the breast tissue, as demonstrated in several examples (see \autoref{fig:insert_med}). For reinsertion, previously removed anomalies were reintroduced into the scans. While the reinsertion anomalies closely resembled their originals, some subtle differences were evident, suggesting that the model did not simply perform a copy-paste operation. The reinserted anomalies were smoothly integrated into the scans, maintaining overall realism (see \autoref{fig:reinsert_med}). In the replacement task, reference anomalies of similar size replaced existing anomalies in the scans. These replacements preserved key features, such as "excavation," and the anomalies were seamlessly blended into the breast tissue, with all examples looking highly realistic (see \autoref{fig:replace_med}).

\paragraph{Realism of the inpainting}
AnyDoorMed consistently outperforms {AnyDoor} and copy\&paste. Insertion results show that anomalies were seamlessly integrated into healthy scans, with low FID (4.89) and LPIPS (0.08) scores, indicating high realism. Reinsertion, acting as a sanity check, demonstrated that the reintroduced anomalies were realistic and similar to the original, with {AnyDoorMed} achieving the best FID (2.14) and LPIPS (0.05) scores. For replacement, {AnyDoorMed} effectively swapped anomalies while preserving scan integrity, with the lowest FID (3.06) and LPIPS (0.07) scores. {AnyDoorMed} generates highly realistic and semantically consistent anomalies across all tasks.

\chapter{Discussion}
\label{chapter:discussion}

\section{Strengths}

This work introduces two novel methods for reference-guided counterfactual generation across distinct perceptual domains, in autonomous driving and medical image analysis. It demonstrates the versatility of adapting inpainting foundation models to diverse modalities using a simple and data-efficient conditioning mechanism. Through this adaptation, both methods achieve fine-grained control, multimodal coherence, and strong semantic consistency without the need for handcrafted assets.

\textbf{MObI} enables realistic, 3D-conditioned object insertion across camera and lidar modalities in complex urban scenes captured by autonomous vehicles. Leveraging the expressive capacity of latent diffusion models, it performs high-fidelity object insertions while maintaining consistency across different viewpoints and sensing modalities. A particular strength of MObI lies in its ability to produce geometrically and semantically coherent results across sensor streams. This capability is especially valuable in safety-critical applications where synthetic multimodal data is needed for robust evaluation and training.

\textbf{AnydoorMed} showcases the proposed framework's adaptability to the medical imaging domain, with a specific focus on anomaly inpainting in mammography scans. By enabling the synthesis of perceptually plausible anomalies at precise spatial locations, the method provides a powerful tool for counterfactual data generation in medical imaging. This capability could aid in improving the robustness of diagnostic systems, particularly in underrepresented or edge-case scenarios. Both methods demonstrate state-of-the-art performance on realism metrics relative to their respective baselines, underscoring the effectiveness and generality of the approach across domains as varied as autonomous driving and digital mammography.

\section{Limitations}

\begin{figure*}[ht!]
\begin{minipage}{\textwidth}
\centering
\setlength{\tabcolsep}{1pt}
\footnotesize
\begin{tabular}{ccccc}
Original & Reference & Editied (C) & Edited (R) depth & Edited (R) intensity \\
\includegraphics[width=0.18\textwidth]{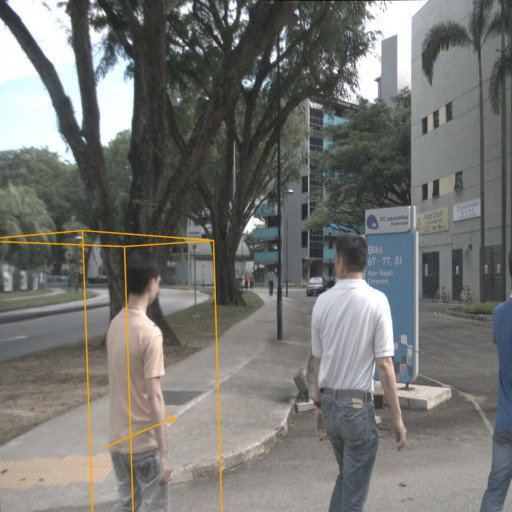} &
\includegraphics[width=0.18\textwidth]{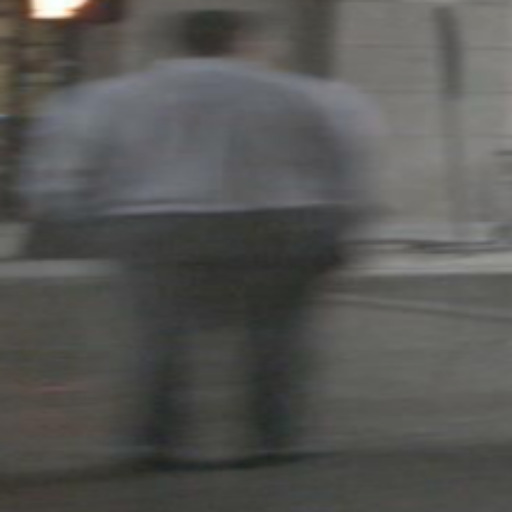} &
\includegraphics[width=0.18\textwidth]{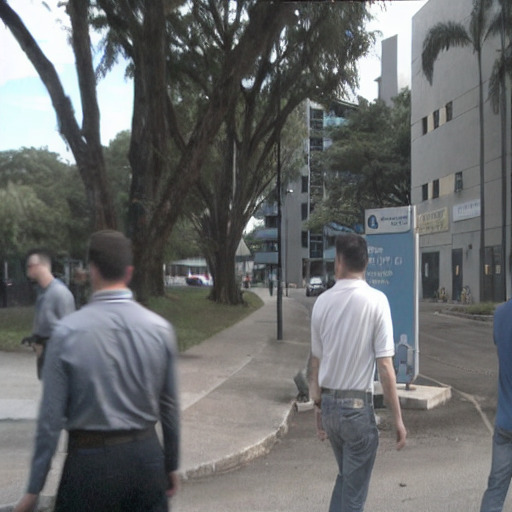} &
\includegraphics[width=0.18\textwidth]{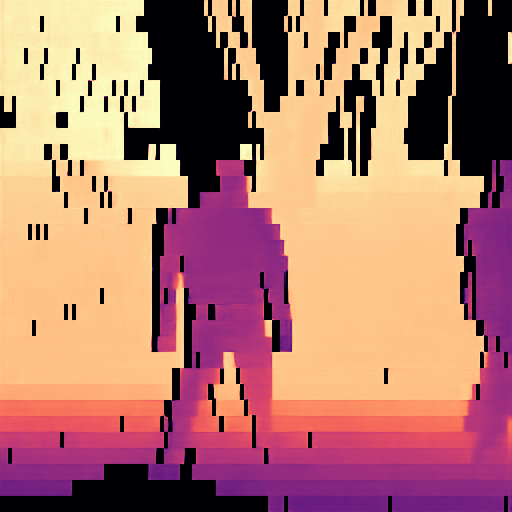} &
\includegraphics[width=0.18\textwidth]{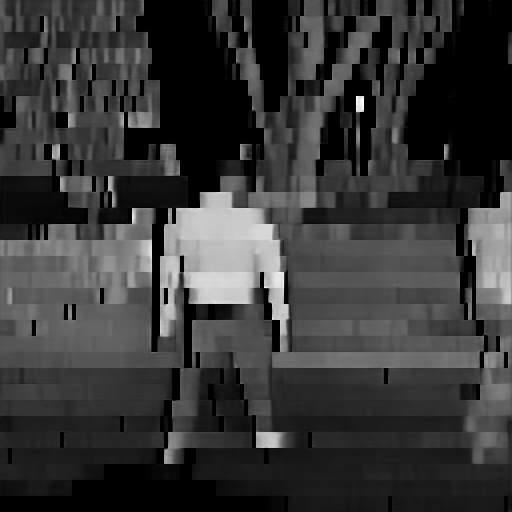} \\
\includegraphics[width=0.18\textwidth]{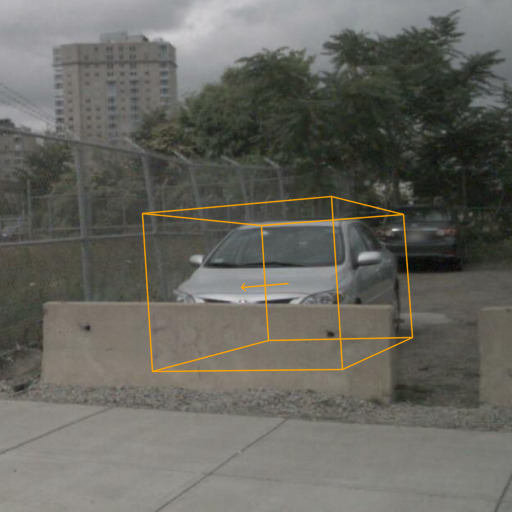} &
\includegraphics[width=0.18\textwidth]{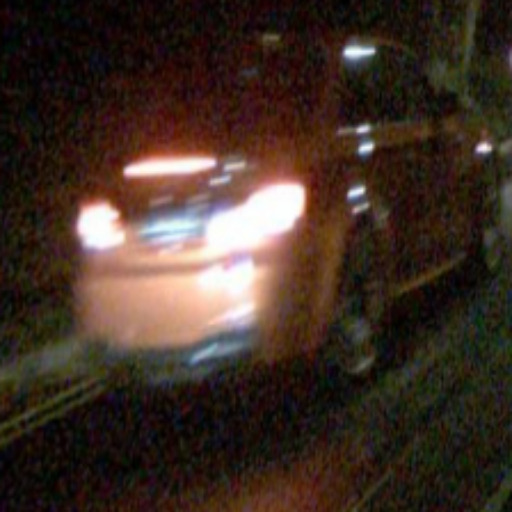} &
\includegraphics[width=0.18\textwidth]{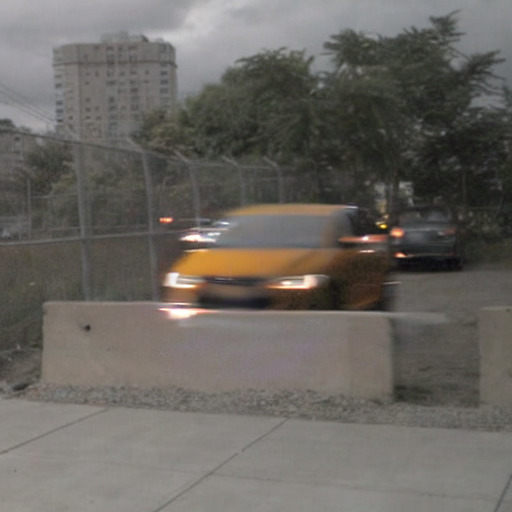} &
\includegraphics[width=0.18\textwidth]{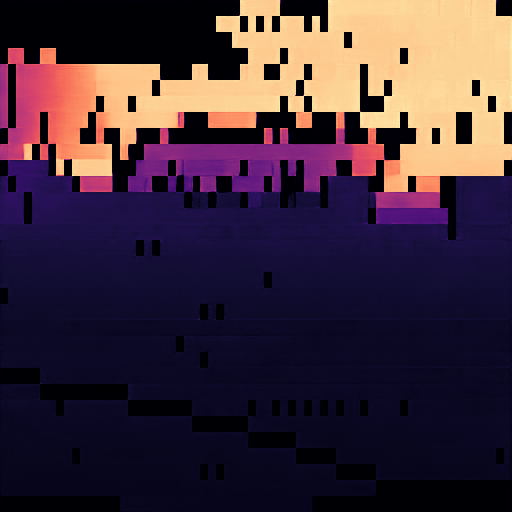} &
\includegraphics[width=0.18\textwidth]{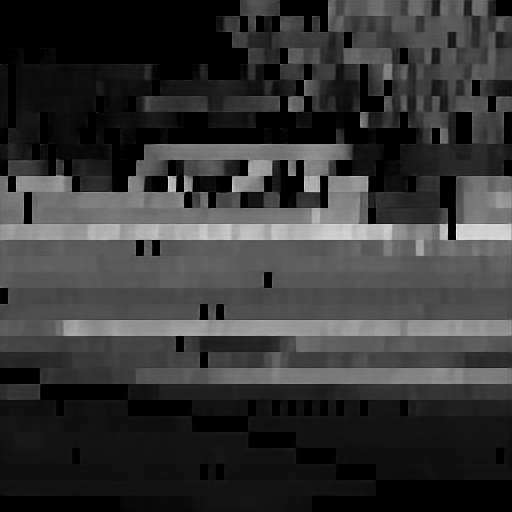} \\
\includegraphics[width=0.18\textwidth]{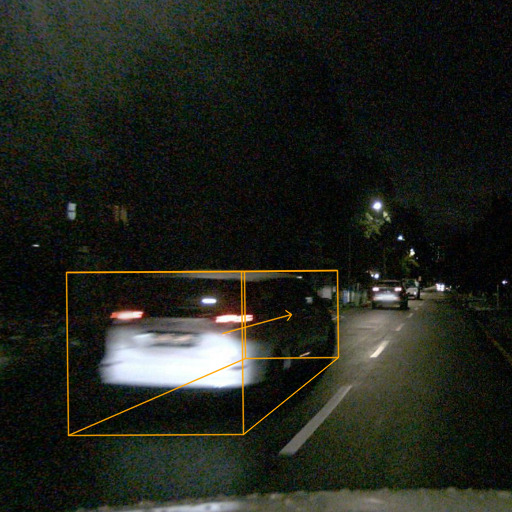} &
\includegraphics[width=0.18\textwidth]{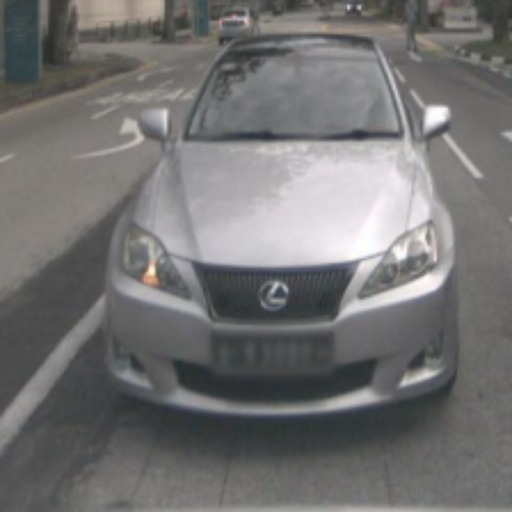} &
\includegraphics[width=0.18\textwidth]{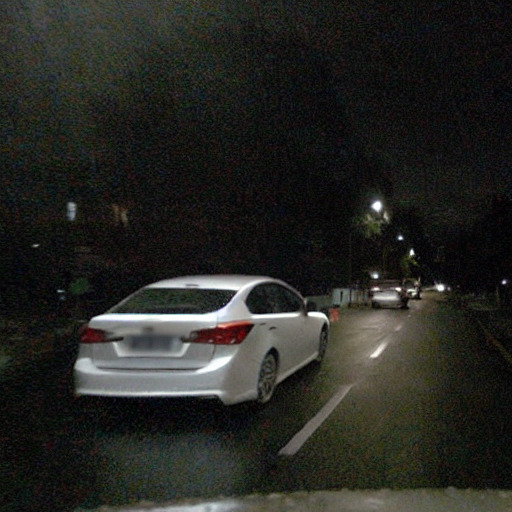} &
\includegraphics[width=0.18\textwidth]{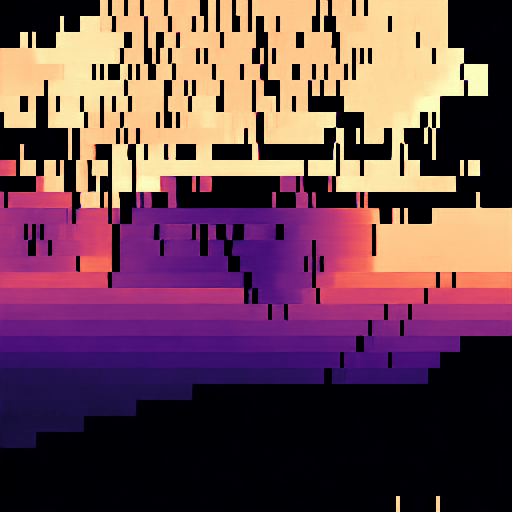} &
\includegraphics[width=0.18\textwidth]{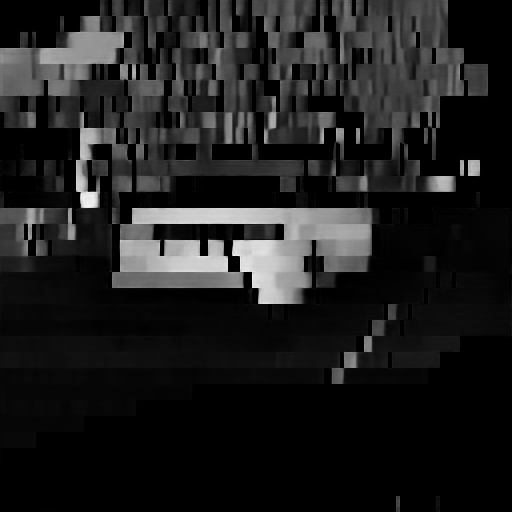} \\
\end{tabular}
\end{minipage}
\caption[Object replacement results using hard references.]{Object replacement results using hard references (different weather conditions or time of day, occlusions, etc.). MObI can successfully insert these hard references in the target bounding box. However, the quality in these examples is unsatisfactory. From top to bottom: a new pedestrian is hallucinated, the inserted car shows too much motion blur, and the lightning is not coherent with the overall scene.}
\label{fig:inpainting-hard-suppl}
\end{figure*}

\begin{figure*}[htbp]
    \centering
    \begin{minipage}{0.9\textwidth}
    \begin{tabularx}{0.96\columnwidth}{@{}>{\centering\arraybackslash}X>{\centering\arraybackslash}X>{\centering\arraybackslash}X>{\centering\arraybackslash}X@{}}
    ~Context & Reference & Edited & Original \\
    \end{tabularx}
    \includegraphics[width=\columnwidth]{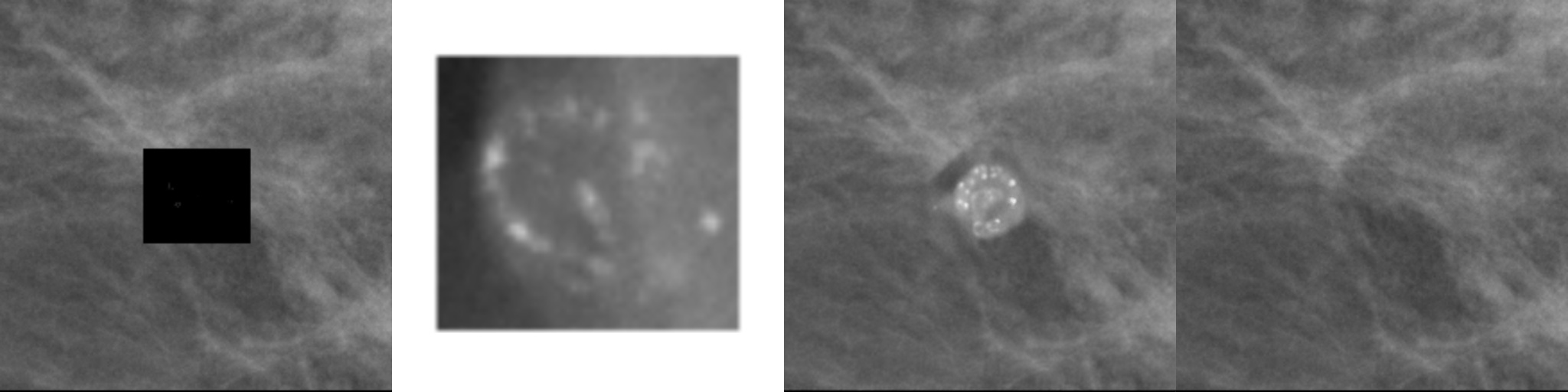} \\
    \includegraphics[width=\columnwidth]{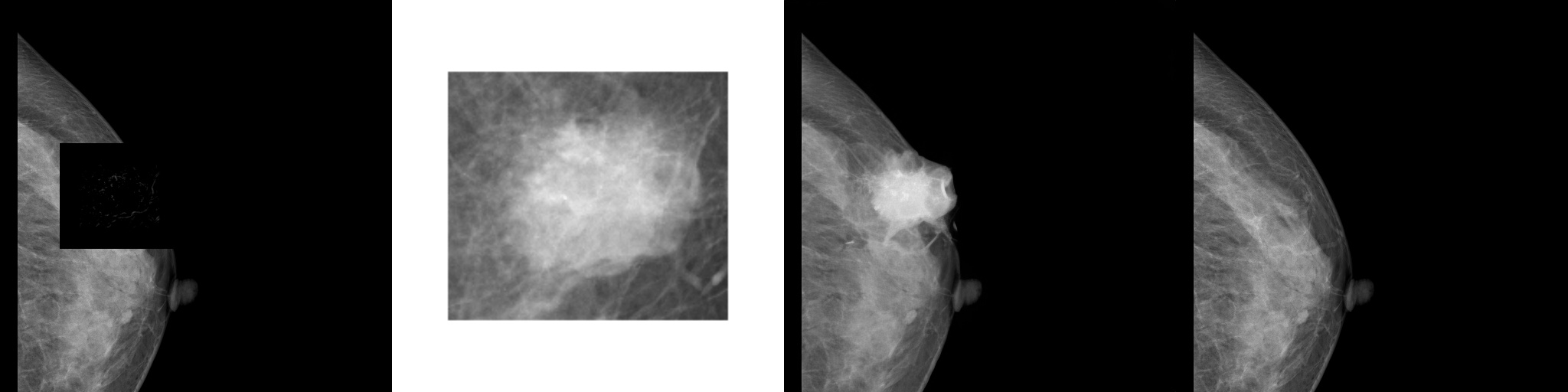} \\
    \includegraphics[width=\columnwidth]{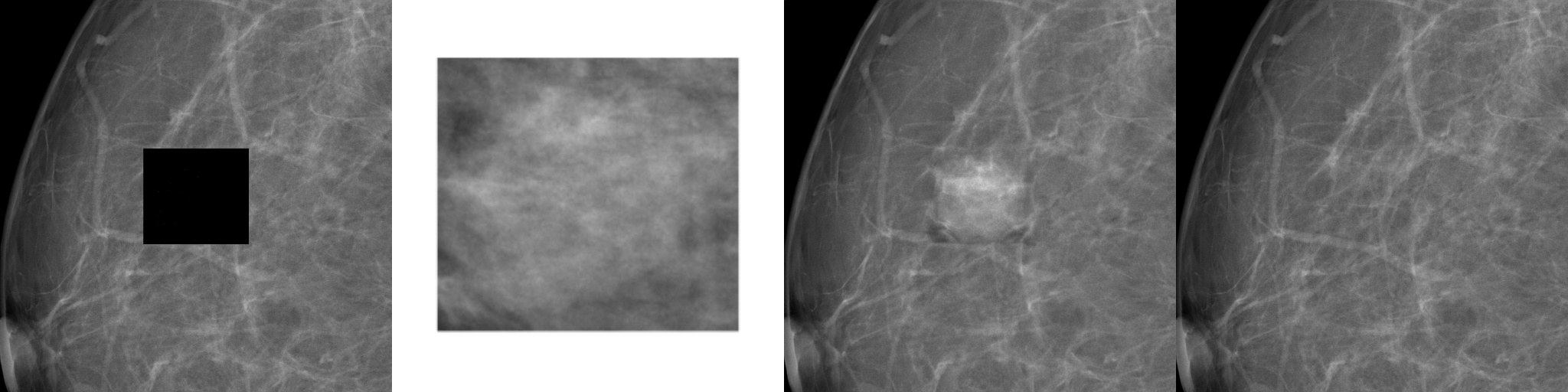} \\
    \includegraphics[width=\columnwidth]{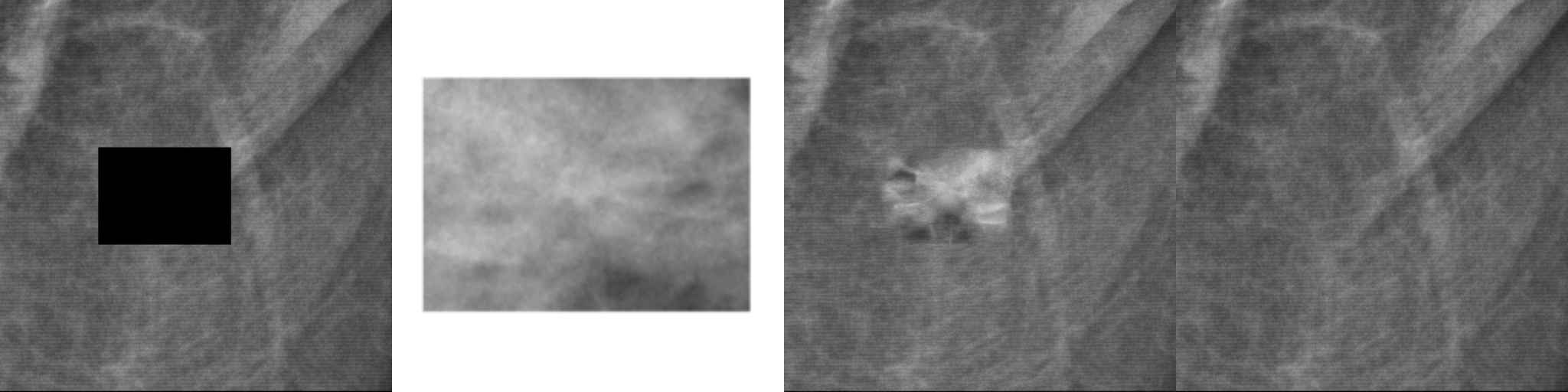} \\
    \end{minipage}
    \caption[AnydoorMed failure cases.]{Anomaly insertion results. \textbf{AnydoorMed} inserts the reference anomalies (second column), guided by the context and high-frequency collage (first column), into the healthy mammography scan (fourth column), producing the composited result (third column). However, these examples illustrate failure cases. From top to bottom: the inserted anomaly does not closely replicate the microcalcifications from the reference image (which may be undesirable in certain scenarios); the inpainting produces an anatomically implausible result due to the bounding box being placed primarily outside the breast tissue; and finally, the last two examples exhibit visible copy-and-paste artefacts.
    }
    \label{fig:failure med}
\end{figure*}

\subsection{MObI limitations}

While MObI can generate coherent objects across viewpoints, as demonstrated in \autoref{fig:suppl:rotation_results}, several limitations affect its robustness and generalisability. One key issue arises when the inserted object's location is in stark semantic conflict with the surrounding scene context. For instance, placing a truck on a pedestrian pavement might result in implausible completions. This limits the method’s utility for generating deeply out-of-distribution (OOD) counterfactuals, particularly valuable for testing autonomous vehicles.

Another limitation stems from dataset bias. Since the model is fine-tuned on a relatively narrow domain, it may occasionally override the bounding box conditioning if the scene context imposes a stronger prior. For example, it can favour common object placements encountered during training (such as when the lane could dictate the car's orientation, not the bounding box conditioning). This rare behaviour reveals the influence of implicit priors inherited from the training distribution, which may hinder controlled counterfactual generation in unexpected scenarios.

Additionally, the current conditioning mechanism relies solely on a single bounding box. In complex scenes, this can lead to unintended alterations of background objects, particularly when there is significant spatial overlap with the edit mask. This limitation could be alleviated through more accurate instance-level segmentation, which is not readily available in datasets such as nuScenes~\cite{caesar2020nuscenes}. This highlights the need for high-quality pseudo-labelling or enriched annotations.

The model also struggles when provided with completely open-world reference images. In such cases, the diffusion process tends to revert to in-domain representations. For instance, a horse may be transformed into a brown car. This behaviour, illustrated in \autoref{fig:suppl:ood}, highlights the difficulty of extending the method to a truly open-world setting.

\subsection{AnydoorMed limitations}

\textbf{AnydoorMed} faces several challenges when applied to anomaly inpainting in the medical domain. Firstly, the model does not always accurately preserve the structure and visual characteristics of the reference anomaly. This can lead to deviations in shape, intensity, or scale. While this may be tolerable in some use cases, it reduces the fidelity of counterfactual examples for tasks that require high clinical precision.

Secondly, artefacts arising from a copy-and-paste-like generation process can sometimes be observed in the output, particularly in complex tissue regions. These artefacts may degrade visual realism and, if used for training, could introduce shortcut opportunities for machine learning models to exploit non-semantic cues.

A critical limitation lies in the placement of the bounding box for insertion. If the bounding box extends beyond anatomically valid regions, such as outside breast tissue, the resulting counterfactual may be anatomically implausible, as illustrated in \autoref{fig:failure med}. In the medical domain, anatomical accuracy is paramount. Such implausible samples could degrade the training of diagnostic systems.

Moreover, the current approach lacks clinical interpretability and fine-grained control over lesion attributes such as type, severity, or BI-RADS category. This restricts the utility of \textbf{AnydoorMed} for generating realistic, targeted counterfactuals tailored to specific diagnostic tasks.

Finally, as with MObI, \textbf{AnydoorMed} is trained on a narrow distribution and may not generalise to other imaging modalities or anatomical regions. This highlights the importance of investigating multi-domain extensions that can handle a broader range of medical imaging tasks beyond mammography.

\section{Future work}

\begin{figure*}[t]
\centering
\begin{minipage}{0.48\textwidth}
\centering
\setlength{\tabcolsep}{1pt} 
\renewcommand{\arraystretch}{0.8} 
\footnotesize
\begin{tabular}{ccc}
Original & Reference & Insertion (C) \\
\includegraphics[width=0.33\linewidth]{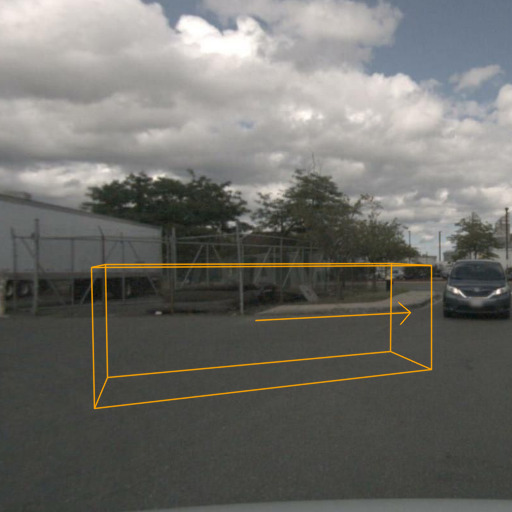} &
\includegraphics[width=0.33\linewidth]{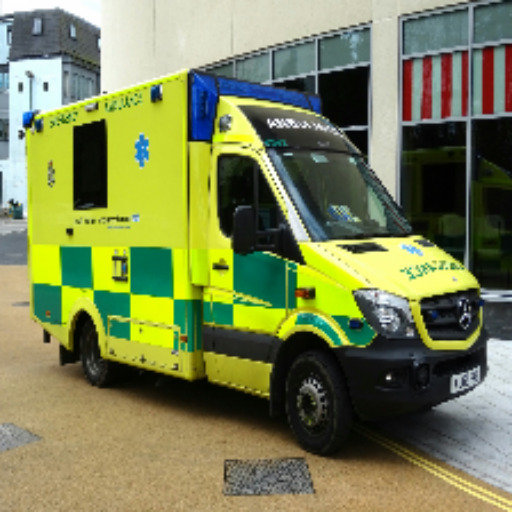} &
\includegraphics[width=0.33\linewidth]{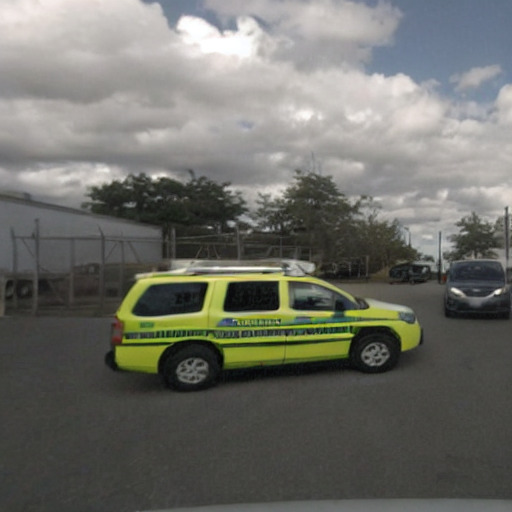} \\
\includegraphics[width=0.33\linewidth]{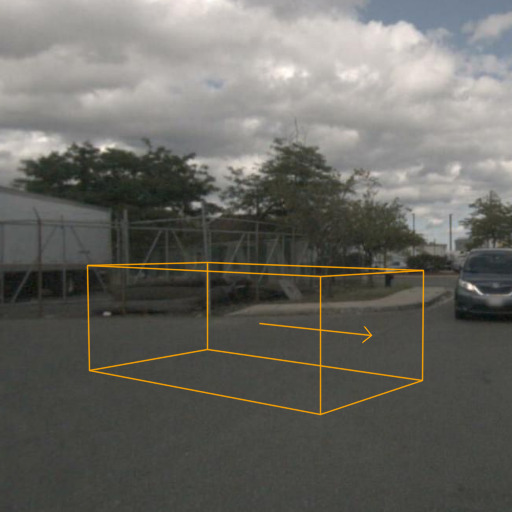} &
\includegraphics[width=0.33\linewidth]{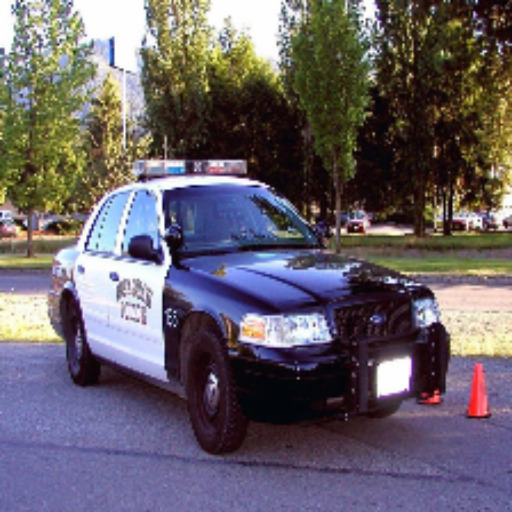} &
\includegraphics[width=0.33\linewidth]{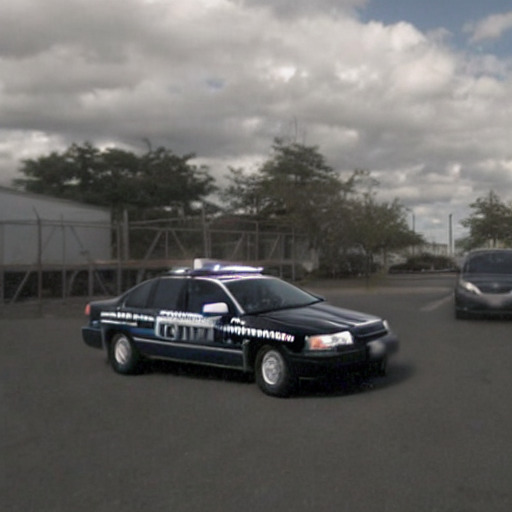} \\
\includegraphics[width=0.33\linewidth]{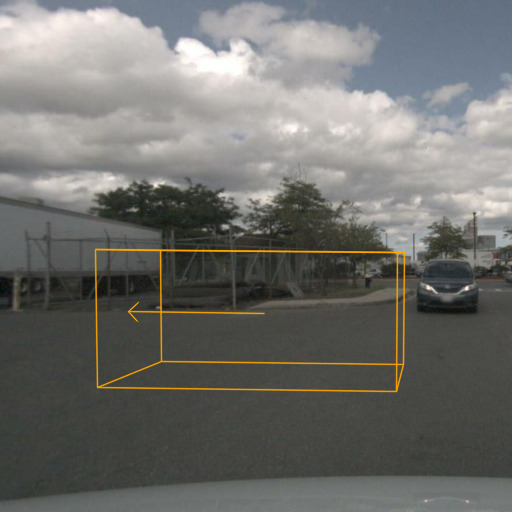} &
\includegraphics[width=0.33\linewidth]{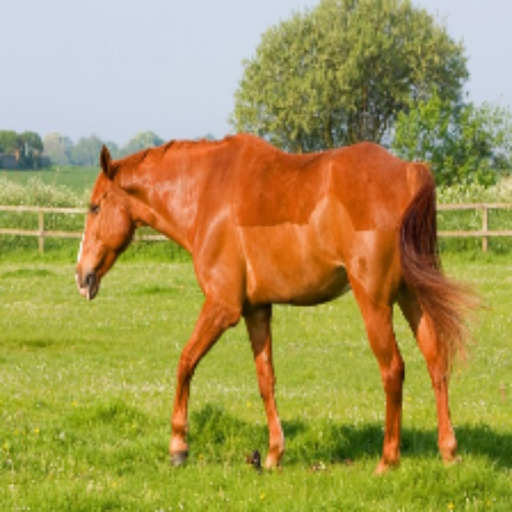} &
\includegraphics[width=0.33\linewidth]{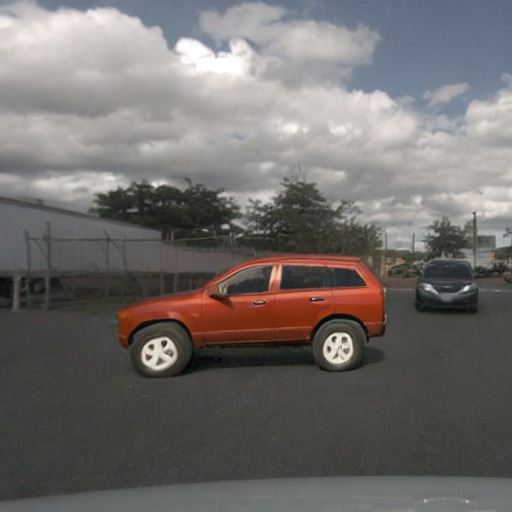} \\
\includegraphics[width=0.33\linewidth]{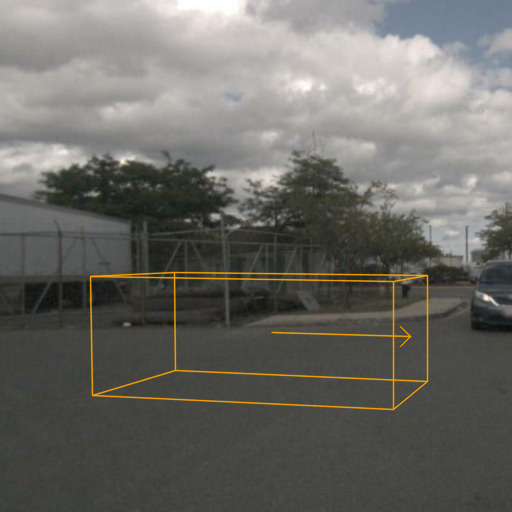} &
\includegraphics[width=0.33\linewidth]{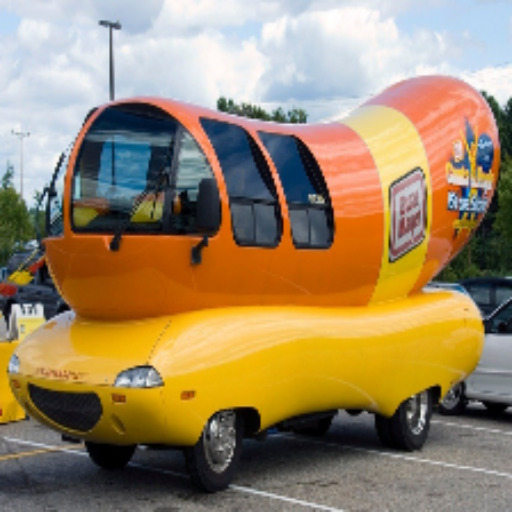} &
\includegraphics[width=0.33\linewidth]{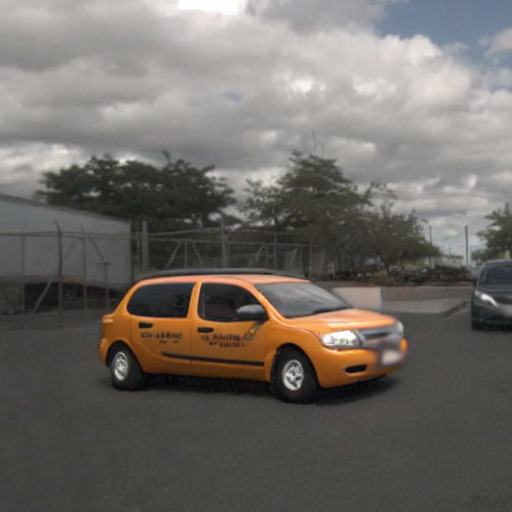}
\end{tabular}
\subcaption{}
\label{fig:suppl:ood_a}
\end{minipage}%
\hfill
\begin{minipage}{0.48\textwidth}
\centering
\setlength{\tabcolsep}{1pt} 
\renewcommand{\arraystretch}{0.8} 
\footnotesize
\begin{tabular}{ccc}
Original & Reference & Replacement (C) \\
\includegraphics[width=0.33\linewidth]{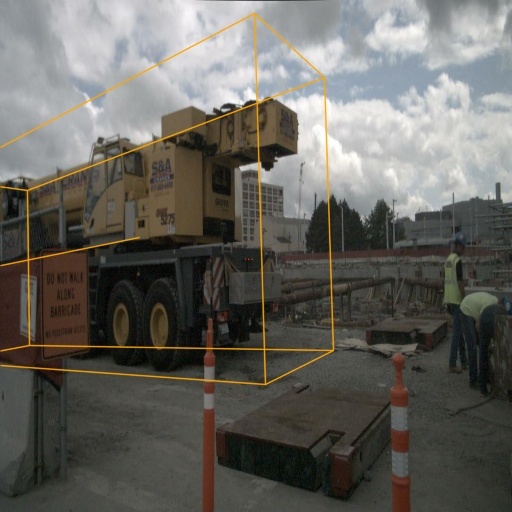} &
\includegraphics[width=0.33\linewidth]{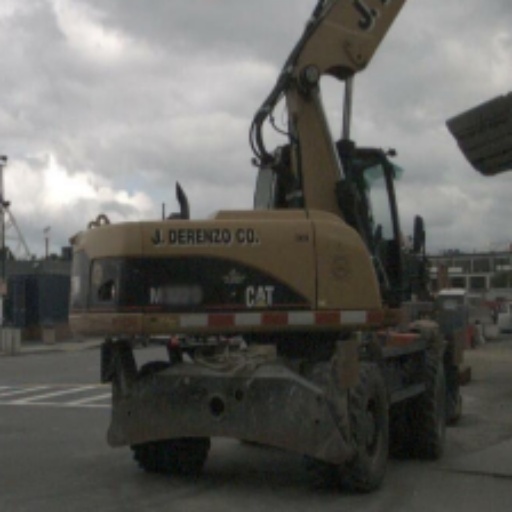} &
\includegraphics[width=0.33\linewidth]{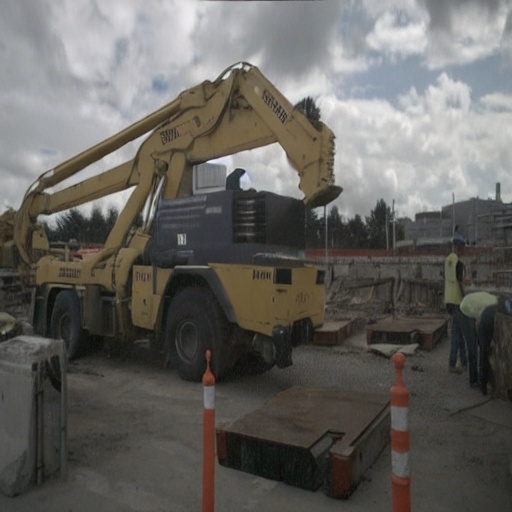} \\
\includegraphics[width=0.33\linewidth]{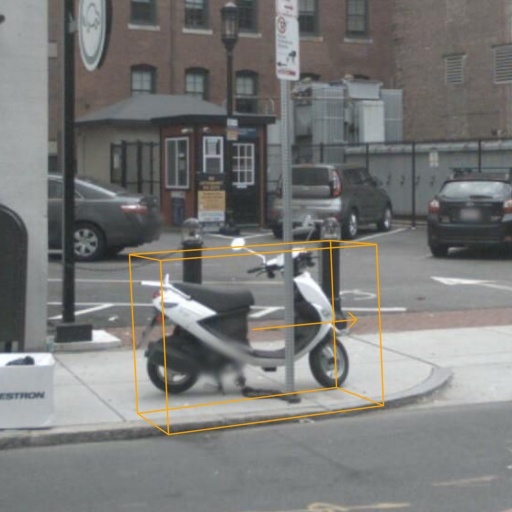} &
\includegraphics[width=0.33\linewidth]{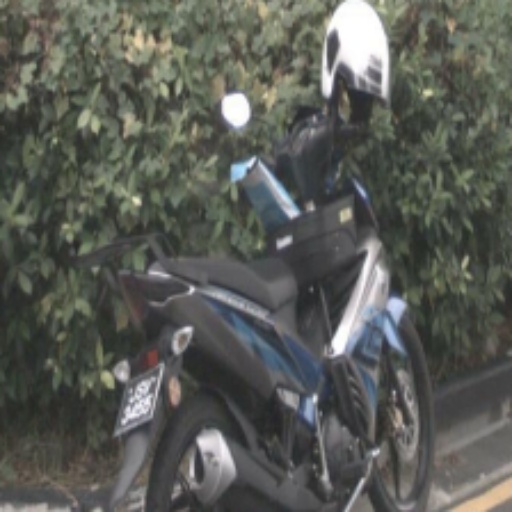} &
\includegraphics[width=0.33\linewidth]{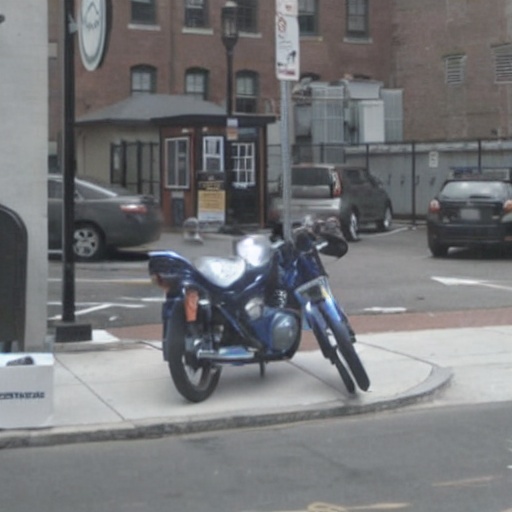} \\
\includegraphics[width=0.33\linewidth]{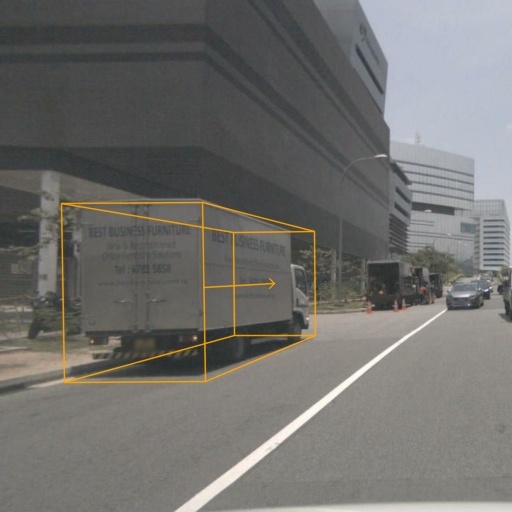} &
\includegraphics[width=0.33\linewidth]{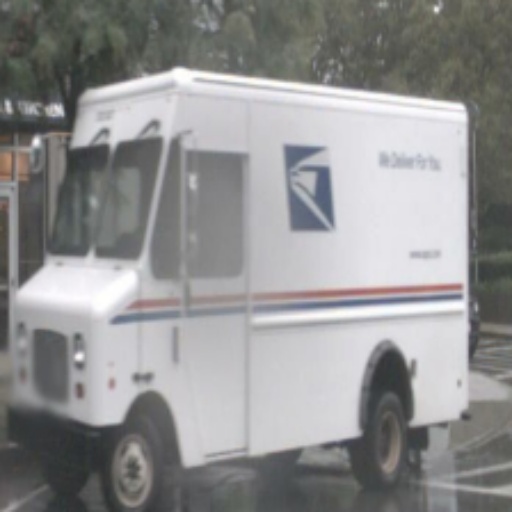} &
\includegraphics[width=0.33\linewidth]{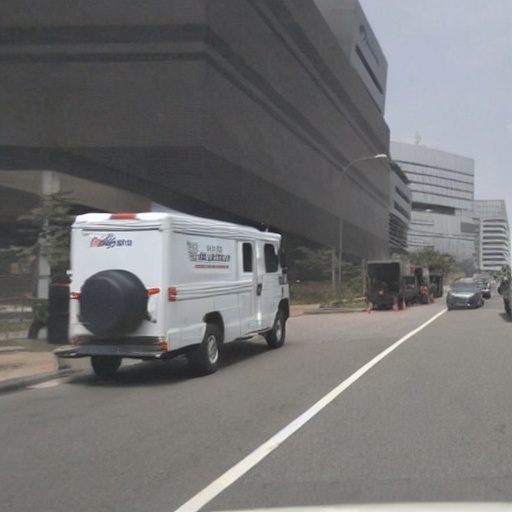} \\
\includegraphics[width=0.33\linewidth]{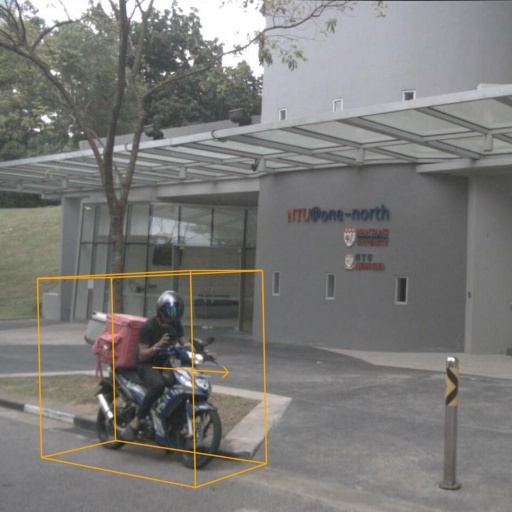} &
\includegraphics[width=0.33\linewidth]{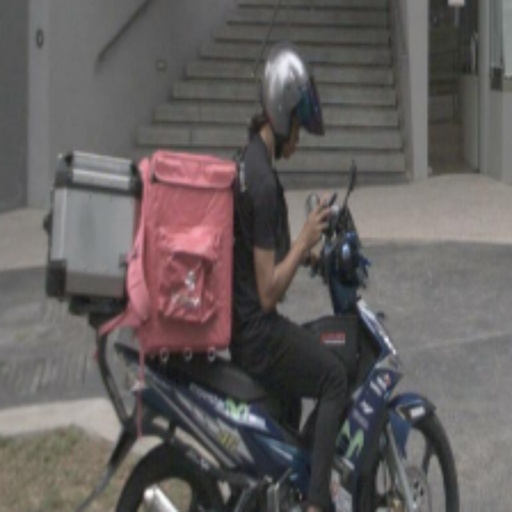} &
\includegraphics[width=0.33\linewidth]{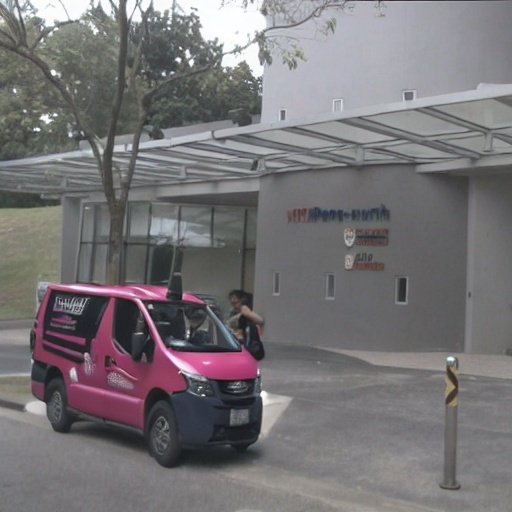}
\end{tabular}
\subcaption{}
\end{minipage}
\caption[Object insertion and replacement with out-of-domain and open-world references for MObI.]{Object insertion and replacement with out-of-domain and open-world references for MObI trained only on the pedestrian and car classes of nuScenes. (a) In the first two examples (top left), MObI inserts the correct object successfully but loses fine appearance details. In the last two examples (bottom left), MObI inserts a car instead of the object depicted by the reference. (b) In the first three examples (top right), MObI correctly replaces objects from classes outside of its training set, yet quality degrades. In the last example (bottom right), the model replaces the motorcycle with a small vehicle, reverting to a familiar class. Note that all examples have been correctly inserted in the target bounding box with the correct orientation.}
\label{fig:suppl:ood}
\end{figure*}

\subsection{MObI: future directions}
\label{mobi:future}

A promising avenue for future research lies in explicitly enforcing consistency across different viewpoints or time steps. This could be achieved by extending the cross-modal attention mechanism described in \autoref{sec:method:multimodal generation} to span multiple time steps, as explored in works such as~\cite{gao2023magicdrive, li2023drivingdiffusion, drivescape, wen2023panacea, lu2025wovogen}. Such an approach would maintain focus on a specific object throughout a sequence, ensuring temporal and geometric coherence in dynamic or multi-view scenes.

Another potential direction involves adapting the model to a broader, open-world setting. This could be accomplished by training on a diverse set of 3D object detection datasets, as demonstrated by~\cite{minderer2022simple}. Doing so would improve the model's capacity to handle a wider range of object appearances, placements, and environmental conditions.

Additionally, rather than conditioning solely on a single bounding box, the method could be extended to support full-scene context conditioning. This would involve incorporating information from all objects present in the scene, similar to strategies used in~\cite{gao2023magicdrive}. Such holistic conditioning could improve placement accuracy and reduce unintended interference with background elements.

Lastly, the development of evaluation metrics that measure cross-modal consistency and realism holistically remains an open challenge. Tailored metrics could better reflect human perception of multimodal scene plausibility and support more rigorous benchmarking of generative models used in safety-critical applications.

Despite current limitations, the approach presented here establishes a foundation for realistic and controllable multimodal scene editing. Such a capability is particularly valuable in autonomous driving, where synthetic data can help explore edge cases and improve the robustness of perception systems.

\subsection{AnydoorMed: future directions}

For \textbf{AnydoorMed}, one immediate direction involves extending current realism metrics to include downstream task performance, particularly in object detection and classification. Specifically, counterfactual anomalies sampled from underrepresented regions of the distribution could be used to augment training data and thereby improve the robustness of medical anomaly detectors.

Another promising avenue is applying the proposed method to other medical imaging modalities beyond mammography. Modalities such as magnetic resonance imaging (MRI), computed tomography (CT) or ultrasound scans present unique challenges regarding anatomy, resolution, and appearance. Testing the method across these domains would enable a more comprehensive assessment of its generalisability and adaptability.

Further research could extend the method to 3D volumetric inpainting, where entire slices or volumes of anatomical structures must be synthesised. This would require spatially consistent editing across multiple planes, using a similar mechanism for time consistency as described in~\autoref{mobi:future}. 3D inpainting would be particularly useful for longitudinal studies, surgical planning, and data augmentation in volumetric diagnostic tasks.

Improved anatomical priors and region-specific guidance mechanisms could also be incorporated to reduce the risk of generating implausible insertions. For example, organ-specific segmentation or landmark localisation could constrain the inpainting process to clinically valid regions.

Finally, interpretability and clinical usefulness remain underexplored. Collaborations with radiologists could develop human-in-the-loop editing and teaching workflows where the reference-guided generation is adapted in real-time, potentially aiding education, differential diagnosis, or adversarial testing of medical AI systems.

These future directions offer a path towards reliable and clinically relevant synthetic data generation tools for the medical domain.

\section{Concluding remarks}

This work introduces \textbf{MObI} and \textbf{AnydoorMed}, two novel methods that explore the potential of reference-guided inpainting to generate realistic counterfactuals across distinct domains. Despite certain limitations, both approaches demonstrate strong performance and adaptability, contributing a unique perspective on how foundation models can be steered for task-specific editing in safety-critical settings.

\textbf{MObI} enables controllable, semantically consistent object insertions across camera and lidar modalities, which is particularly valuable for generating diverse training or evaluation scenarios in autonomous driving. Meanwhile, \textbf{AnydoorMed} offers a practical solution for synthesising plausible anomalies in medical images, providing a valuable tool for developing and evaluating anomaly detection systems.

By adapting latent diffusion models to different perceptual domains with minimal supervision, this project proposes a flexible and scalable framework for counterfactual generation. It opens promising directions for future research in synthetic data generation, robustness testing, and designing AI systems better equipped to handle rare or out-of-distribution events.

\printbibliography[title={References},heading=bibintoc] 

\begin{uomappendix}

\section{Reproducibility statement}
\label{sec:suppl:method}
To promote transparency and facilitate further research, all code, trained models, and instructions necessary to reproduce the experiments will be released at the time of publication. These resources include scripts for data preprocessing, model training, evaluation, and configuration files to replicate the results presented in this paper.


The repositories will be made publicly available at:
\begin{itemize}
    \item \textbf{MObI:} \url{https://github.com/alexbuburuzan/MObI}
    \item \textbf{AnydoorMed:} \url{https://github.com/alexbuburuzan/AnydoorMed}
\end{itemize}

Comprehensive documentation will be provided to ensure the methods can be readily understood and applied by the broader research community.

\newpage
\section{Ethics statement and risk assessment}

The methods proposed in this work, {MObI} and {AnydoorMed}, are designed to advance the state of controllable counterfactual generation through reference-guided inpainting across diverse modalities, with particular applications in autonomous driving and medical imaging. While these technologies offer significant potential for improving robustness and safety in machine learning systems, they also raise important ethical considerations.

\textbf{Synthetic data and misuse.} The generation of synthetic content, if misused, can lead to the fabrication of misleading or harmful visual material. In the context of autonomous driving, unintended consequences during model training or evaluation could be caused by incorrect or manipulated data. Similarly, in medical imaging, the synthetic creation of anomalies must be handled with care to ensure that practitioners are not misled and that patient trust is not compromised. The use of the proposed methods in clinical decision-making workflows is explicitly cautioned against unless rigorous validation and expert oversight are provided.

\textbf{Bias and fairness.} As with any model trained on real-world data, the proposed methods may be affected by biases present in the underlying datasets. For example, imbalances in the nuScenes~\cite{caesar2020nuscenes} and VinDr-Mammo~\cite{nguyen2023vindr} datasets could impact the diversity of generated outputs. It is acknowledged that synthetic data may unintentionally reinforce biases unless appropriate mitigation strategies, such as dataset balancing or bias-aware training, are applied.

\textbf{Privacy and data use.} All datasets used in this work are publicly available and appropriately licensed for academic research. No personally identifiable information is included in the datasets, and the authors collected no data. For medical imaging data, care was taken to ensure the use of anonymised images where applicable.

\textbf{Responsible deployment.} The proposed techniques should be used to augment, not replace, existing methods of validation and evaluation in safety-critical systems. Responsible deployment requires collaboration with domain experts and adherence to regulatory standards, particularly in the healthcare and transport sectors.

It is hoped that, by ensuring transparency in the methodology and openly sharing the findings, a broader conversation will be supported regarding the ethical use of generative models in real-world applications. Continued research is encouraged to improve interpretability, fairness, and accountability in generating synthetic data.

\newpage
\section{Planning and achievements}

The technical work presented in Chapter 2 was conducted as part of a research internship at FiveAI, where I developed \textbf{MObI}, a multimodal diffusion-based framework for reference-guided object insertion in autonomous driving scenes. During the first ten weeks of Semester 1, I dedicated time to submitting the paper to CVPR.

\subsection*{Author Contributions}

\begin{itemize}
    \item \textbf{Alexandru Buburuzan:} Trained \textbf{MObI}, implemented the full training pipeline, data processing routines, and realism metrics; led the research on synthetic data generation and latent diffusion models; was the primary contributor to paper writing.
    \item \textbf{Anuj Sharma:} Contributed to downstream evaluations of \textbf{MObI} with an object detector and provided feedback on the manuscript.
    \item \textbf{John Redford:} Provided advisory support and feedback on the paper.
    \item \textbf{Puneet K. Dokania:} Advised during the ideation phase and contributed feedback throughout the project.
    \item \textbf{Romain Mueller:} Co-led the paper writing, assisted with downstream evaluations, and contributed to ideation; Main supervisor for the paper.
\end{itemize}

In addition, the first twelve weeks were used to revise the theory behind diffusion models and set up the foundational components for the second project, \textbf{AnydoorMed}. In collaboration with Prof. Tim Cootes, mammography was selected as the target domain. During this time, I conducted an in-depth literature review, initiated the AnydoorMed repository, and laid the groundwork for domain-specific model adaptation.

The topic of this dissertation was self-proposed and constitutes the foundation of my future PhD work.

\begin{table}[h]
\centering
\begin{tabular}{|p{0.22\textwidth}|p{0.37\textwidth}|p{0.37\textwidth}|}
\hline
\textbf{Week(s)} & \textbf{Planned activity} & \textbf{Actual outcome} \\
\hline
1–10 (Sem 1) & Polishing MObI paper & Paper submitted to CVPR \\
\hline
1–12 (Sem 1) & Theory revision, ideation for second project & Revised diffusion model theory, selected mammography domain, conducted literature review, initiated AnydoorMed repository \\
\hline
1 (Sem 2) & Rebuttal of MObI & Rebuttal prepared answering all of the reviewers' concerns \\
\hline
2–5  (Sem 2) & Finalise AnydoorMed pipeline and VAE fine-tuning & Pipeline completed; fine-tuned VAE and trained AnydoorMed on mammography scans \\
\hline
6 (Sem 2) & Conduct ablations and implement realism metrics & Ran ablations and finalised the realism evaluation table \\
\hline
6 (Sem 2) & Resubmit MObI in case of rejection & paper submitted to CVPR Workshop on Data-Driven Autonomous Driving Simulations and later accepted with very good reviews.\\
\hline
7–8 (Sem 2) & Figure generation & Created all visualisations and supplementary figure panels \\
\hline
9–11 (Sem 2) & Writing and consolidation & Integrated results, analysis, and narrative into final document \\
\hline
\end{tabular}
\caption*{Comparison of planned vs. actual progress over the course of the project.}
\end{table}

\subsection*{Summary of Achievements}
\begin{itemize}
    \item Successfully adapted foundation diffusion models for image inpainting to two distinct domains: autonomous driving and medical imaging.
    \item Developed \textbf{MObI}, a multimodal diffusion-based framework for reference-guided object insertion in driving scenes.
    \item Designed and implemented \textbf{AnydoorMed}, extending reference-guided inpainting methods to the medical domain, specifically to mammograms.
    \item Implemented a comprehensive suite of realism metrics to quantitatively evaluate the medical replacement and reinsertion.
    \item Extended the realism evaluation framework to the medical domain, demonstrating the cross-domain applicability of the proposed approach.
\end{itemize}

\subsection*{Additional Milestones}
\begin{itemize}
    \item Conducted a detailed realism evaluation for the medical insertion setting, which was more difficult than the reinsertion and replacement setting.
    \item Acceptance of \textbf{MObI} in the Proceedings of the CVPR Workshop on Data-Driven Autonomous Driving Simulations, following a successful submission and peer-review process.
\end{itemize}

\end{uomappendix}

\end{document}